\documentclass{article}



\usepackage[main, final]{neurips_2025}



\usepackage[utf8]{inputenc} 
\usepackage[T1]{fontenc}    
\usepackage{hyperref}       
\usepackage{url}            
\usepackage{booktabs}       
\usepackage{amsfonts}       
\usepackage{nicefrac}       
\usepackage{microtype}      
\usepackage{xcolor}         
\usepackage{tikz-cd}
\usetikzlibrary{decorations.pathreplacing}

\usepackage{amsmath,amsfonts,bm}









\def\eqref#1{equation~\ref{#1}}









\def\1{\bm{1}}










\DeclareMathAlphabet{\mathsfit}{\encodingdefault}{\sfdefault}{m}{sl}
\SetMathAlphabet{\mathsfit}{bold}{\encodingdefault}{\sfdefault}{bx}{n}


\def\gS{{\mathcal{S}}}













\usepackage{graphicx} 
\RequirePackage{graphicx}
\RequirePackage{caption}

\usepackage[utf8]{inputenc}
\usepackage[T1]{fontenc} 
\usepackage{hyperref}   
\usepackage{url}         
\usepackage{booktabs}  
\usepackage{amsfonts}      
\usepackage{nicefrac}       
\usepackage{microtype}      
\usepackage{xcolor}

\usepackage{amsfonts}
\usepackage{amssymb}
\usepackage{amsthm}
\usepackage{cuted}
\usepackage{url}


    \newcommand{\commentOUT}[1]{}
    


\pagestyle{plain}  
\newcommand{\pr}{\mathbb{P}}


 
\newcommand{\bb}{{\bf b}}               

\newcommand{\bg}{{\bf g}}

\newcommand{\bp}{{\bf p}}








   


\newcommand{\bbp}{\mathfrak{p}}


\newcommand{\G}{\mathsf{G}}
\let\bb\mathbb
\newcommand{\EX}{\mathbb{E}_{\{x_i,y_i\}_{i=1}^{n+1},p_{01},p_{11}}}

\newcommand{\emphh}[1]{\textbf{\emph{#1}}}
\def\bbp{\mathbb P}
\def\bp{\boldsymbol p}
\def\bP{\mathsf P}

\def\bg{\mathsf g}
\def\proj{ \psi }
\def\TF{ \mathtt{TF} }


\newcommand{\edit}[1]{{\color{black}{#1}}}


\usepackage{amsmath}
\usepackage{amssymb}
\usepackage{mathtools}
\usepackage{amsthm}
\theoremstyle{plain}
\newtheorem{theorem}{Theorem}[section]

\newtheorem{lemma}[theorem]{Lemma}
\newtheorem{problem}[theorem]{Problem}

\theoremstyle{definition}

\theoremstyle{remark}
\newtheorem{remark}[theorem]{Remark}
\newtheorem{example}[theorem]{Example}

\usepackage{color}

\usepackage{subcaption}
\usepackage{wrapfig}
\usepackage{multirow}
\usepackage{float}
\usepackage{paralist}
\usepackage{enumitem}
\usepackage{comment}
\usepackage{titlesec}
\usepackage{titletoc}
\setlist[enumerate]{nosep, leftmargin=*}
\newcommand{\superscript}[1]{\textnormal{\textsuperscript{#1}}}


\title{Optimality and NP-Hardness of Transformers in Learning  Markovian Dynamical Functions}

%

\author{%
  Yanna Ding\superscript{1}\thanks{Yanna Ding, Songtao Lu, Yingdong Lu, and Tomasz Nowicki contributed equally to this work.}, 
  Songtao Lu\superscript{2}\ , 
  Yingdong Lu\superscript{3}, 
  Tomasz Nowicki\superscript{3}, 
  Jianxi Gao\superscript{1}\\
  \superscript{1}  Department of Computer Science,
  Rensselaer Polytechnic Institute\\ 
  \superscript{2}  Department of Computer Science and Engineering,
  The Chinese University of Hong Kong\\ 
  \superscript{3}  Mathematics of Computation, IBM Research \\ 
  \texttt{\{dingy6,gaoj8\}@rpi.edu}, 
  \texttt{stlu@cse.cuhk.edu.hk},\\
  \texttt{\{yingdong,tnowicki\}@us.ibm.com} \\
}

\begin{document}

\maketitle

\begin{abstract}
Transformer architectures can solve unseen tasks based on input-output pairs in a given prompt due to in-context learning (ICL). Existing theoretical studies on ICL have mainly focused on linear regression tasks, often with i.i.d. inputs. To understand how transformers express ICL when modeling dynamics-driven functions, we investigate Markovian function learning through a structured ICL setup, where we characterize the loss landscape to reveal underlying optimization behaviors. Specifically, we (1)   provide the closed-form expression of the global minimizer (in an enlarged parameter space) for a single-layer linear self-attention (LSA) model; (2)  prove that recovering transformer parameters that realize the optimal solution is NP-hard in general, revealing a fundamental limitation of one-layer LSA in representing structured dynamical functions; and (3)  supply a novel interpretation of a multilayer LSA as performing preconditioned gradient descent to optimize multiple objectives beyond the square loss. These theoretical results are numerically validated using simplified transformers.
\end{abstract}

\section{Introduction\label{sec:intro}}
Transformer-based language models have demonstrated remarkable in-context learning (ICL) capabilities, predicting outputs for unseen inputs using only examples provided in the prompt, without parameter updates~~\citep{brown2020language,rae2021scaling,garg2022can,liu2023pre,team2023gemini,achiam2023gpt,touvron2023llama}. This phenomenon has motivated a growing body of theoretical work aiming to understand the mechanisms underlying ICL. Much of this work focuses on regression tasks with i.i.d. Gaussian inputs~\citep{von2023transformers,akyurek2022learning,DBLP:conf/icml/GiannouRS0LP23,li2023transformers,DBLP:conf/acl/DaiS0HMSW23,DBLP:conf/emnlp/ZhaoP0A23,bai2024transformers,zhang2023trained,huang2023context,DBLP:journals/corr/abs-2405-02462,li2023transformers,DBLP:conf/icml/Li0LCC24,DBLP:conf/nips/AhnCDS23,mahankali2023one,li2024fine}, showing that transformers can emulate classical algorithms like gradient descent. Others have begun to explore the limitations of transformer expressivity, especially under structured inputs~\cite{DBLP:journals/corr/abs-2402-04161,makkuva2024local,rajaraman2024transformers,rajaraman2024toward,rajaraman2024toward,edelman2024evolution,bietti2023birth,DBLP:conf/icml/NichaniDL24,chen2024unveiling}, revealing that even moderate-depth transformers struggle with certain algorithmic tasks.

A complementary learning scenario involves predicting dynamical functions from temporally structured data, such as those governed by Markovian processes. Recent studies have made progress in understanding how transformers perform next-token prediction in structured single-sequence settings, including Markov chains~\citep{rajaraman2024transformers,DBLP:journals/corr/abs-2402-04161,makkuva2024local}, autoregressive models~\citep{DBLP:conf/icml/SanderGSBP24}, and causal sequences~\citep{DBLP:conf/icml/NichaniDL24,chen2024unveiling}. In contrast, we focus on a different task formulation: learning latent transition dynamics from multiple in-context sequences, where the model must integrate structural information across examples to generalize to a new sequence.

In this work, we study ICL for Markovian dynamical functions, where each in-context example is a trajectory sampled from a discrete-time Markov chain, and the task is to predict the next token of a query sequence by leveraging shared transition structure across the examples. 
This setting introduces new challenges that go beyond classical ICL tasks with i.i.d. inputs. In the standard regression case, each input-label pair corresponds to a single token with a simple statistical structure, typically zero-mean and independent. In contrast, each in-context example in our setting is a full sequence sampled from a Markov chain. While a convex reparameterization enables us to characterize the global optimum in an enlarged parameter space, mapping it back to transformer parameters is highly nontrivial due to dense parameter interactions induced by the underlying dynamics. These challenges call for a deeper understanding of how transformers generalize from dynamics-driven functions in context.

\paragraph{Our contributions.}
To this end, we study how transformers learn Markovian dynamical functions in-context through the lens of optimization.  Given the challenges posed by non-convexity and stochasticity, 
we focus on binary Markov chains with first-order memory as our first step, which are a classical model for statistical language modeling~\citep{shannon1948mathematical,shannon1951prediction,DBLP:journals/corr/abs-2402-04161}.
The major contributions of this work are highlighted as follows.
\begin{itemize}[nosep, label=$\blacktriangleright$, leftmargin=*]
    \item We establish an analytical framework for understanding ICL of Markovian dynamical functions, and characterize the global minimum of the loss landscape for 1-layer LSA under a tractable case of length-2 chains with both independent and correlated initial conditions. This result reveals how the optimal solution adapts to the Markovian dynamics, exhibiting denser structure compared to the i.i.d. linear regression case (Theorems~\ref{thm:minimizer}, \ref{thm:minimizer_general}).
  
    \item Going beyond the length-2 case, we identify a fundamental gap between the global optimum of a structured prediction objective and what a 1-layer LSA can realize for arbitrary-length Markov chains. Even when the optimal solution is well-defined and analytically characterized in an extended space, recovering the corresponding transformer parameters is NP-hard. This realizability gap reveals a structural limitation of 1-layer LSA when learning Markovian dynamical functions in-context (Theorem~\ref{thm:nphard}). 
     
    \item    
     Finally, we advance the understanding of multilayer transformer expressivity by exploring a parameter subspace that mirrors the structure of the derived global minimum for Markovian dynamics. Our results show that the forward pass of the multilayer linear transformers is equivalent to solving a multi-objective optimization problem. This problem minimizes a squared loss while simultaneously maximizing multiple linear objectives (Theorem~\ref{thm:forward}).  
\end{itemize}

\paragraph{Related work.}
The capability of transformers to perform ICL~\citep{brown2020language,kaplan2020scaling,rae2021scaling,liu2023pre,garg2022can} has inspired extensive work aiming to uncover the underlying mechanisms~\citep{jelassi2022vision,DBLP:conf/icml/GiannouRS0LP23,guo2023transformers,li2023transformers1,sanford2023representational,sanford2024transformers,mahdavi2024revisiting,bu2024provably,li2024dissecting,li2024fine,DBLP:conf/icml/SanderGSBP24,huang2025transformers}. These studies can be broadly categorized into two groups based on the nature of the in-context task. 

(1) \textbf{Regression tasks.}  
This line of work typically focuses on linear regression, where each in-context token represents a pair consisting of an input i.i.d. sampled from a Gaussian distribution and its corresponding output generated by a ground-truth function. From the expressiveness perspective, \cite{akyurek2022learning,von2023transformers,bai2024transformers} demonstrate that transformers trained on such prompts can implicitly perform classical algorithms such as gradient descent, ridge regression, and algorithm selection.  
From the optimization theory angle, \cite{mahankali2023one,DBLP:conf/nips/AhnCDS23} analyze the loss landscape of trained LSA networks and show that they emulate preconditioned gradient descent. 
\cite{DBLP:conf/icml/GatmirySRJK24} extends this by proving that the global minimizer corresponds to multi-step preconditioned gradient descent in looped transformer architectures.  
From the viewpoint of training dynamics, \cite{zhang2023trained,huang2023context} prove convergence to the global optimum under mild distribution shift by leveraging the Polyak-{\L}ojasiewicz (PL) condition.

Departing from the standard i.i.d. setup, our work considers input-output samples as realizations from a Markov chain governed by a shared kernel. We aim to characterize how transformers behave when the input distribution exhibits structured temporal dependencies and the prediction task is governed by the same latent dynamics.
 
(2) \textbf{Sequence generation.}  
Another line of work investigates transformer behavior in sequence generation tasks, particularly under structured or Markovian data.  
\cite{DBLP:journals/corr/abs-2402-04161} analyzes transformers trained on first-order Markov data and shows that shallow models often converge to oversimplified solutions that ignore sequential dependencies, while deeper architectures can capture bigram transitions.
\cite{makkuva2024local} further demonstrates that the convergence behavior in this setting is highly sensitive to initialization and its alignment with the data structure.
\cite{rajaraman2024transformers,rajaraman2024toward} study transformers trained on higher-order Markov sequences, with a focus on how model depth and tokenization affect the ability to model such processes, respectively.  
While our setup also involves fixed-length substrings as input units, similar to \cite{rajaraman2024toward}, we do not study the role of tokenization. Instead, we focus on how attention organizes information across correlated examples to enable prediction for unseen inputs.
On the other hand, several works~\cite{edelman2024evolution,bietti2023birth} have shown that transformers can learn to implement the induction head mechanism~\cite{elhage2021mathematical,olsson2022context} from an empirical perspective.  
Additionally, \cite{DBLP:conf/icml/NichaniDL24} proves that a two-layer transformer trained with a sequential layer-wise scheme can recover the empirical transition probabilities of single-parent Markov chains.  
\cite{chen2024unveiling} shows that gradient descent on $n$-gram data yields layered specialization: the first attention layer copies parent tokens, the feedforward network selects relevant features, and the final attention layer predicts the next token.  
Although our study also involves structured sequences, our formulation aligns more closely with regression-style ICL tasks: each token represents a full training sample, and the model learns to interpolate from these structured examples to predict a new query.

\vspace{-0.3em}
\newcommand\given[1][]{\:#1\vert\:}
\section{Preliminaries\label{sec:problem-setting}}

\begin{figure}[t]
    \centering
    \includegraphics[width=.7\linewidth]{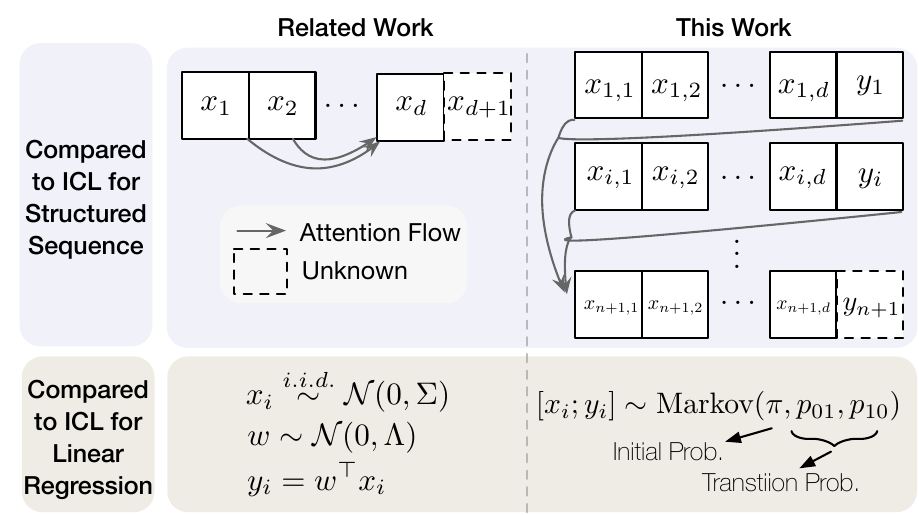}
    \caption{   Comparison between the sequence-level in-context Markovian data based attention structures and the existing works.  
    Top: Compared to ICL for structured sequences, the key difference is that the exiting studies of the  attention mechanism~\citep{DBLP:journals/corr/abs-2402-04161,DBLP:conf/icml/SanderGSBP24,rajaraman2024transformers,DBLP:conf/icml/NichaniDL24} is adopted on a token-level, whereas our study studies sequence-level attention.
    Bottom: While prior work for linear regression samples in-context input and task vectors independently from some given Gaussian distribution \citep{DBLP:conf/nips/AhnCDS23,zhang2023trained}, we consider input vectors generated through a Markovian transition kernel with parameters $p_{01},p_{10}$ from given initial distributions.
   }
    \label{fig:illustration}
\end{figure}
 
\subsection{In-Context Learning }

ICL refers to the operation on a prompt consisting of $n$ input-output pairs and a query input $\bp = 
\left(\{(x_i, y_i)\}_{i=1}^{n},x_{n+1}\right)$
where  $y_i = h (x_i), \ \forall i\in [n+1]$ for some unknown function $h\in \mathcal{H}$, and $ x_i ,y_i $ \edit{belong to} some input space $\mathcal{X}$ and  output space  $\mathcal{Y}$, respectively.
ICL aims to form an output $\hat{y}_{n+1}$ for the query input $x_{n+1}$ that approximates its true label $\hat{y}_{n+1}\approx h(x_{n+1})$.
The function $h:\mathcal{X}\rightarrow \mathcal{Y}$ remains the same within a single prompt yet varies across prompts. 

Prior works have focused on linear function space $\mathcal{H}$: $h(x )=y  = w^{\top} x $ for some $w\in \mathcal{X}$.
Under such a construction, $y $ is deterministic once $x $ is provided.
Despite being commonly encountered in many real-world applications, the case where $h$ is stochastic remains largely unexplored.
For example, $h$ can represent a text generation mechanism that provides descriptions revolving a given topic.  
Then the token generated in the next step is associated with a probability based on the previously generated words~\cite{chorowski2016towards}. 
To approach the ICL for such scenarios, we consider a simplified setting of next token prediction for Markov chains.
The state space resembles  vocabulary and the transition probability is akin to the conditional probability of the next word given the previous text.

\subsection{Markov Chains} 
The evolution of a Markov chain $s $ of order $k$ on a state space $\gS$ depends solely on the $k$ most recent states. For time step $\tau\in \mathbb{Z}_{\geq 1}$, we let $s_\tau$ denote the $\tau$th state in the sequence $s$, the probability of observing state $j\in \gS$ at time step $\tau+1$ is:
$
\bbp(s_{\tau+1} = j \given s_{1:\tau}  ) = \bbp(s_{\tau+1} = j \given s_{\tau-k+1:\tau}  ).$
where $s_{\tau_1:\tau_2}$ denotes the subsequence from time step $\tau_1$ to $\tau_2$.
For first-order Markov chains, the dynamics are determined by the transition probabilities $p_{ij} \coloneqq \mathbb{P}(s_{\tau+1} = j \given s_\tau = i) $, which indicate the probability of transitioning from state $i\in\mathcal{S}$ to state $j\in\mathcal{S}$.
These probabilities constitute the Markov kernel $\bP = (p_{ij}) \in [0, 1]^{\lvert \mathcal{S} \rvert \times \lvert \mathcal{S} \rvert}$.  
For a binary state space $\mathcal{S} = \{0, 1\}$, the transition matrix is represented as $\bP(p_{01}, p_{10}) \coloneqq \begin{bmatrix} 1-p_{01}, \ p_{01} \ ;\ p_{10}, \ 1-p_{10} \end{bmatrix}$.
Let $\pi_{\tau}\in[0,1]^{\lvert\gS\rvert}$ denote the marginal probability at the $\tau$th time step.
A binary Markov chain $s  \sim (\pi_1, \bP(p_{01}, p_{10}))$ can be generated by starting with an initial distribution $\pi_1$ and iteratively applying $\bP(p_{01}, p_{10})$ to update the state probabilities at each time step.

\subsection{Data Formalism}
  
We introduce the input embedding matrix formulation used for our theoretical results.
For a Markov chain $s$ with length $d+1$, we take its first $d$ states to be the input $x = s_{1:d}$ and the final state to be the output $y= s_{t}$.
The input and output space are $\mathcal{X}=\mathcal{S}^{d}$ and $\mathcal{Y}=\mathcal{S}$.
We use subscripts to denote the indices of in-context samples, such that $x_i$	
  represents the first 
$d$ time steps of the 
$i$th in-context Markov chain, while $y_i$ denotes its final state. 
To form an input embedding matrix $Z_0 \in \mathcal{S}^{(d+1)\times (n+1)}$, we stack $ (x_i,y_i)\in\mathcal{S}^{d+1}$ as the first $n$ columns and let the last column be $(x_{n+1},0)$, inspired by~\cite{zhang2023trained}.
\begin{align} 
Z_0 = 
\begin{bmatrix} x_1 & x_2 &\cdots& x_n & x_{n+1} \\ 
y_1 & y_2 & \cdots &y_n & 0\end{bmatrix} \label{eq:prompt}
\end{align} 
where $z_i\coloneqq[x_i;y_i]\sim (\pi_1,\bP(p_{01},p_{10}))$ for  initial probability mass function $\pi_1=[1-p,p]$ with $p\in(0,1)$ and  transition probabilities $p_{01},p_{10}\sim U(0,1)$.
\edit{We use double subscripts ($x_{i,j}$) to indicate the $j$th entry of the $i$th in-context input sequence.}
The Markov kernel varies for each prompt, while the initial probability $p$ remains constant across all prompts.
Let $\TF$ denote a transformer-based autoregressive model.
The goal of ICL is to learn a model $\TF$ that can accurately predict the label of the query input:
$\hat{y}_{n+1}\coloneqq\TF(Z_0) \approx y_{n+1}$.

\subsection{Model and Training Objective}
We consider transformers with LSA layers~\citep{von2023transformers,DBLP:conf/nips/AhnCDS23}.
We recall a single-head self-attention layer~\citep{DBLP:conf/nips/VaswaniSPUJGKP17}  parameterized by key, queue, value weight matrices are defined as follows:
\begin{align} \mathrm{Attn}_{W_{k,q,v}}(Z) &= W_v Z M \cdot \mathrm{softmax}\left( Z^{\top} W_k^{\top} W_q Z \right) \\
  M &\coloneqq \begin{bmatrix} I_{n\times n} & 0 \\ 0 & 0 \end{bmatrix}\in \mathbb{R}^{(n+1)\times (n+1)}\label{eq:lsa-standard} \end{align}
where $W_k,W_q,W_v\in\mathbb{R}^{(d+1)\times (d+1)}$ are the (key, queue, value) weight matrices and $I_{n\times n} $ denotes the identity matrix. The attention scores are normalized by the $\mathrm{softmax}$ operator.
The mask matrix $M$ reflects the asymmetric prompt due to the absence of the label for $x^{(n+1)}$.
Motivated by~\cite{DBLP:conf/nips/AhnCDS23,zhang2023trained}, we simplify the architecture by (i) removing the $\mathrm{softmax}$ nonlinearity and (ii) reorganizing the weights as $P \coloneqq W_v $ and $Q \coloneqq W_k^{\top}W_q$,  merging the query and key matrices into a single  matrix:
\begin{align} 
\mathrm{Attn}^{\edit{(\mathrm{lin})}}_{P,Q}(Z) = PZM(Z^{\top} QZ ).  \label{eq:lsa}
\end{align}
Despite its simplicity, LSA demonstrates ICL capability for linear functions~\citep{zhang2023trained} and has been shown to implement gradient descent~\citep{von2023transformers} and preconditioned gradient descent~\citep{DBLP:conf/nips/AhnCDS23} to solve linear regression in-context.
We will prove in~Sec.~\ref{sec:preconditioned} that certain parameter configuration implements preconditioned gradient descent for a multi-objective optimization problem that includes linear regression.
Finally, we consider architecture consists of $L$-layer LSA modules. Let $Z_l$ denote the output of the $l$th layer attention, we have
\begin{align} Z_{l+1} 
=&Z_{l} + \frac{1}{n}
\mathrm{Attn}^{\edit{(\mathrm{lin})}}_{P_{l},Q_{l}}(Z_{l})\\
=&
Z_{l} + \frac{1}{n} P_l Z_lM(Z_l^{\top} Q_lZ_l )\tag{LSA}  \end{align}
for $l=0,\ldots,L-1$. 
The normalizing factor $n$ averages the attention weights gathered from the in-context examples.
We consider the output of the transformer to be the bottom-right entry of the $L$th layer, i.e., $\TF_{L}(Z_0;\{P_{l},Q_{l}\}_{l=0,\ldots,L-1})=[Z_L]_{(d+1),(n+1)}$.
To train the in-context learner, we optimize the following population loss in the limit of an infinite number of training prompts such that each prompt corresponds to a distinct Markov kernel $\{p_{ij}\}_{i,j\in\mathcal{S}}$:
\begin{align}
&f(\{P_{l},Q_{l}\} )=
\mathbb{E}_{ } [  \left(\TF_L(Z_0;\{P_{l},Q_{l}\})-y_{n+1}\right)^2 ]\label{eq:obj}
\end{align}  
where $l\in\{0,\ldots,L-1\}$ and the expectation is taken over $Z_0,\{p_{ij}\}_{i,j\in\mathcal{S}}$.
This objective function formulates the in-context task as last-token prediction for in-context sequences.  Once trained, the model can autoregressively predict the next token of the query sequence. Unlike ICL for linear regression tasks~\citep{DBLP:conf/nips/AhnCDS23,zhang2023trained}, where the in-context input distribution assumes a zero-mean Gaussian and the input-output relationship is linear, our setting involves a Markovian input and a stochastic input-output relationship. Furthermore, compared to ICL for a single sequence~\citep{DBLP:conf/icml/NichaniDL24,rajaraman2024transformers}, attention is applied across sequences rather than being restricted to local tokens within a single sequence.

\vspace{-0.3em}
 
\section{Global Minimizers and Expressive Limits of 1-Layer LSA\label{sec:main}} 

In this section, we analyze the loss landscape of the in-context objective function $f$. To address its nonconvexity, we introduce a reparameterized objective in an expanded parameter space, resulting in a strictly convex formulation with a closed-form global minimum. This analysis shows that achieving optimality in  $f  $ requires more parameters than in linear tasks.

\paragraph{Parameter space.}
For single-layer LSA, only the last row of $P_0$ and the first $d$ columns of $Q_0$ affect the output. Therefore, we consider optimization over the following subset of $P_0$ and $Q_0$:
\begin{align}
P_0 = \begin{bmatrix}
0_{d \times (d+1)}\\
b^{\top}
\end{bmatrix},\quad 
Q_0 = 
\begin{bmatrix}
A & 0_{d+1}
\end{bmatrix}\label{eq:sparse-pq}
\end{align}
where $b \in \mathbb{R}^{d+1},\edit{A \in\mathbb{R}^{(d+1)\times d}}$. Throughout this section, we assume that $P_0$ and $Q_0$ follow the above format and refer to them as $P$ and $Q$ for simplicity.


\paragraph{Reparameterized objective.} We define a reparameterization $\phi$ which maps from LSA parameter space  
to $\mathbb{R}^{dm}$, where   $m=\frac{(d+2)(d+1)}{2}$:
\begin{equation}
X_r =\phi(b, A) = \begin{cases}
    b_i A_{k,j}, & \text{if } i = k; \\
    b_i A_{k,j} + b_k A_{i,j}, & \text{otherwise}
\end{cases}\label{eq:reparam}
\end{equation}
where $r = (j-1) m + i(d+1)+k - \sum_{i'\leq i} (i'-1)$.
Here $\phi(\cdot)_r$ is the $r$th entry of the resulting vector in $\mathbb{R}^{dm}$  and $A_{k,j}$ denotes the $(k,j)$-th entry of $A$ and $b_i$ denotes the $i$th element of $b.$ For clarity, we use $X$ to represent  $\phi(b,A) $.  Let $\tilde{f}:\mathbb{R}^{dm} \rightarrow \mathbb{R}$ denote the reparameterized objective s.t. $\tilde{f}(\phi(b,A)) = f(  P,Q)$.   
  
We collect the unique elements in the symmetric data matrix $ \frac{1}{n}\sum_{i=1}^{n} z_iz_i^T$ into a vector $\bg$.
Then $f$ can be expressed as a square loss of a linear model parameterized by $X$:
\begin{align}
\tilde{f}(X) =  \mathbb{E} \left[
\left((x_{n+1}\otimes \bg )^{\top} X - y_{n+1}\right)^2
\right]\label{eq:reparam-multid}.
\end{align} 
The equivalence of the objective function before and after reparameterization is verified in Appendix~\ref{sec:appendix-iid},\ref{sec:appendix-longer-length}. The reparameterized objective, $\tilde{f}(X)$, exhibits the following  desired property:
\begin{lemma}[\emphh{Strict convexity}]\label{lemma:strict-convexity}
Suppose the initial probability of the Markov chains is $\pi_1=[1-p,p]$ with $p\in (0,1)$ and the transition probabilities are sampled from $U(0,1).$
Then $\tilde{f}$ (Eq.~\ref{eq:reparam-multid}) is strictly convex w.r.t. $X \in \mathbb{R}^{dm} $.
\end{lemma}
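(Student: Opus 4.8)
To show that $\tilde{f}(X) = \mathbb{E}[((x_{n+1}\otimes\bg)^\top X - y_{n+1})^2]$ is strictly convex in $X\in\mathbb{R}^{dm}$, it suffices to show that the Hessian $\nabla^2\tilde{f} = 2\,\mathbb{E}[(x_{n+1}\otimes\bg)(x_{n+1}\otimes\bg)^\top]$ is positive definite, i.e., that the feature vector $v\coloneqq x_{n+1}\otimes\bg$ is not confined to any proper subspace of $\mathbb{R}^{dm}$. Equivalently, for every nonzero $c\in\mathbb{R}^{dm}$ we must exhibit a positive-probability event on which $c^\top(x_{n+1}\otimes\bg)\neq 0$. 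Here $\bg$ collects the $m$ distinct entries of the empirical second-moment matrix $\tfrac1n\sum_i z_iz_i^\top$ over the $n$ in-context chains, and $x_{n+1}\in\mathcal{S}^d$ is the query input; the randomness comes from the $n{+}1$ binary chains together with the per-prompt transition parameters $p_{01},p_{10}\sim U(0,1)$.

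The plan is as follows. First I would reduce positive-definiteness to a statement about the support of the joint law of $(x_{n+1},\bg)$: it is enough that there is no nonzero linear functional vanishing $\mathbb{P}$-almost surely on $x_{n+1}\otimes\bg$. Second, I would exploit independence: conditionally on $(p_{01},p_{10})$, the query chain $z_{n+1}$ is independent of the in-context chains, so $x_{n+1}$ and $\bg$ are conditionally independent; hence a functional annihilating $x_{n+1}\otimes\bg$ forces, for each fixed value $x^\star\in\{0,1\}^d$ of $x_{n+1}$ occurring with positive probability, a linear relation $\sum_{j}(x^\star\text{-indexed block of }c)^\top \bg = 0$ holding a.s. Since all $2^d$ patterns $x^\star$ occur with positive probability (each coordinate flips with probability bounded away from $0$ and $1$ after integrating over the transition parameters, and $\pi_1$ is nondegenerate because $p\in(0,1)$), this must hold for each block separately, reducing the problem to: the $m$ coordinates of $\bg$ are "linearly independent as random variables," i.e., $\mathbb{E}[\bg\bg^\top]\succ0$. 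Third, I would establish $\mathbb{E}[\bg\bg^\top]\succ0$ by noting $\bg$ is an average of i.i.d. terms $\bg^{(i)}$ = (distinct entries of $z_iz_i^\top$), so it suffices that the single-sample vector $\bg^{(1)}$ has no a.s. linear relation among its coordinates. The entries of $z z^\top$ for $z\in\{0,1\}^{d+1}$ are products $z_az_b$; a linear relation $\sum_{a\le b}\gamma_{ab}z_az_b=0$ a.s. would have to hold for every binary string $z$ in the support. I would then verify that, under $\pi_1=[1-p,p]$ with $p\in(0,1)$ and transitions drawn from $U(0,1)$, the chain $z$ puts positive mass on enough binary strings (in fact on all $2^{d+1}$ of them) that the only multilinear polynomial of degree $\le 2$ in $\{0,1\}$-variables vanishing on all of them is the zero polynomial — this is a standard finite-difference / inclusion–exclusion argument on the Boolean cube — giving $\gamma\equiv0$ and hence $\mathbb{E}[\bg^{(1)}(\bg^{(1)})^\top]\succ0$, which lifts to $\mathbb{E}[\bg\bg^\top]\succ0$ and then to $\nabla^2\tilde f\succ0$.

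The main obstacle I anticipate is the bookkeeping at the interface between the two factors of the tensor product: one must argue carefully that annihilating $x_{n+1}\otimes\bg$ really does decouple into "for each query pattern $x^\star$, the corresponding block of $c$ annihilates $\bg$," which requires knowing that the conditional law of $\bg$ given $\{x_{n+1}=x^\star\}$ still has full-dimensional support — true here because conditioning on the query only reweights the transition parameters slightly and leaves the in-context chains' second moments nondegenerate, but it needs to be spelled out (e.g. by first conditioning on $(p_{01},p_{10})$, where the independence is exact, and then integrating). A secondary technical point is confirming that all relevant binary patterns genuinely have positive probability: because $p_{01},p_{10}$ range over all of $(0,1)$ and $\pi_1$ is nondegenerate, every finite string in $\{0,1\}^{d+1}$ has strictly positive probability under the mixture, so the Boolean-cube vanishing argument applies with no degeneracy. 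Everything else — differentiating twice under the expectation (justified by boundedness, since all variables lie in $[0,1]$), and the passage from positive-definite Hessian to strict convexity — is routine.
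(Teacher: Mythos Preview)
Your proposal is correct and, in fact, more carefully structured than the paper's own argument. Both start from the same observation---the Hessian is $2\,\mathbb{E}[vv^\top]$ with $v=x_{n+1}\otimes\bg$, so strict convexity reduces to showing that $c^\top v$ is not a.s.\ zero for any nonzero $c$. From there, however, the paper takes a shortcut: it selects a single nonzero coordinate of $c$, observes that the corresponding coordinate of $v$ (namely $x_{n+1,\alpha}\G_{\beta,\gamma}$) is strictly positive with positive probability, and concludes. As written this leaves a gap, since positivity of one summand in $c^\top v$ does not by itself rule out cancellation by the remaining terms.

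Your route closes precisely this gap. You decouple the tensor product by first conditioning on the transition parameters to obtain independence of $x_{n+1}$ and $\bg$, then vary $x_{n+1}$ over all binary patterns---in particular the standard basis patterns $e_j$---to isolate each block $c^{(j)}$, and finally reduce to $\mathbb{E}[\bg\bg^\top]\succ 0$, which you establish via linear independence of the quadratic monomials $\{z_a z_b:a\le b\}$ on the Boolean cube $\{0,1\}^{d+1}$. This is a genuinely different decomposition, and it buys you a complete argument where the paper's is elliptical. One small point worth making explicit when you write it up: the passage from $\bg=\frac{1}{n}\sum_i\bg^{(i)}$ back to the single-sample $\bg^{(1)}$ is cleanest via the diagonal event $\{z_1=\cdots=z_n\}$, which has positive probability and on which $\bg=\bg^{(1)}$; this justifies your ``it suffices'' in step five without appeal to variance arguments.
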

The proof, provided in Appendix~\ref{sec:appendix-general}, leverages the nonzero transition probabilities and the properties of Markov chains to establish the positive definiteness of the Hessian of $\tilde{f}$. Consequently, $\tilde{f}$ is strictly convex, ensuring the existence and uniqueness of its global minimizer, denoted as $X^*$. We derive its expression by solving for the zero of the gradient of $\tilde{f}$ below.

\begin{lemma}[\emphh{Global minimum for reparameterized objective}]\label{lemma:reparam-global-min}
Consider the in-context learning of length-$d+1$ ($d\geq 1$) Markov chains $\{(x_i,y_i)\}_{i=1}^{n}$ ($x_i,y_i\in \{0,1\}$) with transition kernel $\bP = \begin{bmatrix}
p_{00}&p_{01}\\
p_{10}&p_{11}
\end{bmatrix}\in(0,1)^2 $.
Suppose the initial states $x_i$ are i.i.d. sampled from $Bernoulli(p)$ for some constant $p\in(0,1)$. 
Consider indices $i,j\in[d]$, $i',j',k',l'\in[d+1]$ with $i'\leq j',k'\leq l'$. 
We denote $t_1\leq t_2\leq t_3\leq t_4$ as the sorted version of $(i',j',k',l')$.
Define $H\in \mathbb{R}^{dm\times dm}$ as
{\small
\begin{align*}
H_{r,c} =  
&\frac{1}{n}\bb{E}\left[\bigg( p(\bP^{t_1-1})_{11} +(1-p)(\bP^{t_1-1})_{01}\bigg)\right.\left.(\bP^{t_2-t_1})_{11}(\bP^{t_3-t_2})_{11}(\bP^{t_4-t_3})_{11}\right]+\\
&\frac{n-1}{n}\bb{E}\left[
\left(  p(\bP^{i'-1})_{11} +(1-p)(\bP^{i'-1})_{01}\right) 
(\bP^{j'-i'})_{11} \right.\left.\left(  p(\bP^{k'-1})_{11} +(1-p)(\bP^{k'-1})_{01}\right) 
(\bP^{l'-k'})_{11} 
\right] 
\end{align*}}
where $r=(i-1)m+j'+\sum_{\tau=0}^{i'-2}d+1-\tau,\ c=(j-1)m+l'+\sum_{\tau=0}^{k'-2}d+1-\tau$.
Define $b\in \mathbb{R}^{dm}$ as
{\small\begin{align*}
b_{r} =&\bb{E}\left[
\left(  p(\bP^{j-1})_{11} +(1-p)(\bP^{j-1})_{01}\right) 
(\bP^{d+1-j})_{11} \right.\left.\left(  p(\bP^{i'-1})_{11} +(1-p)(\bP^{i'-1})_{01}\right) 
(\bP^{j'-i'})_{11} 
\right]
\end{align*}}
for $r=(j-1)m+j'+\sum_{\tau=0}^{i'-2}d+1-\tau$.
The global minimum $X^*\in\mathbb{R}^{dm}$ of the objective function described in~Eq.~\ref{eq:reparam-multid} equals $X^*=H^{-1}b$.
\end{lemma}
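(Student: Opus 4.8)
\textbf{Proof plan for Lemma~\ref{lemma:reparam-global-min}.}
Since Lemma~\ref{lemma:strict-convexity} already gives strict convexity of $\tilde f$ on $\mathbb{R}^{dm}$, the global minimizer $X^*$ is the unique stationary point, characterized by $\nabla \tilde f(X^*) = 0$. Writing $u \coloneqq x_{n+1}\otimes \bg \in \mathbb{R}^{dm}$ for the feature vector in Eq.~\ref{eq:reparam-multid}, we have $\tilde f(X) = \mathbb{E}[(u^\top X - y_{n+1})^2]$, so $\nabla \tilde f(X) = 2\big(\mathbb{E}[uu^\top]\,X - \mathbb{E}[u\,y_{n+1}]\big)$. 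Setting $H \coloneqq \mathbb{E}[uu^\top]$ and $b \coloneqq \mathbb{E}[u\,y_{n+1}]$, strict convexity forces $H \succ 0$, hence invertible, and the first-order condition yields $X^* = H^{-1}b$. The entire content of the lemma is therefore to compute the entries of $H$ and $b$ explicitly in terms of the Markov kernel $\bP$, and the bulk of the work is bookkeeping: matching the flattened index $r = (i-1)m + j' + \sum_{\tau=0}^{i'-2}(d+1-\tau)$ to the pair $\big(i,\,(i',j')\big)$, where $i\in[d]$ indexes the $x_{n+1}$ coordinate and $(i',j')$ with $i'\le j'$ indexes an entry of the symmetric matrix $\tfrac1n\sum_i z_i z_i^\top$ collected into $\bg$.

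First I would make the correspondence between $\bg$ and index pairs precise. The symmetric data matrix $S \coloneqq \tfrac1n\sum_{i=1}^n z_i z_i^\top \in \mathbb{R}^{(d+1)\times(d+1)}$ has $\tfrac{(d+2)(d+1)}{2} = m$ independent entries, which $\bg$ enumerates as $\bg_{(i',j')} = S_{i',j'}$ for $i'\le j'$; the offset $\sum_{\tau=0}^{i'-2}(d+1-\tau)$ is exactly the number of pairs $(i'',j'')$ with $i''<i'$, so the indexing is the standard row-major flattening of the upper triangle. Then $u = x_{n+1}\otimes \bg$ has coordinate $r \leftrightarrow (i,(i',j'))$ equal to $x_{n+1,i}\cdot S_{i',j'}$.

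Next I would expand $H_{r,c} = \mathbb{E}[u_r u_c] = \mathbb{E}\big[x_{n+1,i}\,x_{n+1,j}\,S_{i',j'}\,S_{k',l'}\big]$. Using $S_{i',j'} = \tfrac1n\sum_{a} z_{a,i'} z_{a,j'}$ and $S_{k',l'} = \tfrac1n\sum_{b} z_{b,k'} z_{b,l'}$, the product over $a,b$ splits into the diagonal part $a=b$ (giving the $\tfrac1n$-weighted term with a single chain's four coordinates $z_{a,i'}z_{a,j'}z_{a,k'}z_{a,l'}$) and the off-diagonal part $a\ne b$ (giving the $\tfrac{n-1}{n}$-weighted term, a product of two independent two-coordinate factors). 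Because $x_{n+1}$ is independent of the in-context chains, and $x_{n+1,i}\in\{0,1\}$ is Bernoulli, $\mathbb{E}[x_{n+1,i}x_{n+1,j}]$ contributes a marginal factor; here the key point — which I would state carefully — is that the stated formula has already absorbed $x_{n+1,i}$ via the term $\big(p(\bP^{i'-1})_{11} + (1-p)(\bP^{i'-1})_{01}\big)$, i.e.\ conditioning uses $\mathbb{P}(x_{n+1,i}=1)$ and $\mathbb{P}(s_{i'}=1)$ jointly along the query chain; I would reconcile the role of the index $i$ versus $i'$ by noting that in this parameterization $x_{n+1,i}$ and the $i'$-th coordinate of each in-context $z$ enter symmetrically through the chain marginals. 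For a single chain $z_a \sim (\pi_1,\bP)$, the needed moment is $\mathbb{E}[z_{a,t_1}z_{a,t_2}z_{a,t_3}z_{a,t_4}] = \mathbb{P}(s_{t_1}=s_{t_2}=s_{t_3}=s_{t_4}=1)$, and by the Markov property with sorted times $t_1\le t_2\le t_3\le t_4$ this telescopes to $\big(p(\bP^{t_1-1})_{11}+(1-p)(\bP^{t_1-1})_{01}\big)(\bP^{t_2-t_1})_{11}(\bP^{t_3-t_2})_{11}(\bP^{t_4-t_3})_{11}$ — precisely the first bracket. The two-coordinate version $\mathbb{P}(s_{i'}=1,s_{j'}=1) = \big(p(\bP^{i'-1})_{11}+(1-p)(\bP^{i'-1})_{01}\big)(\bP^{j'-i'})_{11}$ gives each factor in the off-diagonal term. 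Finally the outer $\mathbb{E}$ over $p_{01},p_{10}\sim U(0,1)$ remains, producing the stated expectations. The computation of $b_r = \mathbb{E}[u_r\,y_{n+1}]$ is identical in spirit: $y_{n+1} = s_{d+1}$ of the query chain, so $b_r = \mathbb{E}\big[x_{n+1,i}\,S_{i',j'}\,y_{n+1}\big]$, and since $S_{i',j'} = \tfrac1n\sum_a z_{a,i'}z_{a,j'}$ with each $z_a$ independent of the query, only the $x_{n+1,i}$–$y_{n+1}$ pair (collapsing the $r$-index's first component into the chain marginal at step $j$ and step $d+1$) survives alongside one in-context two-coordinate factor, matching the displayed $b_r$.

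The main obstacle is not any deep idea but the index arithmetic: one must verify that the flattening formulas for $r$ and $c$, together with the symmetrization in $\phi$ (Eq.~\ref{eq:reparam}), line up so that $H$ and $b$ as indexed match $\mathbb{E}[uu^\top]$ and $\mathbb{E}[uy_{n+1}]$ entrywise, and that the sorting $(i',j',k',l')\mapsto(t_1,t_2,t_3,t_4)$ in the diagonal term is the correct consequence of the Markov chain's one-sided dependence (the four-point correlation depends only on the order statistics of the time indices, not on which coordinate came from which factor). I would handle this by fixing one generic quadruple $(i,j,i',j',k',l')$, writing out both sides, and invoking the elementary identity $\mathbb{P}(s_{t_1}=\cdots=s_{t_4}=1) = \mathbb{P}(s_{t_1}=1)\prod_{\ell=1}^{3}\mathbb{P}(s_{t_{\ell+1}}=1\mid s_{t_\ell}=1)$, with $\mathbb{P}(s_{t}=1) = p(\bP^{t-1})_{11}+(1-p)(\bP^{t-1})_{01}$ and $\mathbb{P}(s_{t'}=1\mid s_t=1) = (\bP^{t'-t})_{11}$; the full verification is deferred to the appendix as the excerpt indicates.
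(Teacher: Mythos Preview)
Your proposal is correct and mirrors the paper's proof: derive the normal equations $HX=b$ from the first-order condition on the strictly convex quadratic $\tilde f$, then evaluate $H_{r,c}=\mathbb{E}[x_{n+1,i}x_{n+1,j}\,\G_{i',j'}\G_{k',l'}]$ and $b_r=\mathbb{E}[x_{n+1,j}y_{n+1}\,\G_{i',j'}]$ by (i) factoring the query chain from the in-context chains via conditional independence given $\bP$, (ii) splitting $\G_{i',j'}\G_{k',l'}$ into same-chain ($1/n$) and cross-chain ($(n-1)/n$) pieces, and (iii) writing each multi-time probability $\mathbb{P}(s_{t_1}=\cdots=s_{t_k}=1)$ as the telescoping product $\big(p(\bP^{t_1-1})_{11}+(1-p)(\bP^{t_1-1})_{01}\big)\prod_\ell(\bP^{t_{\ell+1}-t_\ell})_{11}$. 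Your hesitation about where the $x_{n+1,i}x_{n+1,j}$ factor lands is well placed: the paper's proof computes $\mathbb{E}[x_{n+1,i}x_{n+1,j}]$ as a separate factor, but the displayed $H_{r,c}$ in the lemma statement does not show it explicitly---this is a presentational gap in the paper, not in your argument.
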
 
The full derivation, given in Appendix \ref{sec:appendix-longer-length}, utilizes the first-order dependence in the in-context sequences to evaluate the expected value of each token. Specifically, the expectation of each token is expressed as the probability of the first token multiplied by successive powers of the transition kernel.

\paragraph{Global minimum of the ICL objective.} 
We now characterize the global minimum of the original ICL loss. It suffices to find $(b,A)$ such that $\phi(b,A) = X^*$. However, since $\phi$ maps into an expanded parameter space, an inverse mapping is not always guaranteed. 
We derive an analytic solution where possible and provide an approximation for the more general case.  

The following result presents the analytic solution 
in length-2 Markov chains.
\begin{theorem}[\emphh{Global minima for i.i.d. in-context initial states}]\label{thm:minimizer}
Consider the in-context learning of length-2 Markov chains $\{(x_i,y_i)\}_{i=1}^{n}$ ($x_i,y_i\in \{0,1\}$) with transition probabilities~$p_{01},p_{11}\sim U(0,1)$.
Suppose the initial states $x_i$ are i.i.d. sampled from $Bernoulli(p)$ for some $p\in(0,1)$.

Let $X^* \coloneqq H^{-1}\begin{bmatrix}
p^2/2  &
p^2/3 &
p^2/12 +p/4
\end{bmatrix}^{\top}$, where $H$ is a symmetric matrix defined as follows (repeating entries in the lower half triangle are omitted)
\begin{align*}
  p\begin{bmatrix}
\frac{p}{n} +  \frac{(n-1)p^2}{n} & \frac{p}{2n} +  \frac{(n-1)p^2}{2n} & \frac{p}{2}\\
& \frac{p}{2n} +  \frac{(n-1)p^2}{3n} & \frac{p}{2n} + 
 \frac{(n-1)\left(
 \frac{p}{4}+ \frac{p^2}{12}
\right)}{n}  \\
& & \frac{1}{2n} + 
 \frac{(n-1)\left( \frac{1}{3}- \frac{p}{6} + \frac{p^2}{6} \right)}{n}
\end{bmatrix}.
\end{align*}

Then the following choice of parameters 
\begin{equation}
\begin{aligned}
P = \begin{bmatrix}
0&0\\
1&\frac{X_2^* \pm \sqrt{{X_2^2}^* - 4X_1^* X_3^*}}{2}
\end{bmatrix}\ 
Q = \begin{bmatrix}
X_1^* & 0 \\ 
X_2^* - \frac{X_1^* X_2^* \pm X_1^* \sqrt{{X_2^*}^2-4X_1^* X_3^*}}{2} & 0
\end{bmatrix}
\end{aligned} \label{eq:minimizer-main}
\end{equation}
is a global minimizer of $f(P,Q)$, \edit{where $X_i^*$ is the $i$th element of $X^*$}.
 
\end{theorem}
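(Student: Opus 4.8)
The plan is to prove the theorem by \emph{inverting the reparameterization} $\phi$ in the $d=1$ case, which reduces the whole problem to solving a single quadratic. I would start from the abstract observation that on the admissible parameter class \eqref{eq:sparse-pq} one has $\tilde f=f\circ\phi$, and that this class is loss-equivalent to the full parameter space because only the last row of $P_0$ and the first $d$ columns of $Q_0$ influence the output. Since $\tilde f$ is strictly convex (Lemma~\ref{lemma:strict-convexity}) it has a unique minimizer, which by Lemma~\ref{lemma:reparam-global-min} is $X^*=H^{-1}b$. Hence any $(b,A)$ with $\phi(b,A)=X^*$ gives $f(P,Q)=\tilde f(X^*)=\min_{X\in\mathbb R^{dm}}\tilde f(X)\le\tilde f(\phi(b',A'))=f(P',Q')$ for every admissible $(P',Q')$, so $(P,Q)$ is a global minimizer of $f$. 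It therefore remains to (i) specialize $H$ and $b$ to $d=1$ and match them to the displayed matrix and vector, and (ii) produce $(b,A)$ with $\phi(b,A)=X^*$ in the stated form.

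For (i): with $d=1$ we get $m=\tfrac{(d+2)(d+1)}{2}=3$, so $X\in\mathbb R^3$. Substituting $d=1$ into Lemma~\ref{lemma:reparam-global-min} and evaluating the Markov expectations under $x_i\sim\mathrm{Bernoulli}(p)$ and $p_{01},p_{11}\sim U(0,1)$ — using $\mathbb E[p_{11}]=\tfrac12$, $\mathbb E[p_{11}^2]=\tfrac13$, $\mathbb E[p_{01}]=\tfrac12$ and the independence of $p_{01}$ and $p_{11}$ — reproduces exactly the matrix $H$ in the statement together with the right-hand side $\big(\tfrac{p^2}{2},\,\tfrac{p^2}{3},\,\tfrac{p^2}{12}+\tfrac p4\big)^\top$; invertibility of $H$ follows from Lemma~\ref{lemma:strict-convexity}. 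For (ii): write $\phi$ out at $d=1$: with $b=(b_1,b_2)^\top$ and $A=(A_{1,1},A_{2,1})^\top$, the three coordinates of $\phi(b,A)$ are $b_1A_{1,1}$, $b_1A_{2,1}+b_2A_{1,1}$, and $b_2A_{2,1}$. Using the scaling symmetry $\phi(cb,c^{-1}A)=\phi(b,A)$ I normalize $b_1=1$ (legitimate when $X^*_1\neq0$, which one reads off $H^{-1}b$; the measure-zero case $X^*_1=0$ is handled directly). Then $A_{1,1}=X^*_1$ and $A_{2,1}=X^*_2-b_2X^*_1$, and the last equation collapses to the quadratic $X^*_1 b_2^2-X^*_2 b_2+X^*_3=0$; its two roots, fed back through these relations and into \eqref{eq:sparse-pq}, produce the two parameter choices in \eqref{eq:minimizer-main}, which is where the $\pm$ comes from.

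The step I expect to be the main obstacle is the hidden feasibility condition: the displayed $P,Q$ have real entries only if the discriminant ${X^*_2}^2-4X^*_1X^*_3\ge0$, equivalently only if $X^*\in\mathrm{Im}(\phi)$. One inclusion is immediate — for any $(b,A)$, the symmetric matrix with rows $(X_1,X_2/2)$ and $(X_2/2,X_3)$ equals $\tfrac12(bA^\top+Ab^\top)$ and thus has determinant $X_1X_3-X_2^2/4=-\tfrac14(b_1A_{2,1}-b_2A_{1,1})^2\le0$, so $\mathrm{Im}(\phi)\subseteq\{X:\,X_2^2\ge4X_1X_3\}$. What is actually needed, however, is that the \emph{particular} least-squares solution $X^*=H^{-1}b$ lands in this region. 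I anticipate establishing this by solving the displayed $3\times3$ system in closed form and verifying ${X^*_2}^2\ge4X^*_1X^*_3$ as an inequality in $p\in(0,1)$ and $n\ge1$; the limiting behaviour $X^*_1,X^*_3\to0$ and $X^*_2\to1/p$ as $n\to\infty$ (so the discriminant tends to $1/p^2>0$) indicates why it should hold throughout. Once this is in hand, both roots of the quadratic are real and the $\pm$ yields two genuine global minimizers, completing the proof.
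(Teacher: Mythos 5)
Your proposal follows essentially the same route as the paper: restrict to the sparse class \eqref{eq:sparse-pq}, pass to the convex reparameterized objective, identify $X^*=H^{-1}b$ via the gradient condition and the uniform/Bernoulli moments (this is exactly the computation in Lemma~\ref{lem:minimizer}), and then invert the three-equation bilinear system by normalizing $b_1=1$ and solving a quadratic in $b_2$. The logical skeleton (any preimage of $X^*$ under $\phi$ minimizes $f$ because $\tilde f(X^*)$ lower-bounds $f$ on the whole admissible class) is correct and is the same argument the paper uses implicitly.

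The one genuine gap is the step you yourself flag and defer: realizability of $X^*$, i.e.\ the inequality $(X_2^*)^2\ge 4X_1^*X_3^*$. Without it the square roots in \eqref{eq:minimizer-main} need not be real and the theorem's conclusion is vacuous, so this is not a side condition but part of the claim; your observation that $\mathrm{Im}(\phi)$ is exactly $\{X:X_2^2\ge 4X_1X_3\}$ makes the necessity precise, and your $n\to\infty$ limit ($X_1^*,X_3^*\to0$, $X_2^*\to 1/p$) is consistent but does not cover finite $n$. To be fair, the paper's own appendix also omits this verification, so closing it would strengthen the result rather than merely reproduce it. Separately, note that your (correct) quadratic $X_1^*b_2^2-X_2^*b_2+X_3^*=0$ has roots $\bigl(X_2^*\pm\sqrt{(X_2^*)^2-4X_1^*X_3^*}\bigr)/(2X_1^*)$, whereas the displayed $P$ in \eqref{eq:minimizer-main} omits the factor $X_1^*$ in the denominator; your entries do not literally "produce the two parameter choices" as written, and one can check directly that the displayed $(P,Q)$ satisfy $\phi(b,A)=X^*$ only when $X_1^*=1$. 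Your derivation is the internally consistent one, so you should state your own closed form rather than claim agreement with the printed formula.
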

The proof is given in Appendix \ref{sec:appendix-iid}.
We observe that  the key LSA parameters are nontrivial in the optimizer, unlike in-context linear tasks with zero-mean Gaussian feature and task vectors, which result in a sparser structure where the first $d$ entries of $b$   and the last row of $A$ is zero~\citep{DBLP:conf/nips/AhnCDS23,huang2023context,zhang2023trained}. 

The independence assumption on the initial states in Theorem~\ref{thm:minimizer} can be relaxed, and the global minima of $f(P, Q)$ still maintain the same structure as in the independent case.

\begin{theorem}[\emphh{Global minima for generalized in-context initial states distribution}] \label{thm:minimizer_general}
Consider the in-context learning of length-2 Markov chains $\{(x_i,y_i)\}_{i=1}^{n}$ ($x_i,y_i\in \{0,1\}$) with transition probabilities  $p_{01},p_{11}\sim U(0,1)$.
Suppose the initial states $x_i$ are sampled from $Bernoulli(p)$ for some constant $p\in(0,1)$. 
Let $c_1 =  \sum_{i=1}^{n} \mathbb{E}[x_i\edit{x_{n+1}}],  c_2 = \sum_{i=1}^{n}\sum_{j=1,j\neq i}^{n}\mathbb{E}[x_ix_jx_{n+1}]$.

Define $X^*$ as 
$ H^{-1}
\begin{bmatrix}
  {c_1}/{2n}  &
 { c_1}/{3n} &
   {p}/{4}   + 
 {c_1}/{12n}  
\end{bmatrix}$, and $H$ is a symmetric matrix defined as follows  
\begin{align*}
 \begin{bmatrix}
 \frac{c_1}{n^2} + \frac{c_2}{n^2} & \frac{c_1}{2n^2} +
 \frac{c_2}{2n^2}
 &  \frac{c_1}{ 2n}  \\
&  \frac{c_1}{2n^2} 
+
 \frac{c_2}{3n^2} & \frac{(n+1)c_1}{4n^2}  +
 \frac{c_2 }{12n^2}   \\
& &  \frac{(2n+1)p}{6n} - \frac{(n-1)c_1}{6n^2} 
+ \frac{c_2}{6n^2} 
\end{bmatrix}.
\end{align*}
(repeating entries in the lower half triangle are omitted)

Then, substituting $X^*$ into~Eq.~\ref{eq:minimizer-main} gives a global minimizer of $f(P,Q)$.
\end{theorem}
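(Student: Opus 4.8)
The plan is to follow the two-stage strategy behind Theorem~\ref{thm:minimizer}: first obtain the closed form of the reparameterized global minimizer $X^{*}$ under the weaker hypothesis that the in-context initial states $x_i$ are $\mathrm{Bernoulli}(p)$ but possibly correlated, and then exhibit an explicit preimage of $X^{*}$ under $\phi$ that lives inside the sparse template of Eq.~\ref{eq:sparse-pq}, so that it realizes a genuine minimizer of $f$.

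\textbf{Step 1 (recompute $H$ and $b$).} Since $\tilde f(X)=\mathbb{E}\big[((x_{n+1}\otimes\bg)^{\top}X-y_{n+1})^2\big]$ is strictly convex by Lemma~\ref{lemma:strict-convexity}, its unique minimizer is $X^{*}=H^{-1}b$ with $H=\mathbb{E}[(x_{n+1}\otimes\bg)(x_{n+1}\otimes\bg)^{\top}]$ and $b=\mathbb{E}[(x_{n+1}\otimes\bg)\,y_{n+1}]$. For length-$2$ chains $d=1$ and $m=3$, so $\bg=\tfrac1n\sum_{i=1}^{n}(x_i,\,x_iy_i,\,y_i)$ (using $x_i^2=x_i$ and $y_i^2=y_i$) and $x_{n+1}\otimes\bg=x_{n+1}\bg$. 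I would expand each of the six distinct entries of $H$ and the three entries of $b$ into sums over pairs and triples of sample indices, and evaluate every expectation by conditioning first on the initial states and the prompt kernel---using $\mathbb{E}[y_i\mid x_i,\bP]=x_ip_{11}+(1-x_i)p_{01}$ and the identity $x_i\in\{0,1\}$ to collapse products---and then integrating $p_{01},p_{11}\sim U(0,1)$ (so $\mathbb{E}[p_{11}]=\tfrac12$, $\mathbb{E}[p_{11}^2]=\tfrac13$, $\mathbb{E}[p_{11}p_{01}]=\tfrac14$, with the analogous values for $p_{01}$). Because the initial states are no longer independent, the cross-sample contributions do not factorize and are retained symbolically as $c_1=\sum_i\mathbb{E}[x_ix_{n+1}]$ and $c_2=\sum_{i\neq j}\mathbb{E}[x_ix_jx_{n+1}]$; collecting coefficients then reproduces the $3\times3$ matrix $H$ and the vector $b=\big[\,c_1/2n,\;c_1/3n,\;p/4+c_1/12n\,\big]^{\top}$ stated in the theorem, and specializing $\mathbb{E}[x_ix_{n+1}]=p^2$, $\mathbb{E}[x_ix_jx_{n+1}]=p^3$ recovers Theorem~\ref{thm:minimizer}.

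\textbf{Step 2 (invert $\phi$).} For $d=1$ the map $\phi$ sends $(b_1,b_2,A_{1,1},A_{2,1})$ to $\big(b_1A_{1,1},\;b_1A_{2,1}+b_2A_{1,1},\;b_2A_{2,1}\big)$, i.e.\ to the symmetric matrix $\tfrac12(bA^{\top}+Ab^{\top})$. Fixing the gauge $b_1=1$ solves the first coordinate with $A_{1,1}=X_1^{*}$, the second coordinate forces $A_{2,1}=X_2^{*}-X_1^{*}b_2$, and the third reduces to a scalar quadratic in $b_2$; its two roots, together with the induced $A_{2,1}$, are the $\pm$ expressions recorded in the $(P,Q)$ of Eq.~\ref{eq:minimizer-main}. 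Substituting $(P,Q)$ back into the template of Eq.~\ref{eq:sparse-pq} and using $f(P,Q)=\tilde f(\phi(b,A))=\tilde f(X^{*})=\min_{X}\tilde f$, together with the fact that entries of $P,Q$ outside the template do not influence the model output, shows that $(P,Q)$ is a global minimizer of $f$.

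\textbf{Expected obstacle.} The main effort is the moment bookkeeping in Step~1: each entry of $H$ and $b$ is a sum over up to triples of indices, and each term must be reduced via the binary-state identities and the kernel integrals without sign or coefficient slips---the correlated-initial-state case being precisely where such slips are likeliest---so I would package it as a lemma computing $\mathbb{E}[x_{n+1}\bg_r\bg_c]$ and $\mathbb{E}[x_{n+1}y_{n+1}\bg_r]$ in closed form. A secondary subtlety is that $\phi$ is not surjective even for $d=1$: its image is exactly the symmetric $2\times2$ matrices of nonpositive determinant, equivalently triples with $X_2^{*2}\ge 4X_1^{*}X_3^{*}$. Hence one must check that $X^{*}=H^{-1}b$ with the above $H,b$ actually lands in this set, so that the square root in Eq.~\ref{eq:minimizer-main} is real; I expect this to follow from the explicit formulas (or from a positivity argument in $c_1,c_2,p$), but it should be verified rather than assumed.
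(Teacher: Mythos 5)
Your proposal follows essentially the same route as the paper: it derives $X^{*}=H^{-1}b$ from the normal equations of the strictly convex reparameterized loss, evaluating the moments by conditioning on the initial states and the kernel while keeping the non-factorizing cross-sample terms symbolic as $c_1,c_2$, and then inverts $\phi$ via the gauge $b_1=1$ and a scalar quadratic to land on Eq.~\ref{eq:minimizer-main}. Your flagged obstacle about verifying $X_2^{*2}\ge 4X_1^{*}X_3^{*}$ (so that the square root is real and $X^{*}$ lies in the image of $\phi$) is a legitimate point that the paper's own proof also leaves implicit rather than checking.
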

The proof is deferred to~Appendix~\ref{sec:appendix-general}. The global minimizer is determined by the joint expectation of the query and in-context samples.

While the global minimizer $X^*$ of our reparameterized loss function $\tilde{f}(X)$ can be characterized analytically by convexity, it remains unclear whether such a solution can be realized by the actual transformer parameters $(b,A)$. In particular, the correspondence between $X^*$ and feasible transformer configurations is highly nontrivial as the input space exhibits structured dependencies. This motivates a fundamental question: given an optimal representation $X^*$ in the extended parameter space, is it computationally feasible to recover any compatible transformer parameters that achieve it? The following result addresses this question and reveals a structural limitation of 1-layer LSA transformers.  

\begin{theorem}[\emphh{NP-hardness of transformer parameter reconstruction from reparameterization}]\label{thm:nphard}
Let $d \in \mathbb{Z}_{\geq 1}$ denote the dimension of the in-context data (the length of the Markov chain minus one), and let $m = \frac{(d+1)(d+2)}{2}$. Given $X \in \mathbb{R}^{dm}$ representing the global minimizer of the reparameterized loss, solving for 1-layer LSA parameters $(b,A)$ that satisfy the reparameterization equation Eq.~\ref{eq:reparam}  is NP-hard with respect to $d.$
\end{theorem}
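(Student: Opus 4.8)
The aim is a polynomial-time reduction from a known NP-complete problem (e.g.\ SUBSET-SUM, or equivalently feasibility of a bilinear system, which is literally what Eq.~\ref{eq:reparam} is) to the task of recovering $(b,A)$ from $X$; we treat the decision version ``does a solution exist?'', from which the search version inherits hardness. First I would rewrite $\phi$ transparently. The $dm$ entries of $X$ split into $d$ blocks of size $m$, one per column $j\in[d]$ of $A$; writing $a^{(j)}$ for the $j$-th column of $A$, block $j$ of the equation $\phi(b,A)=X$ asserts exactly that the on-or-above-diagonal part of the symmetric matrix $b\,(a^{(j)})^{\top}+a^{(j)}\,b^{\top}$, with the diagonal halved, equals the prescribed block. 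Thus $\phi(b,A)=X$ is the system $\widehat X^{(j)}=b\,(a^{(j)})^{\top}+a^{(j)}\,b^{\top}$ over all $j$, where $\widehat X^{(j)}$ is the reshaped symmetric matrix that doubles the diagonal and $b$ is shared. The structural point is that $b$ couples the blocks while each $a^{(j)}$ is unconstrained across blocks and enters only linearly: fixing any coordinate $p$ with $b_p\neq 0$, the $p$-th row of $\widehat X^{(j)}$ writes $a^{(j)}$ as an affine function of $b$, and substituting it back turns the remaining entries of block $j$ into explicit \emph{quadratic} equations in $b\in\mathbb R^{d+1}$ alone. Hence solvability of $\phi(b,A)=X$ is equivalent to solvability of an explicit quadratic system in $b$, and conversely, by choosing the free entries of $X$ blockwise we get fine control over which quadratics appear.

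\textbf{The gadget.} Given a SUBSET-SUM instance $w_1,\dots,w_n\in\mathbb Z$, I would take $d=\mathrm{poly}(n)$ and engineer $X$ so that $\phi(b,A)=X$ forces $b$ to have the shape $b=(1,\beta_1,\dots,\beta_n,\text{auxiliary})$ with each $\beta_i\in\{-1,+1\}$ and $\sum_{i}w_i\beta_i=0$. Normalizing the pivot $b_1=1$ costs nothing because $\phi(tb,t^{-1}A)=\phi(b,A)$. With pivot $p=1$, the substitution above makes the diagonal constraint of a block read $X^{(j)}_{11}\beta_i^{2}-X^{(j)}_{1i}\beta_i+X^{(j)}_{ii}=0$, which a ``Boolean'' block sets to $\beta_i^{2}=1$; additional ``arithmetic'' blocks are then used to encode $\sum_i w_i\beta_i=0$ (expanding its square into the $\beta_i\beta_k$ monomials that appear as off-diagonal coefficients). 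The obstruction is that every such block simultaneously pins its own off-diagonal entries to explicit bilinear forms in $b$, which would impose spurious relations among the $\beta_i$; I would neutralize this by reserving a separate pool of auxiliary coordinates of $b$ and routing all unwanted coupling into columns dedicated to those coordinates, so that the only constraints surviving on $\beta_1,\dots,\beta_n$ are $\beta_i^{2}=1$ together with the single arithmetic constraint. Since $d=\mathrm{poly}(n)$, the whole construction is polynomial.

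\textbf{Correctness and the crux.} Completeness is direct: a valid subset gives $\beta$, hence $b$, and each $a^{(j)}$ is then recovered from the affine equations above, so $\phi(b,A)=X$. For soundness, any $(b,A)$ with $\phi(b,A)=X$ must, after the scale normalization and after discarding the finitely many degenerate branches in which the chosen pivot coordinate of $b$ vanishes (handled either by repeating the argument with another pivot, or by building the gadget so that $b_1\neq0$ is forced), have $b$ of the prescribed form with $\beta_i^{2}=1$ and $\sum_i w_i\beta_i=0$, yielding a valid subset; these steps use only the explicit quadratic system and elementary linear algebra. The main obstacle is the gadget design itself: because $\phi$ is rigid — fixing $b$ determines \emph{every} column of $A$, so one cannot freely stipulate constraints on $b$ — the delicate part is arranging the auxiliary-coordinate bookkeeping so that the intended constraints $\beta_i^{2}=1$ and $\sum_i w_i\beta_i=0$ are exactly the ones that survive, with the remainder of the system guaranteed to be simultaneously consistent yet not vacuous. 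That is where essentially all the difficulty of the proof concentrates.
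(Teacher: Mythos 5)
Your reformulation of $\phi(b,A)=X$ as the system $\widehat X^{(j)} = b\,(a^{(j)})^{\top} + a^{(j)}\,b^{\top}$ for $j\in[d]$ (with $\widehat X^{(j)}$ the symmetrized block with doubled diagonal) is correct, and your route --- a direct reduction from SUBSET-SUM to this \emph{specific} system --- is genuinely different from the paper's, which instead embeds the reconstruction problem into a broader class of bilinear feasibility problems with arbitrary coefficient matrices $D^{(r)}$ and reduces bilinear separability into that broader class. However, the step you yourself flag as the crux --- ``routing all unwanted coupling into columns dedicated to auxiliary coordinates so that only $\beta_i^{2}=1$ and $\sum_i w_i\beta_i=0$ survive'' --- cannot be carried out, for a structural reason. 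Every block $j$ contains an equation for \emph{every} pair $(i,k)$; you have no freedom over which constraints a block imposes, only over their right-hand sides. With the pivot $b_1=1$, the Boolean constraint $\beta_i^{2}=1$ requires the block to be genuinely quadratic in $b_i$, i.e.\ $\widehat X^{(j)}_{11}\neq 0$; but then the off-diagonal entries of the \emph{same} block read $\widehat X^{(j)}_{11}\, b_i b_k = \widehat X^{(j)}_{1k} b_i + \widehat X^{(j)}_{1i} b_k - \widehat X^{(j)}_{ik}$, which pins every pairwise product $b_i b_k$ to a prescribed affine function of $(b_i,b_k)$. Together with the pinned squares, this determines $b$ up to a global sign, so no residual combinatorial choice remains in which to encode a subset.

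The obstruction is intrinsic and rules out any gadget of this kind. A nonzero symmetric matrix of the form $b\,u^{\top}+u\,b^{\top}$ has rank at most $2$, and its factorizations are exactly the factorizations of a binary quadratic form into two linear forms: they are unique up to swapping the two factors and the scaling $(b,u)\mapsto(\alpha b,\alpha^{-1}u)$. Hence any single nonzero block already confines $b$ to at most two lines through the origin; fixing a candidate line, each $a^{(j)}$ is then recovered (or refuted) by solving a linear system. This yields a polynomial-time decision procedure for $\phi(b,A)=X$ --- consistent with the closed-form factorization in Theorem~\ref{thm:minimizer} for $d=1$ --- so the solution set can never contain the $2^{n}$ discrete branches a SUBSET-SUM gadget needs. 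Your opening premise that Eq.~\ref{eq:reparam} ``is literally'' a bilinear feasibility system is where the argument goes astray: general bilinear feasibility is NP-hard, but this instance class has the shared rank-two outer-product structure that collapses it. (The paper's own reduction sidesteps this rigidity only by granting itself arbitrary matrices $D^{(r)}$, i.e.\ by proving hardness of the enclosing class rather than of the instances actually generated by $\phi$; your attempt to target $\phi$ itself is the more honest one, and it is precisely what exposes the difficulty.)
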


\begin{proof} 
(sketch) 
We show that solving the reparameterization equation for transformer parameters $(b,A)$ belongs to a general class of bilinear feasibility problems. Specifically, we recast the system as a collection of bilinear constraints $b^\top D^{(r)} A_{:,j} = X_r$ $(r\in[dm])$, where each matrix $D^{(r)}$  encodes the structure of the corresponding term in $\phi(b,A)$. This allows us to further express the problem as a bilinear program with an objective and constraints over $(b, A)$.
 
To establish hardness, we reduce from the bilinear separability problem, as formulated in system (13) of Theorem 3.1 in~\cite{bennett1993bilinear}, which asks whether two point sets in $\mathbb{R}^n$ can be strictly separated by a pair of hyperplanes such that one set occupies exactly three of the four induced regions. This problem is known to be NP-complete.

We construct a variant of our bilinear program through two standard transformations: variable splitting (to impose nonnegativity) and variable fixing (to introduce linear structure). A subset of transformer parameters is assigned to represent the decision variables in the separability problem. Other variables are fixed to constants to encode the geometric structure and auxiliary terms of the separability constraints. Through this construction, each constraint in the separability problem is converted to a constraint in our system. 

Our bilinear program subsumes the bilinear separability formulation from~\cite{bennett1993bilinear}. First, the objective functions are equivalent: we replicate their bilinear objective by selecting a corresponding set of $(b, A)$ variables and constructing a matrix $D^{(r)}$ that enforces the same multiplicative interaction. Second, for every constraint in the separability problem, there exists a corresponding constraint in our system constructed via a combination of variable assignment and fixing. Finally, our program includes additional constraints beyond those in the separability setting, arising from the full reparameterization of the transformer system. This establishes that our recovery problem is at least as hard as bilinear separability, and therefore NP-hard.
\end{proof}

The complete proof is provided in Appendix~\ref{sec:appendix-np}, where we show that the dimension $d$ scales polynomially with the parameters of the separability program. This indicates that the problem remains NP-hard with respect to its dimension, as the reduction preserves the scaling of problem size.

In prior work~\cite{DBLP:conf/nips/AhnCDS23}, the reparameterized loss under an i.i.d. linear regression setting admits a structural simplification. Under Gaussian inputs, the optimal solution $X^*$ becomes sparse: each group of reparameterized variables associated with a column of $A$ contains only a single nonzero entry. This sparsity allows the full bilinear system to reduce to $d$ independent constraints, each involving a single bilinear term between $b$ and $A_{:,j}$, which can be analytically solved.
In our setting with Markovian input sequences, the reparameterized solution $X^*$ does not exhibit such sparsity. The temporal structure of the data introduces statistical dependencies and nonzero mean patterns that persist across in-context samples, leading to dense coupling between variables in the bilinear system. This increased structural complexity motivates the hardness result above.

Therefore, even if the global optimum can be attained in an extended parameter space, it may not correspond to any realizable configuration in the original transformer parameter space. Moreover, since the reparameterization defines an onto mapping from the transformer's native parameter space $(P,Q)$ to the enlarged space $X$, the optimal value $\tilde{f}^* := \min_X \tilde{f}(X)$ provides a lower bound for the best performance achievable by the transformer, i.e., $\tilde{f}^* \leq \min  f(P,Q)$. This reveals an inherent architectural limitation: the 1-layer transformer's representational capacity may be insufficient to express dynamical functions induced by structured inputs, such as those governed by shared Markovian dynamics.
Example~\ref{ex:length3} in Appendix~\ref{sec:appendix-longer-length} illustrates the absence of a corresponding LSA parameter for $X^*$ in longer Markov chains. Nevertheless, the optimal configuration of $(P,Q)$ maintains the same structure as in the length-2 case.

\section{Multilayer LSA  Implements Preconditioned Multi-objective Optimization}\label{sec:preconditioned}

We now demonstrate that the forward pass of an $L$-layer LSA can be interpreted as preconditioned multi-objective optimization when trained within the parameter space with the following structure:
\begin{align}
P =& \begin{bmatrix}
0_{d\times (d+1)} \\ 
b_l
\end{bmatrix},\quad 
Q = -\begin{bmatrix}
\bar{A}_l & 0_{d+1} \\
a_l&0
\end{bmatrix}.\label{eq:asp-forward}
\end{align}

We identify two groups of objective functions, $R_1, R_2: \mathbb{R}^{d} \rightarrow \mathbb{R}^{d+1}$, for the linear model $w^{\top}x_i$ ($w \in \mathbb{R}^{d}$), such that $\TF_L$ performs gradient descent on these objectives, preconditioned by $b_l, \bar{A}_l, a_l$. Notably, this result does not rely on taking the expectation of the objective and holds for any given prompt instance.
\begin{theorem}[\emphh{Forward pass as minimizing multiple objectives}]\label{thm:forward}  
Consider the $L$-layer transformer parameterzed by $b_l,A_l=-\begin{bmatrix}
 \bar{A_l}\\\edit{a_l^{\top}}
\end{bmatrix}$ where $b_l\in\mathbb{R}^{d+1},\bar{A}_l\in\mathbb{R}^{d\times d},a_l\in\mathbb{R}^{d}$ for $l\in [L]$.
Let $y_{n+1}^{(l)}$ be the bottom-right entry of the $l$th layer output.
Then 
$y_{n+1}^{(l)} = 
\langle w_l ,
x_{n+1} \rangle 
$
where $w_l$ is \edit{optimizing two multi-objective problems $R_1,R_2:\mathbb{R}^{d }\rightarrow \mathbb{R}^{d+1}$, iteratively defined as follows: }
$w_0 =0$ and 
{\small 
\begin{align*}
&w_{l+1} ^{\top}  = w_{l}^{\top}- b_l^{\top} \left(\nabla R_1(w_{l}) \bar{A}_l+   \nabla R_2(w_{l} ) \begin{bmatrix}0_{(d-1) \times d}\\a_l^{\top}\end{bmatrix}\right),\    \text{where }
R_1(w ) = \frac{1}{n}\sum_{j=1}^{n}
\begin{bmatrix}
- 
x_i \otimes \langle w ,x_j \rangle \\
(\langle w ,x_{j}\rangle - y_{j})^2
\end{bmatrix}, \\
&R_2(w) =\frac{1}{n}\sum_{j=1}^n  
   \begin{cases}
\begin{bmatrix}
-(w_{d}(y_j-\langle w_{:d-1},x_{j,:d-1}\rangle)+\frac{w_{d}^2}{2} x_{j,d}) x_j \\ 
\frac{1}{3x_{j,d}}(y_j - w^{\top} x_j)^3
\end{bmatrix} \ \text{if $x_{j,d}\neq 0$}\\ 
\begin{bmatrix}
-(w_{d}(y_j-\langle w_{:d-1},x_{j,:d-1}\rangle)+\frac{w_{d}^2}{2} x_{j,d}) x_j \\ 
-(y_j -\langle w_{:d-1},x_{j,:d-1}\rangle)^2 w_d 
\end{bmatrix}\ \text{if $x_{j,d}=0$}
\end{cases}
\end{align*} }
\end{theorem}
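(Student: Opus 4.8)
The plan is an induction on the layer index $l$ that tracks the exact column structure of $Z_l$ through the forward pass and then reads off the claimed recursion from the per-layer update. Throughout, write $z_k^{(l)}$ for the $k$-th column of $Z_l$, $y_k^{(l)}$ for its bottom entry, and $b_l=[\bar b_l;\beta_l]$ with $\bar b_l\in\mathbb{R}^d$, $\beta_l\in\mathbb{R}$.

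First I would isolate the two sparsity facts that make the layer map collapse: $P_l$ has only its $(d+1)$-st row nonzero (equal to $b_l^\top$), and $Q_l$ has a zero last column, so $Q_l z_k^{(l)}$ depends on $z_k^{(l)}$ only through its frozen first-$d$ block $x_k$. Hence $Z_{l+1}-Z_l$ has only its bottom row nonzero (the $x$-blocks of every column stay frozen), and the increment of column $k$'s bottom entry is a linear function of $x_k$ with a coefficient common to all $k$:
\[
y_k^{(l+1)}=y_k^{(l)}+\langle\Delta_l,x_k\rangle,\qquad
\Delta_l:=-\tfrac1n\sum_{i=1}^n\big(b_l^\top z_i^{(l)}\big)\big(\bar A_l^\top x_i+a_l\,y_i^{(l)}\big),
\]
where for $k=n+1$ the same formula holds, since the mask $M$ only restricts the summation index $i$ to $i\le n$, not the updated column. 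Since $y_{n+1}^{(0)}=0$ and $w_0=0$, setting $w_{l+1}:=w_l+\Delta_l$ and telescoping yields both invariants at once: $y_{n+1}^{(l)}=\langle w_l,x_{n+1}\rangle$ and $y_i^{(l)}=y_i+\langle w_l,x_i\rangle$ for $i\le n$ (the linear functional $w_l$ is well defined because $\{0,1\}^d$ spans $\mathbb{R}^d$).

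Second, I would substitute $b_l^\top z_i^{(l)}=\bar b_l^\top x_i+\beta_l\,y_i^{(l)}$ and the invariant $y_i^{(l)}=y_i+\langle w_l,x_i\rangle$ into $\Delta_l$ and expand, grouping terms by (a) contraction with $\bar A_l$ versus with $a_l$, and (b) degree in $w_l$. The $\bar A_l$-contracted part produces the fixed matrix $\tfrac1n\sum_i x_ix_i^\top$ (from the $\bar b_l$ factor) and the affine vector $\tfrac1n\sum_i(y_i+\langle w_l,x_i\rangle)x_i$ (from the $\beta_l$ factor); these are exactly $b_l^\top$ applied to the Jacobian of $R_1$, whose first $d$ rows supply the least-squares preconditioner (the Jacobian of $w\mapsto x_i\otimes\langle w,x_i\rangle$) and whose last row supplies the squared-error residual gradient. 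The $a_l$-contracted part carries the extra factor $y_i^{(l)}$, hence is quadratic in $y_i^{(l)}$ and, after inserting the invariant, cubic in $w_l$; this is matched by $\partial R_2/\partial w_d$, and since $\bigl[\begin{smallmatrix}0_{(d-1)\times d}\\ a_l^\top\end{smallmatrix}\bigr]$ annihilates every column of $\nabla R_2$ except the $w_d$-column, only $\partial R_2/\partial w_d$ is ever needed. The substep to handle with care is checking that the two-branch definition of $R_2$ (split on $x_{j,d}=0$) is a consistent antiderivative: the $\tfrac1{3x_{j,d}}$ prefactor and the $x_{j,d}=0$ substitution are arranged precisely so that $\partial/\partial w_d$ of the $j$-th summand of $R_2$ is one well-defined quadratic-in-$w$ vector across both branches, reproducing exactly the terms from the expansion (the $\tfrac12,\tfrac13$ normalizations are there to make the primitive exact).

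Assembling the $\bar A_l$- and $a_l$-parts gives $w_{l+1}^\top=w_l^\top-b_l^\top\big(\nabla R_1(w_l)\bar A_l+\nabla R_2(w_l)\bigl[\begin{smallmatrix}0_{(d-1)\times d}\\ a_l^\top\end{smallmatrix}\bigr]\big)$, i.e.\ one step of gradient descent on the vector objectives $R_1,R_2$ in which $b_l$ scalarizes the $d{+}1$ component objectives while $\bar A_l,a_l$ precondition; together with the invariant this proves $y_{n+1}^{(L)}=\langle w_L,x_{n+1}\rangle$. I expect the only genuine difficulty to be this last matching step: computing the gradients of the piecewise $R_2$, keeping the $x_{j,d}=0$ and $x_{j,d}\neq0$ cases and every cross term ($\bar b_l$ with $a_l$, $\beta_l$ with $\bar A_l$, $\beta_l$ with $a_l$) in the correct slot of $[\nabla R_1\mid\nabla R_2]$, and verifying $R_2$ is a genuine antiderivative of the relevant quadratic. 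The structural induction of the first two paragraphs is routine once the two sparsity observations about $P_l,Q_l$ are in hand.
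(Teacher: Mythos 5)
Your proposal is correct and follows essentially the same route as the paper's proof: freeze the $x$-blocks via the sparsity of $P_l,Q_l$, establish the linear invariant $y_i^{(l)}=y_i+\langle w_l,x_i\rangle$ (the paper phrases this as two claims about a map $g(x,y,k)$ plus a column-copying argument, while you get it in one induction by noting the increment is a common linear functional $\Delta_l$ of $x_k$), then split $\Delta_l$ into the $\bar A_l$- and $a_l$-contracted parts and recognize them as $b_l^\top\nabla R_1$ and $b_l^\top\nabla R_2$ restricted to the $w_d$-column. Your identification of the delicate step — verifying that the piecewise $R_2$ is a consistent antiderivative across the $x_{j,d}=0$ and $x_{j,d}\neq 0$ branches — is exactly where the paper's construction of $\bar G_2$ does its work.
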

The full derivation is provided in Appendix~\ref{sec:appendix-equivalence}.

\begin{wrapfigure}{r}{0.65\textwidth}
  \centering
  \vspace{-10pt} 
  \includegraphics[width=0.65\textwidth]{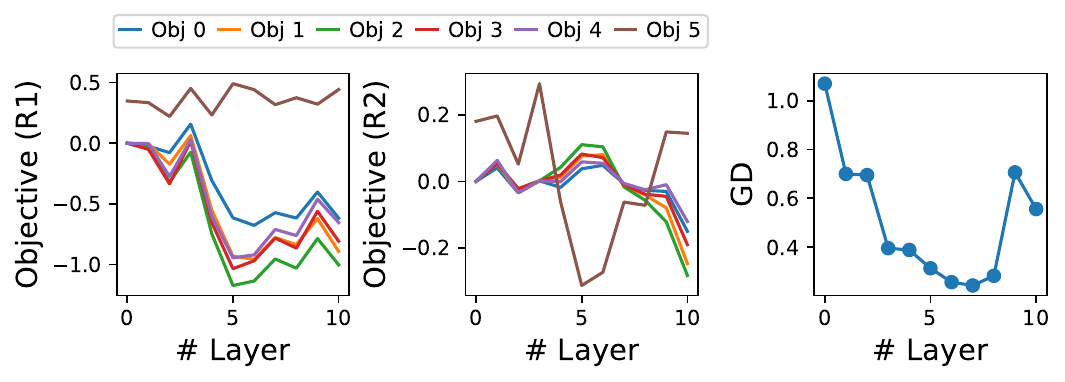}
  \caption{Preconditioned multi-objective optimization performance of a trained LSA across layers. (a) Values for the six objectives in the first group ($R1$), (b) Values for the six objectives in the second group ($R1$), and (c) Generational Distance (GD) measuring deviation from the Pareto front. The model is a 10-layer LSA trained on 100 in-context Markov chains of length 5. }
  \label{fig:multiobj}
  \vspace{-10pt} 
\end{wrapfigure} 

In the above expression, $w_{:d-1},x_{j,:d-1}$ denotes the first $d-1$ entries of $w$ and $x_j$, respectively.
The first $d$ terms in $R_1$ ($x_{j,k} w^\top x_j$ for $k\in[d]$) capture the interaction between the $k$-th state of $x_j$ and the linear combination of states determined by $w$. These terms assign a weight to each state's contribution to the overall objective, emphasizing their individual roles within the sequence. The final term, $(w^\top x_j - y_j)^2$ ensures alignment between the linear model’s prediction and the target state $y_j$.

$R_2$ specifically emphasizes the role of $w_d$, aligning with the structure of the last preconditioning matrix, which focuses updates on the last parameter. The first $d$ terms in $R_2$ scale the target state $y_j$ by $x_j$ and $w_d$, with additional quadratic terms like $\frac{w_d^2}{2} x_{j,d}$. These terms capture the influence of $w_d$, the alignment between $y_j$ and partial prediction $\langle w_{:d-1}, x_{j,:d-1} \rangle$, and the value of $x_{j,d}$, emphasizing key components of the input sequence. The final objective is in cubic penalty form $\frac{1}{3x_{j,d}} (y_j - w^\top x_j)^3$ when $x_{j,d} \neq 0$, emphasizing sequences with smaller $x_{j,d}$. When $x_{j,d} = 0$, the penalty changes to a quadratic term $-(y_j - \langle w_{:d-1}, x_{j,:d-1} \rangle)^2 w_d$, focusing solely on aligning $y_j$ with the partial prediction based on $w_{:d-1}$. This adaptation ensures that the optimization prioritizes the appropriate components of the input sequence depending on the presence or absence of $x_{j,d}$. Furthermore, when $x_{j,d} = 0$, the final objective becomes convex if $w_d < 0$ and concave otherwise.

\paragraph{Empirical validation.}\label{sec:exp-global-min}

We evaluate the role of transformer weights in preconditioned multi-objective optimization, as proved in Theorem~\ref{thm:forward}. We train a 10-layer LSA model within the parameter space specified by Eq.~\ref{eq:asp-forward} on 100 in-context Markov chains of length 5, sampled with an initial probability of 0.5. As the forward pass progresses through deeper layers, we track the values of the multi-objectives and measure Generational Distance (GD) to quantify deviation from the Pareto front.

Fig.~\ref{fig:multiobj} demonstrates that initially, LSA weights move closer to the Pareto front, indicating effective multi-objective optimization. However, beyond a certain depth, all objective values begin to increase simultaneously, suggesting that the optimization process  deteriorates rather than balancing competing objectives. This behavior implies that while TF initially optimizes multiple objectives, deeper layers may prevent sustained improvement, potentially due to the nature of the restricted parameter space.

 \begin{wrapfigure}{r}{0.4\textwidth}
  \centering
  \includegraphics[width=0.38\textwidth]{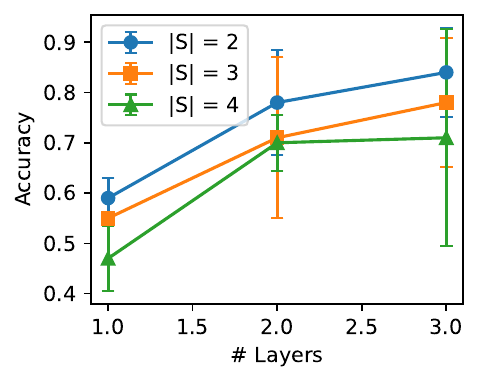}
  \caption{Accuracy of LSA models trained on Markovian prompts w.r.t. number of layers. Each curve corresponds to a different state space size. }
  \label{fig:n_layer}
\end{wrapfigure}
To investigate how model depth and data complexity affect performance, we conduct synthetic experiments varying the number of layers and the size of the state space. Prompts are constructed from Markov chains with $|\mathcal{S}| \in \{2,3,4\}$, each consisting of 10 in-context sequences and one query of length 6. We train LSA models with $L \in \{1,2,3\}$ layers using gradient descent, and report mean accuracy over 5 seeds (see Fig.~\ref{fig:n_layer}). The results show  1-layer LSA performs poorly, close to random guessing, regardless of the state space size. As the number of layers increases, accuracy improves significantly. For instance, in the $|\mathcal{S}|=3$ case, the mean accuracy increases from $0.55$ (1-layer) to $0.98$ (3-layer). This aligns with our theoretical findings: while a single-layer transformer struggles to capture structured dynamics, slightly deeper architectures allow the forward computation to approximately reduce squared loss, as revealed by our multi-objective interpretation. 
Additional experiments using self-attention models and GPT-2 architectures under varied data and architectural configurations are provided in Appendix~\ref{sec:appendix-compare},\ref{sec:additional-exp}. 

\vspace{-0.3em}
{\vspace{-0.3em}}
\section{Conclusion}\label{sec:conclusion}

In this work, we study ICL of Markovian dynamical functions using a LSA model. Focusing on one-layer transformers, we analyze the loss landscape induced by prompts constructed from first-order binary Markov chains. Our results show that the global minimum adapts to the underlying dynamics and deviates significantly from the sparsity structures typically observed in i.i.d. regression tasks.

Despite the existence of an analytically characterized optimum in an extended parameter space, we prove that recovering corresponding transformer parameters is NP-hard. This establishes a fundamental limitation: one-layer LSA transformers may be unable to realize optimal solutions, even when those solutions are simple and fully specified. The result reveals a representational and computational gap between what is learnable in principle and what the model can express.

Finally, we interpret the forward pass of multilayer LSA models as performing a form of multi-objective optimization, which includes squared loss as one of the components. This formulation offers insight into why deeper architectures empirically outperform shallow ones when learning structured dynamical patterns.

\paragraph{Limitation.} 
Our theoretical analysis focuses on 1-layer LSA models and first-order Markov chains. These choices allow us to isolate key structural effects and obtain analytically tractable results, such as characterizing the global minimum and proving NP-hardness of parameter recovery. However, this framework does not capture the full expressivity of modern transformers, which typically involve nonlinear attention mechanisms, multi-layer architectures, and learned positional encodings. Moreover, real-world dynamical processes often involve higher-order dependencies or more complex causal structures beyond simple Markovian assumptions. Extending our framework to handle richer classes of causal data, deeper networks, and nonlinear components such as softmax attention and MLP blocks remains an important direction for future work.

\section{Acknowledgements}
This work was supported by the IBM-Rensselaer Future of Computing Research Collaboration and the U.S. National Science Foundation project 2047488.

\bibliographystyle{unsrt}


\clearpage 
\appendix
\setlist[itemize]{nosep , leftmargin=*}
\newcommand{\highlight}[1]{{\textcolor[rgb]{0.5, 0.5, 0.5}{  #1}}} 
\definecolor{expand}{RGB}{0,150,255}
\definecolor{mypurple}{HTML}{7A649C}
 
\colorlet{savedleftcolor}{.}

\section*{Appendix}
\startcontents[section]
\printcontents[section]{l}{1}{\setcounter{tocdepth}{2}}

\section{Comparative Analysis of Setups\label{sec:appendix-compare}} 

We train two variants of single-layer self-attention models to in-context learn length-2 Markov chains using gradient descent over 10K random prompts.
\begin{enumerate}
    \item Variant 1 (LSA$_{P,Q}^{(\text{sparse})}$): LSA (Eq.~\ref{eq:lsa}) parameterized by sparse $P,Q$ (Eq.~\ref{eq:sparse-pq})
    \begin{align*}
    \begin{cases}
     Z_{1} =  Z_{0} + \frac{1}{n} P ZM(Z^{\top} QZ )  \\
     P,Q \in \{ ( \begin{bmatrix}
     0 & 0 \\ 
     b_1 & b_2
    \end{bmatrix} ,   \begin{bmatrix}
    a_1 & 0 \\ 
    a_2 & 0 
    \end{bmatrix}\given a_i,b_i\in \mathbb{R} \}
    \end{cases}
    \end{align*}

    \item Variant 2 (LSA$_{P,Q}$): LSA (Eq.~\ref{eq:lsa}) parameterized by dense $P,Q$  
    \begin{align*}
    \begin{cases}
    Z_{1} = Z_{0} + \frac{1}{n} P ZM(Z^{\top} QZ ) \\
    P,Q\in \mathbb{R}^{2\times 2}
    \end{cases}
    \end{align*} 
\end{enumerate} 
  
\begin{figure}[th]
    \centering 
    \includegraphics[width=0.5\textwidth]{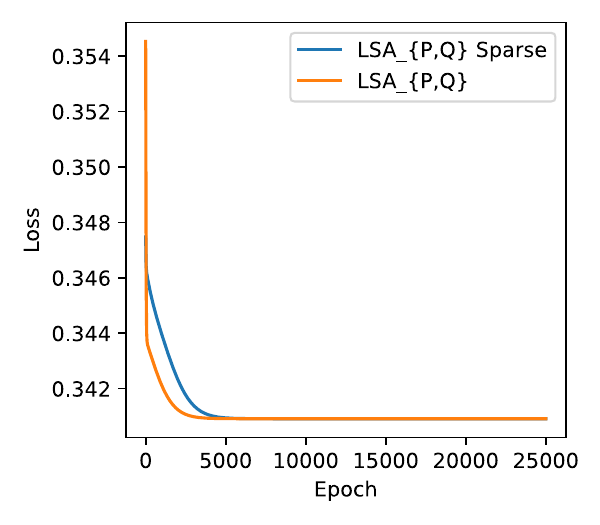}   
    \caption{Training loss w.r.t epochs for  variants of the self-attention models, evaluated on 100 random prompts, each containing 30 in-context samples and a query sequence.}
    \label{fig:loss-3variants}
\end{figure}

To justify the choice of the sparse parameter space, we plot the training loss curve of the above three variants in Fig.~\ref{fig:loss-3variants}. The loss value is computed as the mean squared error for the query sequence averaged over $B$ random prompts.
We set $B=100$ and use 30 in-context examples for each prompt.
The in-context sequences are Markov chains with initial probability 0.3 and transition probabilities $p_{01},p_{10}$ sampled from $U(0,1)$.
The results demonstrate that the loss curves under variant 1 and 2 converge to nearly the same value, indicating that the sparse and dense parameter matrices perform equivalently for LSA.

\section{Additional Experiments\label{sec:additional-exp}}

In this section, we first illustrate the limitation of 1-layer LSA as the problem dimension $d$ increases, consistent with the NP-hardness analysis in Theorem~\ref{thm:nphard}. We then empirically explore the expressivity of deeper transformer architectures by evaluating GPT-2 models with varying numbers of layers. The results show that performance improves as the number of layers increases, suggesting that additional depth helps mitigate the limitations observed in shallow models.

\subsection{Limitation of 1-Layer LSA}\label{sec:appendix-exp-limitation}
Theoretically, we showed that while the reparameterized model admits an analytically computable global minimizer, recovering the corresponding LSA parameters is NP-hard due to bilinear coupling. The hardness stems from a bilinear feasibility program whose dimension scales polynomially with the dimension $d$ of the input Markov chain.
To empirically verify our theoretical findings on the limitations of 1-layer LSA, we measure how the performance gap between the reparameterized model and the LSA model evolves with increasing dimension $d$.

For data, we construct each prompt with $n + 1 = 101$ sequences, where each sequence represents a binary Markov chain of length $d + 1$. The first $n$ sequences serve as in-context examples, and the final sequence $x_{n+1}$ is a query with its last token $y_{n+1}$ masked. The prediction task is to infer $y_{n+1}$ based on the shared temporal dynamics among chains. The initial states $x_{i,1}$ are independently sampled from $Bernouli(0.5)$.  All transitions within a prompt are governed by a shared transition kernel $\bP \in \mathbb{R}^{2 \times 2}$, sampled from the same distribution as the previous experiment. For each dimension $d \in \{1, 2, \dots, 10\}$, we generate a batch of $B = 1000$ prompts. The LSA model is trained using the Adam optimizer for 1000 iterations with a learning rate of 0.01, minimizing the mean squared error between predicted and true labels. As a reference, we analytically compute the prediction from the reparameterized linear model using least squares regression. Accuracy is computed by rounding each predicted value to the nearest integer (0 or 1) and comparing it to the binary label. The experiment is repeated across 5 random seeds.
 
As shown in Fig.~\ref{fig:gap}, the reparameterized model consistently outperforms the LSA model, with the accuracy gap grows as the dimension $d$ increases. Although both models attempt to fit the same underlying process, the hardness of parameter recovery suggests degraded performance at higher dimensions. In other words, the computational intractability has practical implications for expressivity and learning effectiveness in 1-layer LSA models.

\begin{figure}[th]
    \centering
    \includegraphics[width=0.5\textwidth]{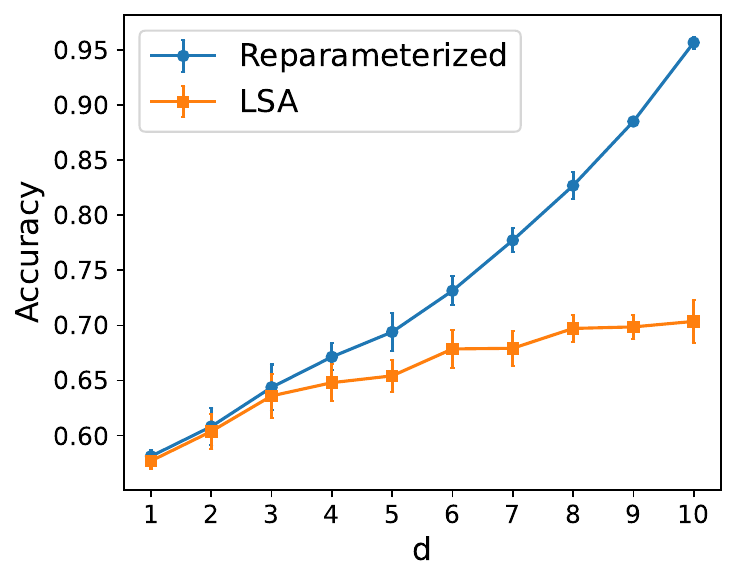}  
     \caption{Accuracy versus dimension $d$ for the reparameterized model (blue) and the 1-layer LSA model (orange). Results show mean and standard deviation computed over 5 random seeds. The growing performance gap highlights the increasing difficulty for LSA to recover the optimal solution as the dimension increases. }
    \label{fig:gap}
\end{figure} 
 
\subsection{Capacity of Multi-Layer Transformers\label{sec:empirical}}
 
To further investigate ICL for Markovian dynamics learning for practical models, we conduct empirical studies using GPT-2-based models to answer the following research questions\footnote{Our code is available at~\url{https://github.com/dingyanna/ICL-Markov-Dynamics}}. 
\begin{enumerate}[label=\textit{RQ\arabic*.}] 
\item Does nonlinear attention in transformers functionally approximate classical models? 
\item How do attention patterns in transformers evolve across layers?
\end{enumerate}

We adopt architectures based on GPT-2-blocks. We consider three configurations of ({embedding dimension}, {number of transformer blocks}, {number of heads}), inspired by~\cite{li2024dissecting}: (i) tiny: $(64, 3, 2)$, (ii) small: $(128,6,4)$, (iii) standard: $(256, 12, 8)$. The models are optimized by Adam over 50\textrm{K} epochs with learning rate 0.0001. For each epoch, we randomly generate 64 data samples to train the model parameters.
To ensure high prediction performance given any length-$n'$ prompt ($n'\in [n ]$), we train on the average of the error over different prompt lengths from 1 through $n$ and update $n$ from 26 to 101 during training. To generate data, we use the process described in Section~\ref{sec:appendix-exp-limitation}.
We measure prediction accuracy by assigning the integer in \edit{$\{0,1\}$} closest to the transformer’s output as the predicted state.
The experiments are conducted on an NVIDIA A40 GPU.
 
\paragraph{Deeper transformers outperform classical models in predicting the next token for query sequence in-context (\textit{RQ1}).} 
We investigate the performance of trained transformer compared to baseline learning algorithms, including logistic regression, linear regression, 3-Nearest Neighbors (3-NN), and Support Vector Machine (SVM), when the number of in-context samples vary from 1 to 100. Fig.~\ref{fig:mse} demonstrate the test accuracy for independent and correlated initial states. The accuracy is averaged over 1280 prompts, where the shaded region denotes 90\% confidence intervals computed using 1000 bootstraps.
The result implies that the trained transformers with small or standard size have comparable performance with SVM and logistic regression and better than the simple baseline 3-NN, while the test performance for tiny is slightly worse than its larger counterparts.  

\begin{figure}[t]
    \centering
    \includegraphics[width=.7\columnwidth]{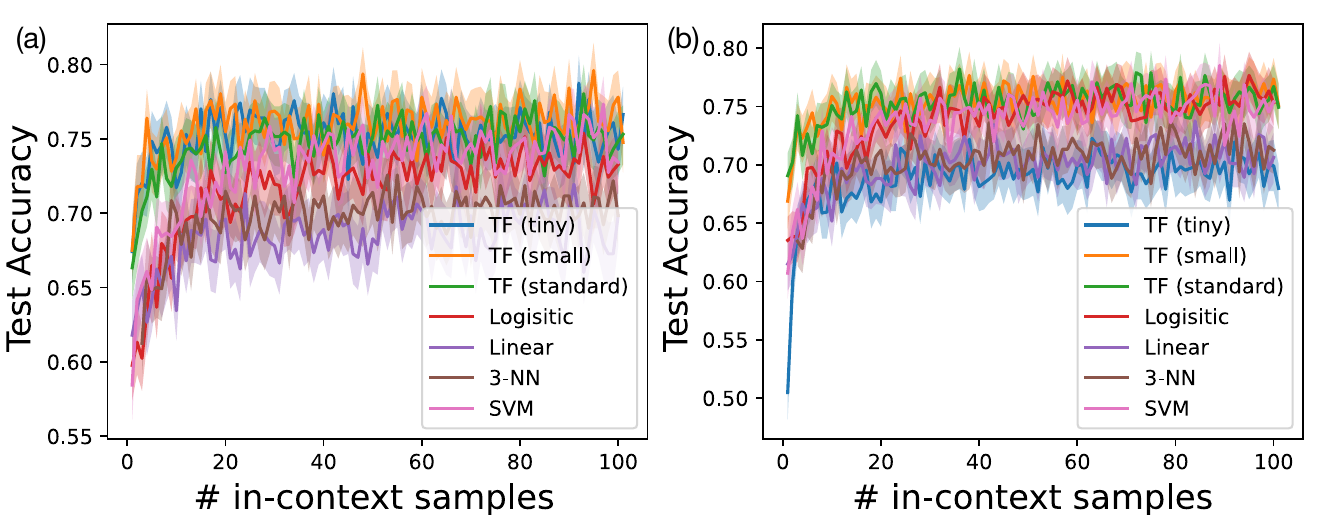}   
    \caption{Testing accuracy for three model configurations, compared to baseline learning algorithms for (a)  independent  and (b)  correlated  initial conditions, respectively.}
    \label{fig:mse}
\end{figure}

\paragraph{Transformers capture similarities between in-context sequences (\textit{RQ2}).}
To investigate whether the attention mechanism captures structural similarities in in-context sequences with distinct transition kernels, we visualize the attention matrix and compare it to the sequence similarity matrix.   We generate prompts by partitioning the in-context sequences into $ k $ groups of equal size, each corresponding to a distinct transition kernel. Within each group, sequences are sampled from the same kernel, ensuring shared transition dynamics. To quantify sequence similarity, we compute a similarity score as:  
$
\text{Similarity} = 1 - \frac{\text{Hamming distance}}{\text{Sequence length}}.
$   
Fig.~\ref{fig:att-multi-kernel} presents the attention matrices across layers, illustrating a progressive emergence of structure that aligns with the sequence similarity matrix. In particular, by the third layer, sequences governed by the same transition kernel exhibit significantly stronger mutual attention, indicating that the model increasingly attends to structurally similar sequences. This layer-wise sharpening of attention suggests that transformers can progressively reflect the in-context Markovian dependencies.

\begin{figure}[t]
    \centering
    \includegraphics[width=0.7\columnwidth]{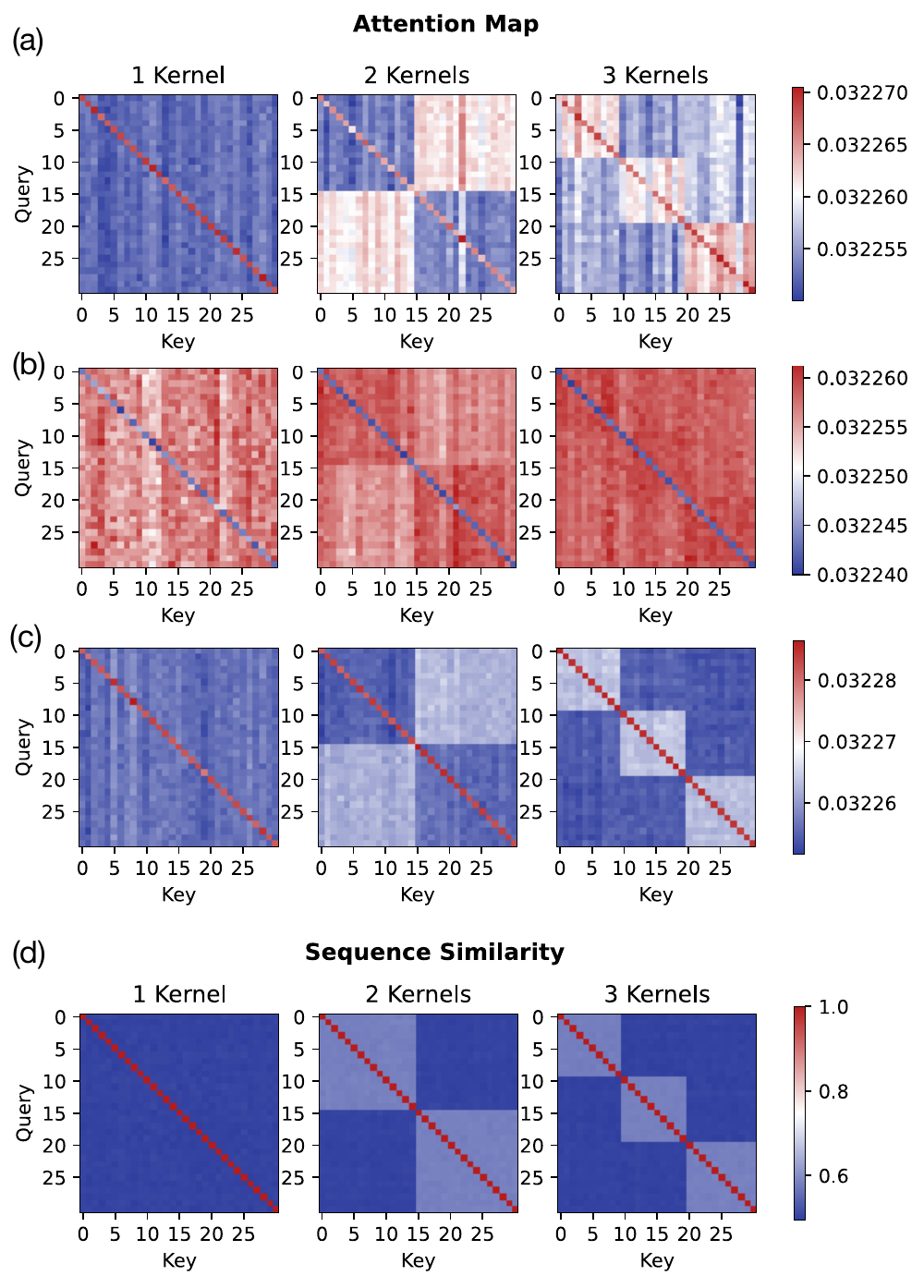}    
    \caption{Attention patterns across 3 transformer layers and sequence similarity.(a)-(c)  Attention maps from shallow to deeper layers. (d) Sequence similarity. In each subfigure, the three columns correspond to datasets with increasing numbers of transition kernels in the input prompt. The emergence of block-wise patterns becomes more pronounced in deeper layers. }
    \label{fig:att-multi-kernel}
\end{figure}

In the following sections, we analyze the optimization landscape of the 1-layer LSA model for ICL over Markovian dynamics. We begin by characterizing the global minima of LSA through a reparameterized objective in low-dimensional settings (Sections~\ref{sec:appendix-iid}, \ref{sec:appendix-general}), deriving exact solutions for both i.i.d. and correlated initial-state cases. In Section~\ref{sec:appendix-longer-length}, we extend the analysis of the reparameterized objective to arbitrary-length sequences and general state spaces. Section~\ref{sec:appendix-np} then establishes a fundamental limitation that recovering transformer parameters from the global minimizer is NP-hard. Finally, Section~\ref{sec:appendix-equivalence} offers an interpretation of the forward computation in multi-layer LSA models as a multi-objective optimization process.

The theoretical results and their relationships are organized as follows:
\begin{enumerate}
\item In Section~\ref{sec:appendix-iid}, we assume that initial states are independently sampled.
\begin{enumerate}
\item Lemma~\ref{lem:minimizer} derives the global minimizer of the reparameterized convex objective under this i.i.d. assumption.
\item Theorem~\ref{thm:minimizer1} (Theorem~\ref{thm:minimizer}  restated) builds on Lemma~\ref{lem:minimizer} to characterize the global minimizer of the original 1-layer LSA objective.
\end{enumerate}

\item In Section~\ref{sec:appendix-general}, we extend the analysis to the setting where initial states may be correlated.
\begin{enumerate}
\item Lemma~\ref{lem:minimizer_general} generalizes the convex minimization result to accommodate arbitrary initial-state distributions.
\item Theorem~\ref{thm:minimizer_general1} (Theorem~\ref{thm:minimizer_general} restated) then applies this generalization to characterize the global minimizer of the 1-layer LSA objective in the correlated setting.
\end{enumerate}

\item In Section~\ref{sec:appendix-longer-length}, we consider Markov chains of arbitrary length and general (binary or nonbinary) state spaces.
\begin{enumerate}
\item Lemma~\ref{lemma:strict-convexity1} (Lemma~\ref{lemma:strict-convexity} restated) establishes that the reparameterized objective remains strictly convex in this general setting.
\item Lemma~\ref{lemma:reparam-global-min1} (Lemma~\ref{lemma:reparam-global-min} restated) and Lemma~\ref{lem:general-length-nonbinary} provide closed-form expressions for the global minimizer of the reparameterized objective for binary and nonbinary state spaces, respectively.
 
\end{enumerate}
\item We then investigate whether such optimal solutions can be realized by any feasible choice of 1-layer transformer parameters in Section~\ref{sec:appendix-np}.
\begin{enumerate}
\item Lemma~\ref{lem:bf-bp} recasts this recovery question as a bilinear feasibility problem.
\item Theorem~\ref{thm:nphard1} (Theorem~\ref{thm:nphard} restated) shows that this problem is NP-hard, revealing a fundamental limitation: 1-layer LSA may be unable to represent optimal solutions even when they exist in the reparameterized space.
\end{enumerate}
\item Section~\ref{sec:appendix-equivalence} recasts the forward pass of a multi-layer LSA as a multi-objective optimization involving losses beyond squared error.
\end{enumerate}
The logical flow of the main theoretical results is summarized in Figure~\ref{fig:proof-structure}.

\begin{figure}[t]
\centering 
\includegraphics[width=.9\textwidth]{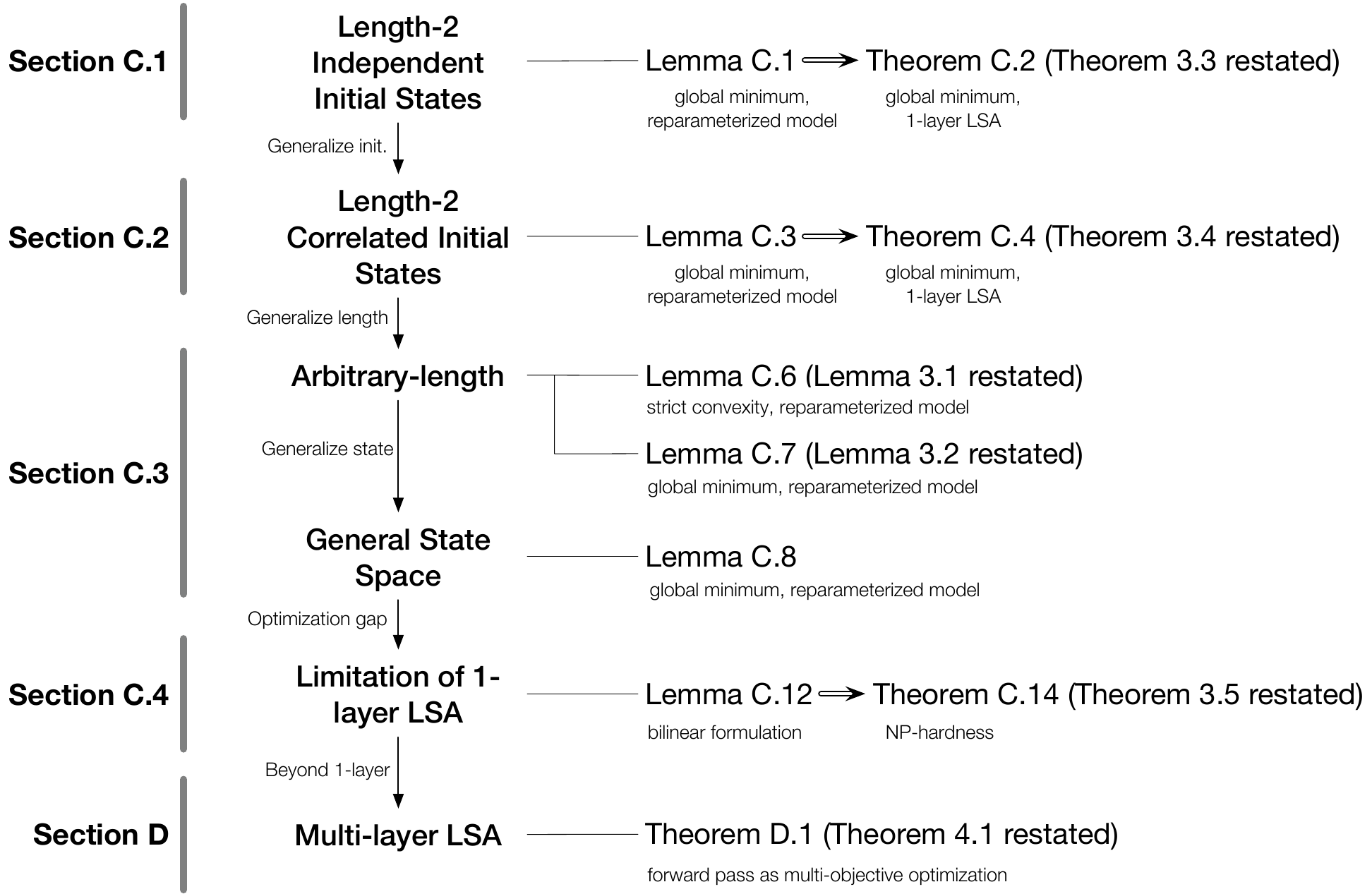}  
\caption{Structure of theoretical results in this section. Each row pair corresponds to a subsection, with the left showing the modeling case and the right listing the corresponding lemmas and theorems. Results build from short Markov chains with i.i.d. initial states to general, long sequences with general state space. This leads to a limitation result, showing that recovering transformer parameters from the global optimum is NP-hard in general. Finally, we provide a separate interpretation of the multi-layer LSA forward pass as a multi-objective optimization process.} 
\label{fig:proof-structure}
\end{figure}

\section{Proofs for Loss Landscape Characterization \label{sec:appendix-mse}}

\subsection{Global Optimum for Length-2 Markov Chains (i.i.d. Initialization; Theorem~\ref{thm:minimizer})}\label{sec:appendix-iid} 
 
We begin by rewriting the loss by keeping parameters that affect the output prediction for the query $x_{n+1}$.

The input prompt is formatted as a $(d+1) \times (n+1)$ matrix:
\begin{align*}
Z_0 = 
\begin{bmatrix}
x_1 & \cdots  &x_n & x_{n+1} \\ 
y_1 & \cdots  &y_n &  0 
\end{bmatrix}.
\end{align*}
We assume $x_i\overset{i.i.d.}{\sim} Bernoulli(p)$ and let $p_{ij}$ denote the transition probability from state $i$ to $j$ ($i,j\in\mathcal{X}=\{0,1\}$). 
We define the label $y_i$ to be the next state. 
By definition of Markov chain, the expected value of $y_i$ given $x_i$ is
\begin{align}
\mathbb{E}[y_i\given x_i,p_{01},p_{11}] = (1-x_{i})p_{01} + x_{i}p_{11} = p_{01} + (p_{11} - p_{01})x_i.\label{eq:y-exp}
\end{align}

\paragraph{Rewriting the objective function.} 
 
The in-context objective function for the single layer case is defined as:
\begin{align}
f(P,Q)=
\mathbb{E}_{
\{x_i\}_{i=1}^{n+1},p_{01},p_{11}}
\left[
\left(
\left(
Z_{0} + \frac{1}{n}\mathrm{Attn}_{P,Q}(Z_0)
\right)_{d+1,n+1}
-
y_{n+1}
\right)^2
\right].\label{eq:obj-rewrite}
\end{align}
 
By definition of attention~(Eq.~\ref{eq:lsa}) (here $M= \begin{bmatrix}
I_n & 0 \\
0 & 0
\end{bmatrix}\in\mathbb{R}^{(n+1)\times (n+1)}$ is the mask matrix), 
\begin{align*}
&\quad Z_{0} + \frac{1}{n}\mathrm{Attn}_{P,Q}(Z_0)=Z_0 + \frac{1}{n}PZ_0M(Z_0^{\top}QZ_0)=Z_0 + \frac{1}{n}P(Z_0MZ_0^{\top})QZ_0 \\ 
&=Z_0 + \frac{1}{n}P
\left( 
\begin{bmatrix}
x_1 & \cdots &x^{(n) } & x_{n+1} \\ 
y_1 & \cdots & y_n &  0 
\end{bmatrix}
\begin{bmatrix}
I_n & 0 \\
0 & 0 
\end{bmatrix}
\begin{bmatrix}
x_1 & y_1 \\ 
\vdots & \vdots  \\ 
x_n & y_n\\
x_{n+1} & 0
\end{bmatrix}
\right)
QZ_0\\
&=Z_0 +
\frac{1}{n}P
\left(
\begin{bmatrix}
x_1&\cdots&x_n&0 \\
y_1&\cdots&y_n&0 
\end{bmatrix}
\begin{bmatrix}
x_1 & y_1 \\ 
\vdots & \vdots  \\ 
x_n & y_n\\
x_{n+1} & 0
\end{bmatrix}
\right) 
QZ_0\\
&=Z_0 +
 P 
\left(\underbrace{\frac{1}{n}\sum_{i=1}^{n}\begin{bmatrix} 
 x_i^2 &  x_iy_i\\
  x_iy_i &   {y_i}^2
\end{bmatrix}}_{\eqqcolon \G} \right) 
QZ_0.
\end{align*}  
The last column of the above matrix can be written as 
\begin{align*}
\begin{bmatrix}
x_{n+1} \\
0
\end{bmatrix} + 
\frac{1}{n} P\G Q \begin{bmatrix}
x_{n+1}\\
0
\end{bmatrix}.
\end{align*}
For the binary input case, $d=1$ and $P,Q\in \mathbb{R}^{2\times 2}$. 
Let $b=[b_1;b_2]^{\top}$ ($b\in \mathbb{R}^{2}$) be the last row of $P$ and $a =[a_1;a_2]\in \mathbb{R}^{2}$ be the first column of $Q$. 
The bottom-right entry of $Z_{0} + \frac{1}{n}\mathrm{Attn}_{P,Q}(Z_0)$ can be expressed as $
b^{\top} \G a x_{n+1}$.
Since $f(P,Q)$ only depends on parameters $b,a$, we  rewrite the objective function as
\begin{align}
f(P,Q)=\mathbb{E}_{\{x_i\}_{i=1}^{n+1},p_{01},p_{11}}
\left[
\left(
b^{\top} \G a x_{n+1} 
- y_{n+1}
\right)^2
\right].
\end{align}

\paragraph{Reparameterization.}
We further expand the term $b^{\top} \G a$ as 
\begin{align*} 
&\quad \begin{bmatrix}
b_1 & b_2 
\end{bmatrix} \left(\frac{1}{n}\sum_{i}
\begin{bmatrix}
x_i^2 &  x_iy_i\\
  x_iy_i &   {y_i}^2
\end{bmatrix}\right)
\begin{bmatrix}
a_1\\
a_2
\end{bmatrix}\\
&=a_1b_1 \frac{1}{n}\sum_{i=1}^{n} x_i^2 + (a_1 b_2+a_2b_1 ) \frac{1}{n}\sum_{i=1}^{n}  x_i y_i + a_2b_2 \frac{1}{n} \sum_{i=1}^{n} {y_i}^2.
\end{align*}
Let  $\G_{xx},\G_{xy},\G_{yy}$ denote the top-left, top-right, and bottom-right entry, respectively. For any vector $X = [X_1;X_2;X_3]$ in $\mathbb{R}^{3}$, we consider the following loss function
\begin{align}
\tilde{f}(X) = \mathbb{E}_{ \{x_i\}_{i=1}^{n+1},p_{01},p_{11}}
\left[
\left(
\left(X_1 \G_{xx}+ X_2 \G_{xy} + X_3 \G_{yy} \right)x_{n+1}
- y_{n+1}
\right)^2
\right].\label{eq:obj-reparam}
\end{align}

We first derive the unique global minimum of the reparameterized loss function ~(Eq.~\ref{eq:obj-reparam}) and then find the set of global minima for the original loss function~(Eq.~\ref{eq:obj-rewrite}) over the space of $P,Q$. 

\begin{lemma}\label{lem:minimizer}
Consider the in-context learning of length-2 Markov chains $\{(x_i,y_i)\}_{i=1}^{n}$ ($x_i,y_i\in \{0,1\}$) with transition probabilities  $p_{01},p_{11}\sim U(0,1)$.
Suppose the initial states $x_i$ are i.i.d. sampled from $Bernoulli(p)$ for some constant $p\in(0,1)$. 
Consider the reparameterized objective 
\begin{align}
\tilde{f}(X) = \EX
\left[
\left(
\left(X_1 \G_{xx}+ X_2 \G_{xy} + X_3 \G_{yy} \right)x_{n+1}
- y_{n+1}
\right)^2
\right].
\end{align}
where $X=[X_1,X_2,X_3]\in \mathbb{R}^{3}$ and $y_i=(1 - x_{n+1}) p_{01} 
+ 
x_{n+1}p_{11}$ denotes the conditional probability observing 1 at the next state given the current state.  

\begin{enumerate}[label=(\arabic*)] 
    \item The objective function $\tilde{f}$ is strictly convex.
    \item The global minimum $X^*$ is given as 
    $X^*= H^{-1}\begin{bmatrix}
p^2/2  &
p^2/3 &
p^2/12 +p/4
\end{bmatrix}^{\top}$, where $H$ is a symmetric matrix defined as follows
\begin{align}
H \coloneqq p\begin{bmatrix}
\frac{p}{n} + \frac{n-1}{n}p^2&\frac{p}{2n} + \frac{n-1}{2n}p^2&\frac{p}{2}\\
&\frac{p}{2n} + \frac{(n-1)p^2}{3n} &\frac{p}{2n} + 
\frac{n-1}{n}\left(
\frac{p}{4}+\frac{p^2}{12}
\right) \\
& &\frac{1}{2n} + 
\frac{n-1}{n}\left(\frac{1}{3}-\frac{1}{6}p+\frac{1}{6}p^2\right) 
\end{bmatrix}
\end{align}
(omitting repeating entries in the lower half triangle).
\end{enumerate}
\end{lemma}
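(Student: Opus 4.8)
The plan is to verify the two claims separately, both reducing to direct computation of the quadratic form $\tilde f(X) = X^\top H X - 2 b^\top X + c$ for explicit $H \in \mathbb{R}^{3\times 3}$ and $b \in \mathbb{R}^3$. Expanding the square in Eq.~\ref{eq:obj-reparam}, we have
\begin{align*}
\tilde f(X) = \mathbb{E}\!\left[\left(\textstyle\sum_{k}X_k u_k\right)^2 x_{n+1}^2\right] - 2\,\mathbb{E}\!\left[\left(\textstyle\sum_k X_k u_k\right) x_{n+1}\, y_{n+1}\right] + \mathbb{E}[y_{n+1}^2],
\end{align*}
where $u_1 = \G_{xx}$, $u_2 = \G_{xy}$, $u_3 = \G_{yy}$. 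Thus $H_{k\ell} = \mathbb{E}[u_k u_\ell\, x_{n+1}^2]$ and $b_k = \mathbb{E}[u_k\, x_{n+1}\, y_{n+1}]$. Since $x_{n+1} \in \{0,1\}$ we have $x_{n+1}^2 = x_{n+1}$, which already simplifies matters. For (1), strict convexity, it suffices to show $H \succ 0$; I would invoke Lemma~\ref{lemma:strict-convexity} directly (it is stated earlier in the excerpt for the general case, and length-$2$ is a special case), or alternatively argue that $H = \mathbb{E}[x_{n+1}\, v v^\top]$ where $v = (u_1, u_2, u_3)^\top$, so $H$ is PSD, and then show it is nonsingular by checking that no nonzero $X$ makes $\sum_k X_k u_k = 0$ almost surely — this uses that $p_{01}, p_{11}$ range over an open set so the monomials in $\G$ are not linearly dependent.

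For (2), I would compute each of the six distinct entries of $H$ and the three entries of $b$ as explicit functions of $p$ and $n$. The key observations that make this tractable: (i) conditioned on the transition probabilities $p_{01}, p_{11}$, the pairs $(x_i, y_i)$ are i.i.d.\ across $i$, and $x_{n+1}$ is independent of the in-context data; (ii) $\G_{xx} = \frac1n\sum_i x_i$, $\G_{xy} = \frac1n\sum_i x_i y_i$, $\G_{yy} = \frac1n\sum_i y_i$ (using $x_i^2 = x_i$, $y_i^2 = y_i$); (iii) $\mathbb{E}[x_i] = p$, $\mathbb{E}[x_i y_i \mid p_{01},p_{11}] = p\, p_{11}$, $\mathbb{E}[y_i \mid p_{01},p_{11}] = p_{01} + (p_{11}-p_{01})p$ by Eq.~\ref{eq:y-exp}, and then average over $p_{01}, p_{11} \sim U(0,1)$ using $\mathbb{E}[p_{1j}] = \tfrac12$, $\mathbb{E}[p_{1j}^2] = \tfrac13$, $\mathbb{E}[p_{01}p_{11}] = \tfrac14$. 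Each entry of $H$ is then a sum of a "diagonal" contribution (terms with $i = i'$, contributing the $\frac1n(\cdots)$ pieces) and an "off-diagonal" contribution (terms with $i \neq i'$, contributing the $\frac{n-1}{n}(\cdots)$ pieces), which exactly matches the stated block structure. I would tabulate, e.g., $H_{11} = \frac1{n^2}\sum_{i,i'}\mathbb{E}[x_i x_{i'} x_{n+1}] = \frac1{n^2}\big(n\cdot p\cdot p + n(n-1)\cdot p^2\cdot p\big) = p\big(\tfrac{p}{n} + \tfrac{n-1}{n}p^2\big)$, and similarly for the rest; the entry $H_{33}$ requires $\mathbb{E}[y_i^2] = \mathbb{E}[y_i] = \int\int (p_{01}+(p_{11}-p_{01})p)\,dp_{01}dp_{11} = \tfrac13 - \tfrac16 p + \tfrac16 p^2$ after expansion, and the off-diagonal $\mathbb{E}[y_i y_{i'} \mid \cdot] = (p_{01}+(p_{11}-p_{01})p)^2$ averaged over the uniform transition laws. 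Once $H$ and $b$ are in hand, the global minimizer is $X^* = H^{-1}b$, and I would confirm $b = (p^2/2,\ p^2/3,\ p^2/12 + p/4)^\top$ by the analogous short computation for $b_k = \mathbb{E}[u_k x_{n+1} y_{n+1}]$, e.g.\ $b_1 = \frac1n\sum_i \mathbb{E}[x_i x_{n+1} y_{n+1}] = \mathbb{E}[x_{n+1} y_{n+1}] = \mathbb{E}[x_{n+1} p_{11}] = p/2$... wait, this must be reconciled with the $p^2/2$ in the statement, so the correct reading is that $x_{n+1}$ also has mean $p$ and $y_{n+1}$'s relevant expectation brings the extra factor; I would carry this bookkeeping carefully.

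\textbf{Main obstacle.} The conceptual content is light; the real risk is purely computational — correctly handling the interplay between (a) the $1/n$ normalization in $\G$, which turns single sums into double sums over in-context indices and splits each moment into diagonal ($i=i'$) and off-diagonal ($i \neq i'$) parts, and (b) the two-stage expectation, first over the Bernoulli initial states and Markov transitions given the kernel, then over the uniform kernel. Getting the factors of $n$ versus $n-1$ right in every one of the nine entries, and not dropping cross terms when expanding $(p_{01}+(p_{11}-p_{01})p)^2$, is where an error would most plausibly creep in; I would organize the computation as a single table of the nine expectations $\mathbb{E}[u_k u_\ell x_{n+1}]$ and $\mathbb{E}[u_k x_{n+1} y_{n+1}]$ and check each against the claimed matrix entry.
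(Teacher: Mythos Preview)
Your proposal is correct and follows essentially the same approach as the paper: write $\tilde f$ as a quadratic in $X$, invoke the general strict-convexity lemma for part (1), and for part (2) compute each entry of $H$ and $b$ by splitting the double sum in $\G_{\cdot}\G_{\cdot}$ into diagonal ($i=i'$) and off-diagonal ($i\neq i'$) contributions, conditioning on $(p_{01},p_{11})$, then averaging over the uniform priors. Two small slips in your illustrative computations are worth flagging (and are exactly the kind of bookkeeping error you anticipated): in $b_1$ you dropped the factor $\mathbb{E}[x_i]=p$ when passing from $\frac{1}{n}\sum_i\mathbb{E}[x_i x_{n+1}y_{n+1}]$ to $\mathbb{E}[x_{n+1}y_{n+1}]$, which is what restores the missing $p$; and for $H_{33}$, the diagonal piece is $\mathbb{E}[y_i^2]=\mathbb{E}[y_i]=\tfrac{1}{2}$ (yielding the $\tfrac{1}{2n}$ term), while $\tfrac{1}{3}-\tfrac{1}{6}p+\tfrac{1}{6}p^2$ is the off-diagonal piece $\mathbb{E}\big[(p_{01}(1-p)+p_{11}p)^2\big]$, not the diagonal one.
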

 
\begin{proof}
   
We defer the proof of \textit{(1)} to Lemma~\ref{lemma:strict-convexity}.
Since $\tilde{f}(X)$ is strictly convex, it has a unique global minimum that sets the gradient $\nabla \tilde{f}(X)$ to zero.
To show \textit{(2)}, we first set up the equation to evaluate the minimizer.

\paragraph{Setting up equations to solve for minimizer.}

The gradient of $\tilde{f}$ w.r.t. $X$ can be expressed as:

\begin{align}
\nabla \tilde{f}(X) = 2 
\begin{bmatrix}
\mathbb{E} \left[
x_{n+1}^2 
\left(\G_{xx}^2 \highlight{X_1} + \G_{xy}\G_{xx} \highlight{X_2} + \G_{yy}\G_{xx}\highlight{X_3} \right) - x_{n+1}y_{n+1} \G_{xx}\right]\\
\mathbb{E} \left[
x_{n+1}^2 
\left(\G_{xx}\G_{xy} \highlight{X_1} + \G_{xy}^2 \highlight{X_2} + \G_{yy}\G_{xy}\highlight{X_3} \right) - x_{n+1}y_{n+1} \G_{xy}
\right]\\
\mathbb{E} \left[
x_{n+1}^2 
\left(\G_{xx}\G_{yy} \highlight{X_1} + \G_{xy}\G_{yy} \highlight{X_2} + \G_{yy}^2\highlight{X_3} \right) - x_{n+1}y_{n+1} \G_{yy}
\right]
\end{bmatrix}.
\end{align}
The global minimizer $X^*$ is the solution the following system:
\begin{align} 
\begin{bmatrix}
\mathbb{E}\left[ x_{n+1}^2\G_{xx}^2  \right] & 
\mathbb{E}\left[ x_{n+1}^2\G_{xx}\G_{xy}  \right] & 
\mathbb{E}\left[ x_{n+1}^2\G_{xx}\G_{yy}  \right]\\
\mathbb{E}\left[ x_{n+1}^2\G_{xx}\G_{xy}\right] &
\mathbb{E}\left[ x_{n+1}^2\G_{xy}^2\right] & 
\mathbb{E}\left[ x_{n+1}^2\G_{xy}\G_{yy}\right]\\
\mathbb{E}\left[ x_{n+1}^2\G_{xx}\G_{yy} \right]&
\mathbb{E}\left[ x_{n+1}^2\G_{xy}\G_{yy} \right] &
\mathbb{E}\left[ x_{n+1}^2\G_{yy}^2 \right]
\end{bmatrix} 
\begin{bmatrix}
X_1^*\\X_2^*\\X_3^*
\end{bmatrix} = \begin{bmatrix}
\mathbb{E}\left[ {x_{n+1}}y_{n+1} G_{xx} \right]\\
\mathbb{E}\left[ {x_{n+1}}y_{n+1} G_{xy} \right]\\
\mathbb{E}\left[ {x_{n+1}}y_{n+1} G_{yy} \right]
\end{bmatrix}.\label{eq:linear-system}
\end{align}
 
Next, we compute the expected values in the linear system~(\ref{eq:linear-system}). The reason for each key derivation step is marked with labels such as (i), (ii), etc., and further explained  at the end of the derivation.
\paragraph{Computing RHS of Eq.~\ref{eq:linear-system}.}

We evaluate the three elements in RHS  separately below.

\begin{enumerate}
    \item For the first element, we have
{
\begin{align} 
 &\EX\left[x_{n+1}y_{n+1} \G_{xx}\right] \nonumber\\
=&\EX\left[x_{n+1}y_{n+1}\frac{1}{n}
\left(\sum_{i=1}^{n} x_i^2 \right)\right]\nonumber\\ 
=& \frac{1}{n}\sum_{i=1}^{n}\underbrace{\EX\left[x_{n+1}y_{n+1} 
  x_i^2 \right]}_{\text{independent of $i$}}\nonumber\\ 
=&\mathbb{E}_{x_i,x_{n+1},y_{n+1}p_{01},p_{11}}\left[x_{n+1}y_{n+1} x_i^2  \right]\nonumber\\ 
=&\mathbb{E}_{x_i,x_{n+1},p_{01},p_{11}}\left[\mathbb{E}_{y_{n+1}} \left[ x_{n+1}y_{n+1}x_i^2\given x_i,x_{n+1},p_{01},p_{11}\right] \right]\nonumber\\ 
=&\mathbb{E}_{x_i,x_{n+1},p_{01},p_{11}}\left[x_i^2\cdot  x_{n+1} \mathbb{E}_{y_{n+1}} \left[  y_{n+1} \given x_i,x_{n+1},p_{01},p_{11}\right] \right] \nonumber\\
\overset{(i)}{=}&\mathbb{E}_{x_i,x_{n+1},p_{01},p_{11}}\left[x_i^2\cdot  x_{n+1} \mathbb{E}_{y_{n+1}} \left[  y_{n+1} \given x_{n+1},p_{01},p_{11}\right] \right] \nonumber\\
=&\mathbb{E}_{x_i,x_{n+1},p_{01},p_{11}}\left[
x_i^2 \cdot      (p_{01}x_{n+1} + (p_{11}-p_{01})x_{n+1}^2)
   \right] \nonumber\\ 
\overset{(ii)}{=}& \mathbb{E}_{x_i,x_{n+1},p_{01},p_{11}}\left[
x_i^2  \cdot p_{11} x_{n+1} 
   \right]\nonumber\\
\overset{(iii)}{=} &
\mathbb{E}_{ p_{11}}\left[
 p_{11} 
\right]\cdot
\mathbb{E}_{x_i } [x_i^2] \cdot \mathbb{E}_{x_{n+1}} [x_{n+1}]    \nonumber\\
\overset{(iv)}{=}& \frac{1}{2}p^2.
\end{align}}

\item Similarly, for the second element, we have
{
\begin{align} 
&\EX\left[x_{n+1}y_{n+1} \G_{xy}\right] \nonumber\\ 
=&\mathbb{E}_{x_i,y_i,x_{n+1},y_{n+1},p_{01},p_{11}}\left[x_iy_ix_{n+1}y_{n+1} \right] \nonumber\\
\overset{(i)}{=}& \mathbb{E}_{x_i,x_{n+1},p_{01},p_{11}}\left[
x_i\mathbb{E}_{y_i }\left[ 
  y_i    \given x_i , p_{01},p_{11}
\right]\cdot 
 x_{n+1} \mathbb{E}_{ y_{n+1}}\left[ 
  y_{n+1} \given  x_{n+1}, p_{01},p_{11}
\right]
\right] \nonumber\\
=& \mathbb{E}_{x_i,x_{n+1},p_{01},p_{11}}\left[
  (p_{01}x_i + (p_{11}-p_{01})x_i^2) \cdot 
 (p_{01}x_{n+1} + (p_{11}-p_{01})x_{n+1}^2) 
\right] \nonumber\\
\overset{(ii)}{=}&
\mathbb{E}_{x_i,x_{n+1},p_{01},p_{11}}\left[
  p_{11}x_i \cdot 
p_{11}x_{n+1}
\right]
\nonumber\\
 \overset{(iii)}{=}&
\mathbb{E}_{ p_{11}}\left[
 p_{11}^2 
\right]\cdot
\mathbb{E}_{x_i } [x_i] \cdot \mathbb{E}_{x_{n+1}} [x_{n+1}]    \nonumber\\  
\overset{(iv)}{=}&
\frac{1}{3}p^2.
\end{align} }

\item 
The third element can be expanded as follows.
{
\begin{align} 
& \EX\left[x_{n+1}y_{n+1} \G_{yy}\right]\nonumber\\
=&
\mathbb{E}_{x_i,y_i,x_{n+1},y_{n+1},p_{01},p_{11}}\left[x_{n+1}y_{n+1}   {y_i}^2 \right]\nonumber\\
\overset{(i)}{=}&
\mathbb{E}_{x_i,x_{n+1},p_{01},p_{11}}\left[
\mathbb{E}_{y_i} \left[ y_i^2 \given x_i,p_{01},p_{11} \right]\cdot
x_{n+1}\mathbb{E}_{y_{n+1}}\left[
 y_{n+1} \given x_{n+1},p_{01},p_{11} 
\right]
\right]
\nonumber\\ 
\overset{(ii)}{=}&
\mathbb{E}_{x_i,x_{n+1},p_{01},p_{11}}\left[
\mathbb{E}_{y_i} \left[ y_i\given x_i,p_{01},p_{11} \right]\cdot 
  (p_{11}x_{n+1})
\right]
\nonumber\\ 
=&
\mathbb{E}_{x_i,x_{n+1},p_{01},p_{11}}\left[
 (p_{01} + (p_{11}-p_{01})x_i)\cdot 
  (p_{11}x_{n+1})
\right]
\nonumber\\
=& 
\mathbb{E}_{x_i,x_{n+1},p_{01},p_{11}}\left[
p_{01}p_{11}x_{n+1} + p_{11}^2x_{i}x_{n+1}  - p_{01}p_{11} x_i x_{n+1}
\right]
\nonumber\\ 
\overset{(iii)}{=}& 
\mathbb{E}_{ p_{01}}\left[ p_{01}\right]\mathbb{E}_{ p_{11}}\left[ p_{11}\right]
\mathbb{E}_{x_{n+1}}\left[x_{n+1}\right]   \nonumber\\
&
+\left(\mathbb{E}_{ p_{11}}\left[ p_{11}^2\right]-
\mathbb{E}_{ p_{01}}\left[ p_{01}\right]\mathbb{E}_{ p_{11}}\left[ p_{11}\right]
\right)
\mathbb{E}_{x_{i}}\left[x_{i}\right]\mathbb{E}_{x_{n+1}}\left[x_{n+1}\right]\nonumber\\
\overset{(iv)}{=}&\frac{1}{4}p + \frac{1}{12}p^2.
\end{align}}
\end{enumerate}

\paragraph{Computing LHS of Eq.~\ref{eq:linear-system}.} 


We evaluate the expectation of the covariance of in-context examples: $\bb{E}[\G_{\cdot }^2]$.
\begin{enumerate}
\item {  
\begin{align}
& \EX\left[ \G_{xx}^2  \right]\nonumber\\
= &
\EX \left[
\left(\frac{1}{n}\sum_{i=1}^{n}x_i^2\right)
\left(\frac{1}{n}\sum_{i=1}^{n}x_i^2\right)
\right] \nonumber\\
=&\frac{1}{n^2}\mathbb{E}_{\{x_i,y_i\}_{i=1}^{n}} \left[{x_1}^2 {x_1}^2 + \cdots + {x_1}^2 {x_n}^2 +\cdots+ {x_n}^2 {x_1}^2 + \cdots + {x_n}^2 {x_n}^2 \right]  \nonumber \\
=&\frac{1}{n^2}
\left(
n \mathbb{E}_{x_i}  \left[  {x_i}^4 \right] 
  + 
n(n-1)  \mathbb{E}_{x_i,x_j}  \underbrace{\left[   x_i^2 x_j^2   \right]}_{ j\neq i } \right) \nonumber\\
\overset{(i)}{=}& \frac{1}{n^2}
\left(
n p+ n(n-1)  
\mathbb{E}_{x_i}  \left[  x_i^2  \right]\mathbb{E}_{x_j}  \left[ x_j^2  \right] \right)\nonumber\\
\overset{(iv)}{=}& \frac{1}{n^2} \left(
np + n(n-1) p^2
\right)  \nonumber\\
=& \frac{p}{n} + \frac{n-1}{n}p^2.
\end{align}}

\item  
{  
\begin{align}
&\EX\left[ \G_{xx}\G_{xy}  \right]\nonumber\\ =&
\mathbb{E}_{\{x_i\}_{i=1}^{n}, \{y_i\}_{i=1}^{n},p_{01},p_{11}}\left[\left(
\frac{1}{n}\sum_{i=1}^{n} x_i^2 
\right)
\left(
\frac{1}{n}\sum_{i=1}^{n} x_iy_i 
\right)\right]
\nonumber\\   
\overset{(ii)}{=}&
\mathbb{E}_{\{x_i\}_{i=1}^{n}, p_{01},p_{11}}\left[\left(
\frac{1}{n}\sum_{i=1}^{n} x_i 
\right)\mathbb{E}_{\{y_i\}_{i=1}^{n}}\left[
\left(
\frac{1}{n}\sum_{i=1}^{n} {x_iy_i} 
\right)\bigg|\{x_i\}_{i=1}^{n}, p_{01},p_{11}\right]\right]
\nonumber\\  =&
\mathbb{E}_{\{x_i\}_{i=1}^{n}, p_{01},p_{11}}\left[\left(
\frac{1}{n}\sum_{i=1}^{n} x_i 
\right)
\left(
\frac{1}{n}\sum_{i=1}^{n} {x_i(p_{01}+(p_{11}-p_{01})x_i)} 
\right)\right]
\nonumber\\  =&
\mathbb{E}_{\{x_i\}_{i=1}^{n}, p_{01},p_{11}}\left[p_{01}\left(
\frac{1}{n}\sum_{i=1}^{n} x_i 
\right)
\left(
\frac{1}{n}\sum_{i=1}^{n} {x_i}
\right)\right]
\nonumber\\&+ \mathbb{E}_{\{x_i\}_{i=1}^{n}, p_{01},p_{11}}\left[(p_{11}-p_{01})\left(
\frac{1}{n}\sum_{i=1}^{n} x_i 
\right)
\left(
\frac{1}{n}\sum_{i=1}^{n} {x_i}^2
\right)\right]
\nonumber\\ \overset{(ii)}{=}&
\mathbb{E}_{\{x_i\}_{i=1}^{n}, p_{01},p_{11}}\left[p_{11}\left(
\frac{1}{n}\sum_{i=1}^{n} x_i 
\right)
\left(
\frac{1}{n}\sum_{i=1}^{n} {x_i}
\right)\right]\nonumber\\
\overset{(iii,iv)}{=}&\frac{1}{2}\EX\left[ \G_{xx}^2  \right] \nonumber\\
=&\frac{p}{2n} + \frac{n-1}{2n}p^2.
\end{align}
}
\item 
{
\begin{align}
& \EX\left[ \G_{xx}\G_{yy} \right] \nonumber\\
=&\mathbb{E}_{\{x_i\}_{i=1}^{n}, p_{01},p_{11}}\left[\mathbb{E}_{\{y_i\}_{i=1}^{n}}\left[\left(
\frac{1}{n}\sum_{i=1}^{n} x_i^2 
\right)
\left(
\frac{1}{n}\sum_{i=1}^{n} { y_i^2} 
\right)\bigg|\{x_i\}_{i=1}^{n}, p_{01},p_{11}\right] \right]\nonumber\\
\overset{(ii)}{=}&
\mathbb{E}_{\{x_i\}_{i=1}^{n}, p_{01},p_{11}}
\left[
\left(
\frac{1}{n}\sum_{i=1}^{n} x_i^2 
\right)
\mathbb{E}_{\{y_i\}_{i=1}^{n}}\left[ 
\left(
\frac{1}{n}\sum_{i=1}^{n} { y_i} 
\right)\bigg|\{x_i\}_{i=1}^{n}, p_{01},p_{11}\right] \right]\nonumber\\ 
=& 
\mathbb{E}_{\{x_i\}_{i=1}^{n}, p_{01},p_{11}}
\left[
\left(
\frac{1}{n}\sum_{i=1}^{n} x_i
\right) 
\left(
\frac{1}{n}\sum_{i=1}^{n} (p_{01} + (p_{11} - p_{01})x_i) 
\right) \right]  
\nonumber\\
=& 
\mathbb{E}_{\{x_i\}_{i=1}^{n}, p_{01},p_{11}}
\left[
p_{01} \left(\frac{1}{n}\sum_{i=1}^{n} x_i\right)
+ (p_{11}-p_{01}) \left(\frac{1}{n}\sum_{i=1}^{n} x_i\right)^2
\right]
\nonumber\\
\overset{(iii)}{=}& \mathbb{E}_{p_{01}}\left[ p_{01} \right]p + \mathbb{E}_{p_{01}}\left[(p_{11}-p_{01})\right]c  \nonumber \\
\overset{(iv)}{=}& \frac{p}{2}. 
\end{align}
}

\item 
{ 
\begin{align}
&\EX\left[ \G_{xy}^2\right]\nonumber\\
=&\mathbb{E}_{\{x_i,y_i\}_{i=1}^{n}, p_{01},p_{11}}
\left[
\left(\frac{1}{n}\sum_{i=1}^{n} x_iy_i \right) \left(\frac{1}{n}\sum_{i=1}^{n} x_iy_i \right) 
\right]\nonumber\\
=& 
\frac{1}{n^2}
\EX 
\left[
\sum_{i=1}^{n}
x_i^2y_i^2 
\right]+
\frac{1}{n^2}
\EX 
\left[
\sum_{i=1}^{n}\sum_{j=1,j\neq i}^{n}
x_iy_ix_jy_j
\right]
\nonumber\\ 
\overset{(ii)}{=}& 
\frac{1}{n^2}
\mathbb{E}_{\{x_i\}_{i=1}^{n}, p_{01},p_{11}}
\left[
\sum_{i=1}^{n}
p_{11}x_i
\right]+
\frac{1}{n^2}
\mathbb{E}_{\{x_i\}_{i=1}^{n}, p_{01},p_{11}}
\left[
\sum_{i=1}^{n}\sum_{j=1,j\neq i}^{n}
p_{11}^2x_ix_j
\right]
\nonumber\\ 
=& \frac{p}{2n} + \frac{(n-1)p^2}{3n}.
\end{align}
}

\item 
{ 
\begin{align}
&\EX\left[ \G_{xy}\G_{yy}\right]\nonumber\\
=&\mathbb{E}_{\{x_i,y_i\}_{i=1}^{n}, p_{01},p_{11}}
\left[
\left(\frac{1}{n}\sum_{i=1}^{n} x_iy_i \right) \left(\frac{1}{n}\sum_{i=1}^{n} y_i^2\right) 
\right]\nonumber\\
=& 
\frac{1}{n^2}
\EX 
\left[
\sum_{i=1}^{n}
x_i y_i^3 
\right]+
\frac{1}{n^2}
\EX 
\left[
\sum_{i=1}^{n}\sum_{j=1,j\neq i}^{n}
x_iy_i y_j^2
\right]
\nonumber\\ 
\overset{ (ii)}{=}& 
\frac{1}{n^2}
\mathbb{E}_{\{x_i\}_{i=1}^{n}, p_{01},p_{11}}
\left[
\sum_{i=1}^{n}
p_{11}x_i
\right] \nonumber\\
&+\frac{1}{n^2}
\mathbb{E}_{\{x_i\}_{i=1}^{n}, p_{01},p_{11}}
\left[
\sum_{i=1}^{n}\sum_{j=1,j\neq i}^{n}
p_{11}x_i (p_{01}+(p_{11}-p_{01})x_j)
\right]
\nonumber\\ 
\overset{ (iv)}{=}&  \frac{p}{2n} + 
\frac{n-1}{n}\left(
\frac{p}{4}+\frac{p^2}{12}
\right).
\end{align}
}

\item 
{ 
\begin{align}
&\EX \left[\G_{yy}^2\right]\nonumber\\
=&\mathbb{E}_{\{x_i,y_i\}_{i=1}^{n}, p_{01},p_{11}}
\left[
\left(\frac{1}{n}\sum_{i=1}^{n} y_i^2 \right) \left(\frac{1}{n}\sum_{i=1}^{n} y_i^2\right) 
\right]\nonumber\\
=& 
\frac{1}{n^2}
\EX 
\left[
\sum_{i=1}^{n}
y_i^4 
\right]+
\frac{1}{n^2}
\EX 
\left[
\sum_{i=1}^{n}\sum_{j=1,j\neq i}^{n}
y_i^2y_j^2
\right]
\nonumber\\ 
\overset{ (ii)}{=}& 
\frac{1}{n^2}
\mathbb{E}_{\{x_i\}_{i=1}^{n}, p_{01},p_{11}}
\left[
\sum_{i=1}^{n}
p_{01}+ (p_{11}-p_{01})x_i
\right] \nonumber\\
&+\frac{1}{n^2}
\mathbb{E}_{\{x_i\}_{i=1}^{n}, p_{01},p_{11}}
\left[
\sum_{i=1}^{n}\sum_{j=1,j\neq i}^{n}
(p_{01}+ (p_{11}-p_{01})x_i)(p_{01}+ (p_{11}-p_{01})x_j)
\right]
\nonumber\\ 
\overset{(iv) }{=}&  \frac{1}{2n} + 
\frac{n-1}{n}\left(\frac{1}{3}-\frac{1}{6}p+\frac{1}{6}p^2\right).
\end{align}
}
\end{enumerate}
 
Throughout the derivation, 
$(i)$ uses the fact that $\{x_j,y_j\}$ and $\{x_{j'},y_{j'}\}$ ($j'\neq j$) are conditionally independent given $p_{01},p_{11}$; 
$(ii)$ holds since  $x_i$, $y_i$ are binary random variables and  $x_i^k = x_i,y_i^k=y_i$ for any integer $k$;
$(iii)$ follows from the fact that $p_{01},p_{11}$ and $x_j$ ($j\in [n+1]$) are jointly independent; 
$(iv)$ holds because the $k$th moments of uniform distribution $U(0,1)$ and Bernoulli distribution $Bernoulli(p)$ are $\frac{1}{k+1}$ and $p$, respectively.

Since $x_{n+1}$ and $x_i$ ($i\in[n]$) are independent, we have $\bb{E}[x_{n+1}^2\G_{\cdot }^2] = \bb{E}[x_{n+1}^2] \bb{E}[\G_{\cdot }^2]=p\bb{E}[\G_{\cdot }^2]$. Hence we have the expression for $H$.

Since $\tilde{f}(X)$ is strictly convex, Eq.~\ref{eq:linear-system} has a unique solution $X^*=H^{-1}\begin{bmatrix}
    p^2 & p^2/3 & p^2/12 + p/4
\end{bmatrix} $.
\end{proof}

\begin{theorem} [\emphh{Theorem~\ref{thm:minimizer} restated}]
\label{thm:minimizer1}
  
Consider the in-context learning of length-2 Markov chains $\{(x_i,y_i)\}_{i=1}^{n}$ ($x_i,y_i\in \{0,1\}$) with transition probabilities  $p_{01},p_{11}\sim U(0,1)$.
Suppose the initial states $x_i$ are i.i.d. sampled from $Bernoulli(p)$ for some constant $p\in(0,1)$.

Let $X^* \coloneqq H^{-1}\begin{bmatrix}
p^2/2  &
p^2/3 &
p^2/12 +p/4
\end{bmatrix}^{\top}$, where $H$ is a symmetric matrix defined as follows
\begin{align}
H \coloneqq p\begin{bmatrix}
\frac{p}{n} +  \frac{(n-1)p^2}{n} & \frac{p}{2n} +  \frac{(n-1)p^2}{2n} & \frac{p}{2}\\
& \frac{p}{2n} +  \frac{(n-1)p^2}{3n} & \frac{p}{2n} + 
 \frac{n-1}{n}\left(
 \frac{p}{4}+ \frac{p^2}{12}
\right)  \\
& & \frac{1}{2n} + 
 \frac{n-1}{n}  \left( \frac{1}{3}- \frac{p}{6} + \frac{p^2}{6} \right)  
\end{bmatrix}.
\end{align}
Then the following choice of parameters 
\begin{align}
P=\begin{bmatrix}
0&0\\
1&\frac{X_2^* \pm \sqrt{{X_2^2}^* - 4X_1^* X_3^*}}{2}
\end{bmatrix}\quad 
Q = \begin{bmatrix}
X_1^* & 0 \\ 
X_2^* - \frac{X_1^* X_2^* \pm X_1^* \sqrt{{X_2^*}^2-4X_1^* X_3^*}}{2} & 0
\end{bmatrix}\label{eq:minimizer}
\end{align}
is a global minimizer of $f(P,Q)$.

\end{theorem}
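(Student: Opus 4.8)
The overall strategy is the standard ``lower bound, then realize it'' argument, made possible by the reparameterization. For a single LSA layer only the last row $b$ of $P$ and the first $d$ columns $A$ of $Q$ influence the scalar output $[Z_1]_{(d+1),(n+1)}$, so we lose nothing by taking $(P,Q)$ in the sparse form of Eq.~\ref{eq:sparse-pq}, and then $f(P,Q)=\tilde f(\phi(b,A))$ by the reparameterization identity (Eq.~\ref{eq:reparam-multid}; for $d=1$ this is exactly the computation leading into Lemma~\ref{lem:minimizer}). Lemma~\ref{lem:minimizer} gives strict convexity of $\tilde f$, hence a unique global minimizer $X^*=H^{-1}v$ with $H$ and $v$ the matrix and right-hand vector displayed in the statement. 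Therefore $\tilde f(X^*)=\min_{X\in\mathbb R^{dm}}\tilde f(X)\le f(P,Q)$ for every $(P,Q)$, and \emph{any} pair with $\phi(b,A)=X^*$ attains this bound and is a global minimizer of $f$. The theorem reduces to solving the bilinear system $\phi(b,A)=X^*$.

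For $d=1$ this is three equations in four unknowns: writing $b=(b_1,b_2)$ for the last row of $P$ and $a=(a_1,a_2)$ for the first column of $Q$, Eq.~\ref{eq:reparam} reads $X_1=a_1b_1$, $X_2=a_1b_2+a_2b_1$, $X_3=a_2b_2$. I would exploit the scaling redundancy $(b,a)\mapsto(\lambda b,\lambda^{-1}a)$, which leaves $b^\top\G a$ (hence $\phi$) unchanged, to set $b_1=1$; the first two equations then force $a_1=X_1$ and $a_2=X_2-X_1b_2$, and substituting into the third collapses the system to the scalar quadratic $X_1b_2^2-X_2b_2+X_3=0$. Its roots, fed back through $a_1=X_1$ and $a_2=X_2-X_1b_2$, reproduce the entries of Eq.~\ref{eq:minimizer} after setting $X=X^*$, and a direct back-substitution confirms $\phi(b,A)=X^*$.

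The crux I expect is \emph{feasibility}: the image of $\phi$ in the $d=1$ case is precisely $\{(X_1,X_2,X_3):X_2^2\ge 4X_1X_3\}$, the locus where the above quadratic has a real root, so the proof must check that the minimizer delivered by Lemma~\ref{lem:minimizer} lies in it, i.e.\ that $(X_2^*)^2-4X_1^*X_3^*\ge 0$ (and, in the edge case $X_1^*=0$, use the branch $b_1=0$ of the parametrization instead). This is the one place where the specific structure of the length-$2$, i.i.d.-initialized chain is genuinely used: one substitutes $X^*=H^{-1}v$ and verifies nonnegativity of the discriminant as a rational function of $n$ and $p\in(0,1)$. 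The point is substantive rather than cosmetic — for longer chains the optimal $X^*$ generally has no $\phi$-preimage at all, which is the obstruction behind Theorem~\ref{thm:nphard} and the length-$3$ counterexample in Appendix~\ref{sec:appendix-longer-length} — so solvability of this quadratic is the real content here. A last, routine remark handles the two sign choices $\pm$: when the discriminant is strictly positive both yield valid (distinct) global minimizers, consistent with the non-uniqueness already implied by the scaling redundancy.
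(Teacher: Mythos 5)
Your proposal follows essentially the same route as the paper: restrict to the sparse $(P,Q)$ that affect the output, pass to the convex reparameterized objective, invoke Lemma~\ref{lem:minimizer} for $X^*=H^{-1}v$, and then back-solve the bilinear system $a_1b_1=X_1^*$, $a_1b_2+a_2b_1=X_2^*$, $a_2b_2=X_3^*$ with the normalization $b_1=1$. The paper's appendix in fact gives no separate argument for the theorem beyond Lemma~\ref{lem:minimizer}; the realizability of $X^*$ is asserted via the displayed $(P,Q)$ and never checked. You correctly identify that the substantive content is the feasibility condition $(X_2^*)^2-4X_1^*X_3^*\ge 0$ (together with the branch for $X_1^*=0$, which genuinely arises: as $n\to\infty$ one finds $X^*\to(0,1/p,0)$), and that without this check the theorem could be vacuous — this is a real gap in the paper's own treatment, not just yours. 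However, you only sketch how you would verify the discriminant's sign ("as a rational function of $n$ and $p$") without carrying it out, so that one step remains open in your write-up as well. Separately, your derivation exposes what appears to be a typo in Eq.~\ref{eq:minimizer}: the roots of $X_1^*b_2^2-X_2^*b_2+X_3^*=0$ are $\bigl(X_2^*\pm\sqrt{(X_2^*)^2-4X_1^*X_3^*}\bigr)/(2X_1^*)$, with $2X_1^*$ rather than $2$ in the denominator; direct back-substitution confirms $\phi(b,A)=X^*$ only for that corrected value, so your claim that the roots "reproduce" the paper's entries should instead read that they correct them.
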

\subsection{Global Optimum for Length-2 Markov Chains (Correlated Initialization; Theorem~\ref{thm:minimizer})} \label{sec:appendix-general}

\begin{lemma}\label{lem:minimizer_general}
Consider the in-context learning of length-2 Markov chains $\{(x_i,y_i)\}_{i=1}^{n}$ ($x_i,y_i\in \{0,1\}$) with transition probabilities  $p_{01},p_{11}\sim U(0,1)$.
Suppose the initial states $x_i$ are  sampled from $Bernoulli(p)$ for some constant $p\in(0,1)$.  
Let $c_1 =  \sum_{i=1}^{n} \mathbb{E}[x_i\edit{x_{n+1}}],  c_2 = \sum_{i=1}^{n}\sum_{j=1,j\neq i}^{n}\mathbb{E}[x_ix_jx_{n+1}]$.

Consider the reparameterized objective 
\begin{align}
\tilde{f}(X) = \EX
\left[
\left(
\left(X_1 \G_{xx}+ X_2 \G_{xy} + X_3 \G_{yy} \right)x_{n+1}
- y_{n+1}
\right)^2
\right].
\end{align}
where $X=[X_1,X_2,X_3]\in \mathbb{R}^{3}$ and $y_i=(1 - x_{n+1}) p_{01} 
+ 
x_{n+1}p_{11}$ denotes the conditional probability observing 1 at the next state given the current state.

Then a global minimum is given as 
\begin{align}
X^* =H^{-1}
\begin{bmatrix}
  {c_1}/{2n}   \\ 
 { c_1}/{3n}  \\ 
   {p}/{4}   + 
 {c_1}/{12n}  
\end{bmatrix},\label{eq:lemma2}
\end{align}
where
\begin{align}
H=\begin{bmatrix}
 \frac{c_1}{n^2} + \frac{c_2}{n^2} & \frac{c_1}{2n^2} +
 \frac{c_2}{2n^2}
 &  \frac{c_1}{ 2n}  \\
&  \frac{c_1}{2n^2} 
+
 \frac{c_2}{3n^2} & \frac{(n+1)c_1}{4n^2}  +
 \frac{c_2 }{12n^2}   \\
& &  \frac{(2n+1)p}{6n} - \frac{(n-1)c_1}{6n^2} 
+ \frac{c_2}{6n^2} 
\end{bmatrix}
\end{align}
 (omitting repeating entries in the lower half triangle).
\end{lemma}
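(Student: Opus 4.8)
The plan is to rerun the proof of Lemma~\ref{lem:minimizer} with a single change: never factorize the joint moments of the in-context initial states into powers of $p$, but carry them symbolically until the index sums collapse into $c_1$ and $c_2$.

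First I would record that strict convexity of $\tilde f$ on $\mathbb{R}^3$ is the $d=1$ instance of Lemma~\ref{lemma:strict-convexity}, whose argument uses only the full support of the transition probabilities and the common marginal initial law $\pi_1=[1-p,p]$ — never independence across the $x_i$'s. The Hessian of $\tilde f$ is $2H$ with $H_{ab}=\mathbb{E}[x_{n+1}^2\,\G_a\G_b]$ for $a,b\in\{xx,xy,yy\}$, so $H$ is positive definite, $\tilde f$ has a unique minimizer, and that minimizer is the solution of the linear system $HX^*=r$ of Eq.~\ref{eq:linear-system}, where $r_a=\mathbb{E}[x_{n+1}y_{n+1}\,\G_a]$ and $H$ is invertible. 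Hence the proof reduces to evaluating the six entries of $H$ and the three entries of $r$.

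To compute a generic entry I would expand each $\G$ as a normalized single sum over in-context indices, condition on $(\{x_i\}_i,x_{n+1},p_{01},p_{11})$, and invoke: (i) conditional independence of $(x_j,y_j)$ across $j$ given the kernel; (ii) $\mathbb{E}[y_i\mid x_i,p]=p_{01}+(p_{11}-p_{01})x_i$ with $x_i^k=x_i$, $y_i^k=y_i$ (so e.g.\ $\mathbb{E}[x_iy_i\mid x_i,p]=x_ip_{11}$); and (iii) independence of $(p_{01},p_{11})$ from all initial states, with moments $\mathbb{E}[p_{01}^k]=\mathbb{E}[p_{11}^k]=\tfrac1{k+1}$ (hence $\mathbb{E}[(p_{11}-p_{01})p_{01}]=-\tfrac1{12}$ and $\mathbb{E}[(p_{11}-p_{01})^2]=\tfrac16$). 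The only departure from the i.i.d.\ proof is that after pulling out the $p$-moments one is left with quantities such as $\mathbb{E}[x_ix_{n+1}]$ or $\mathbb{E}[x_ix_jx_{n+1}]$ ($j\ne i$), which are kept symbolic; upon summation they collapse to $c_1=\sum_i\mathbb{E}[x_ix_{n+1}]$, to $c_2=\sum_{i\ne j}\mathbb{E}[x_ix_jx_{n+1}]$, and — for the diagonal part of a double sum whose summand depends on one index only — to $\sum_{i\ne j}\mathbb{E}[x_ix_{n+1}]=(n-1)c_1$. For instance $r_{xx}=\tfrac1n\sum_i\mathbb{E}[x_ix_{n+1}y_{n+1}]=\tfrac1n\sum_i\tfrac12\mathbb{E}[x_ix_{n+1}]=\tfrac{c_1}{2n}$, and $H_{yy,yy}=\tfrac1{n^2}\big(\sum_i\mathbb{E}[x_{n+1}y_i]+\sum_{i\ne j}\mathbb{E}[x_{n+1}y_iy_j]\big)$; expanding the $y$-quadratic and using the $p$-moments above yields $\tfrac{p}{2n}+\tfrac{(n-1)p}{3n}-\tfrac{(n-1)c_1}{6n^2}+\tfrac{c_2}{6n^2}=\tfrac{(2n+1)p}{6n}-\tfrac{(n-1)c_1}{6n^2}+\tfrac{c_2}{6n^2}$. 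The remaining four entries of $H$ and two of $r$ are obtained in exactly the same manner.

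Collecting the nine quantities reproduces the matrix $H$ and the vector in the statement, so $X^*=H^{-1}r$ by invertibility of $H$; specializing to independent initial states, where $c_1=np^2$ and $c_2=n(n-1)p^3$, recovers Lemma~\ref{lem:minimizer}, which serves as a consistency check. There is no conceptual obstacle — the proof is a finite computation — and the only delicate point is the bookkeeping in the double sums: separating the $i=j$ terms from the $i\ne j$ terms, and remembering that an off-diagonal sum of a single-index quantity carries an extra factor $n-1$. This is precisely what makes the $(1,3)$ entry equal $\tfrac{c_1}{2n}$ rather than $\tfrac{c_1}{2n^2}$ and what produces the $(n-1)c_1$ term in the $(3,3)$ entry. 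I would also verify at the end that no step silently used independence of the initial states across chains, since independence of $(p_{01},p_{11})$ from the $x_i$'s is all that the factorizations require.
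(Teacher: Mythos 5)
Your proposal is correct and follows essentially the same route as the paper: the paper's proof of this lemma likewise reuses the linear system from the i.i.d.\ case and re-evaluates each entry of $H$ and the right-hand side by conditioning on the kernel, using the uniform moments of $p_{01},p_{11}$, and leaving the joint initial-state moments symbolic so that the single and double index sums collapse to $c_1$, $(n-1)c_1$, and $c_2$. Your sample computations (the $(1,1)$ entry of the RHS and the $(3,3)$ entry of $H$) match the paper's, as does the consistency check against the independent case.
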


\begin{proof}
 
Since the objective function remains the same, the derivation for the equations follows from the independent in-context example case~(Eq.~\ref{eq:linear-system}).
Similarly, we label the key steps with  (i), (ii), etc., and defer the explanation at the end of the derivation.

\paragraph{Computing RHS of~Eq.~\ref{eq:linear-system}   w/o assuming independence of $\{x_i\}_{i\in[n+1]}$.} 
\begin{enumerate}
    \item 
For the first element, we have
{
\begin{align} 
&\EX\left[x_{n+1}y_{n+1} \G_{xx}\right]\nonumber\\
\overset{(i)}{=}& \frac{1}{n}\sum_{i=1}^{n} 
\mathbb{E}_{x_i,x_{n+1}, p_{01},p_{11}}
\left[
x_{n+1}x_i \mathbb{E}_{y_{n+1}}
\left[
y_{n+1}\given x_{n+1},p_{01},p_{11}
\right] 
\right] \nonumber\\
\overset{(i)}{=}& \frac{1}{n}\sum_{i=1}^{n} 
\mathbb{E}_{x_i,x_{n+1},   p_{11}}
\left[
p_{11} x_i x_{n+1}
\right] \nonumber\\
\overset{(iii)}{=}&\frac{1}{n}\sum_{i=1}^{n} 
\mathbb{E}_{  p_{11}}
\left[
p_{11}  
\right]
\mathbb{E}_{x_i,x_{n+1} }
\left[
 x_i x_{n+1}
\right] \nonumber\\
=& \frac{1}{2n}  c_1. 
\end{align}}

\item Similarly, for the second element, we have
{
\begin{align} 
& \EX\left[x_{n+1}y_{n+1} \G_{xy}\right]\nonumber\\
\overset{(ii)}{=}& \frac{1}{n}\sum_{i=1}^{n} 
\mathbb{E}_{x_i,x_{n+1}, p_{01},p_{11}}
\left[
x_{n+1} \mathbb{E}_{y_{n+1}}
\left[
y_{n+1}\given x_{n+1},p_{01},p_{11}
\right]\cdot x_i \mathbb{E}_{y_{i}}
\left[
y_{i}\given x_{i},p_{01},p_{11}
\right] 
\right] \nonumber\\
\overset{(i)}{=}& \frac{1}{n}\sum_{i=1}^{n} 
\mathbb{E}_{x_i,x_{n+1}, p_{01},p_{11}}
\left[
(x_{n+1}p_{11})(x_{i}p_{11})
\right] \nonumber\\
\overset{(iii) }{=}&
\frac{1}{3n} \sum_{i=1}^n  \bb{E}[x_ix_{n+1}] =\frac{1}{3n} c_1. 
\end{align} }

\item 
The third element can be expanded as follows.
{
\begin{align} 
& \EX\left[x_{n+1}y_{n+1} \G_{xy}\right]\nonumber\\
\overset{(i) }{=}& \frac{1}{n}\sum_{i=1}^{n} {\mathbb{E}_{x_i,y_i,x_{n+1},y_{n+1},p_{01},p_{11}}\left[x_{n+1}y_{n+1} 
 y_i\right]} \nonumber\\ 
\overset{(ii) }{=}& \frac{1}{n}\sum_{i=1}^{n} 
\mathbb{E}_{x_i,x_{n+1}, p_{01},p_{11}}
\left[
x_{n+1} \mathbb{E}_{y_{n+1}}
\left[
y_{n+1}\given x_{n+1},p_{01},p_{11}
\right]\cdot  \mathbb{E}_{y_{i}}
\left[
y_{i}\given x_{i},p_{01},p_{11}
\right] 
\right] \nonumber\\
\overset{(i)}{=}& \frac{1}{n}\sum_{i=1}^{n} 
\mathbb{E}_{x_i,x_{n+1}, p_{01},p_{11}}
\left[
(x_{n+1}p_{11})(p_{01} + (p_{11} -p_{01}) x_{i}))
\right] \nonumber\\
=&\frac{1}{4} p + 
\frac{1}{12n}c_1. 
\end{align}}
\end{enumerate}

The  derivation holds due the following facts: $(i)$ the states are binary,  $(ii)$ $y_j$ and $y_i$ are conditionally independent for $j\neq i$,  $(iii)$ independence between $x_i$ and $p_{01},p_{11}$.

\paragraph{Computing LHS of~Eq.~\ref{eq:linear-system}  w/o assuming independence of $\{x_i\}_{i\in[n+1]}$.}
 
We directly present the results for the other terms, as their derivation is similar to that of the RHS in the independent case.

{  
\begin{align}
\bb{E} \left[
x_{n+1}^2  \G_{xx}^2
\right]
=&\frac{1}{n^2}\sum_{i=1}^{n}
\bb{E}[x_ix_{n+1}] +
\frac{1}{n^2}
\sum_{i=1}^{n}\sum_{\underset{j=1}{j\neq i}}^{n}\bb{E}[x_ix_jx_{n+1}]\nonumber\\
=&\frac{1}{n^2}c_1+\frac{1}{n^2}c_2,\\
\bb{E}\left[x_{n+1}^2 \G_{xx}\G_{xy} \right]
 =&\frac{1}{2n^2}\sum_{i=1}^{n}
 \bb{E}[x_ix_{n+1}] +
 \frac{1}{2n^2}
 \sum_{i=1}^{n}\sum_{\underset{j=1}{j\neq i}}^{n}\bb{E}[x_ix_jx_{n+1}]\nonumber\\
 =&\frac{1}{2n^2}c_1 +
 \frac{1}{2n^2}
c_2,\\
\bb{E}\left[ x_{n+1}^2\G_{xx}\G_{yy} \right]  =&  \frac{1}{ 2n }\sum_{i=1}^{n} \bb{E}[x_ix_{n+1}]\nonumber\\
=&\frac{1}{ 2n}c_1,\\
\bb{E}\left[ x_{n+1}^2\G_{xy}^2 \right]  =& \frac{1}{2n^2}\sum_{i=1}^{n} \bb{E}[x_ix_{n+1}]
+
\frac{1}{3n^2}\sum_{i=1}^{n}\sum_{\underset{j=1}{j\neq i}}^{n}\bb{E}[x_ix_jx_{n+1}]\nonumber\\
=&
\frac{1}{2n^2}c_1
+
\frac{1}{3n^2}c_2,\\
\bb{E}\left[ x_{n+1}^2\G_{xy}\G_{yy}\right]  =&
\frac{1}{2n^2}\sum_{i=1}^{n} \bb{E}[x_ix_{n+1}]
+
\frac{1}{ n^2}\sum_{i=1}^{n}\sum_{\underset{j=1}{j\neq i}}^{n}\frac{1}{4}\bb{E}[x_ix_{n+1}] +\frac{1}{12}\bb{E}[x_ix_jx_{n+1}]\nonumber\\
=&
\frac{n+1}{4n^2}c_1 +
\frac{1}{12n^2}c_2,
\\
\bb{E}\left[ x_{n+1}^2 \G_{yy}^2\right]  =&
\frac{p}{2n}  
+
\frac{(n-1)p}{3n} +
\frac{1}{ n^2}\sum_{i=1}^{n}\sum_{\underset{j=1}{j\neq i}}^{n}\nonumber \\
&-\frac{1}{12}\bb{E}[x_ix_{n+1}]
-\frac{1}{12}\bb{E}[x_jx_{n+1}]
+ \frac{1}{6}\bb{E}[x_ix_jx_{n+1}]
\nonumber\\
=&
\frac{(2n+1)p}{6n} - \frac{n-1}{6n^2}c_1
+ \frac{1}{6n^2}c_2.
\end{align}}
\end{proof}

\begin{theorem}[\emphh{Theorem~\ref{thm:minimizer_general} restated}]
\label{thm:minimizer_general1} 

Consider the in-context learning of length-2 Markov chains $\{(x_i,y_i)\}_{i=1}^{n}$ ($x_i,y_i\in \{0,1\}$) with transition probabilities  $p_{01},p_{11}\sim U(0,1)$.
Suppose the initial states $x_i$ are sampled from $Bernoulli(p)$ for some constant $p\in(0,1)$. 
Let $c_1 =  \sum_{i=1}^{n} \mathbb{E}[x_i\edit{x_{n+1}}],  c_2 = \sum_{i=1}^{n}\sum_{j=1,j\neq i}^{n}\mathbb{E}[x_ix_jx_{n+1}]$.

We define $X^*$ as 
$
X^{*} \coloneqq H^{-1}
\begin{bmatrix}
  {c_1}/{2n}  &
 { c_1}/{3n} &
   {p}/{4}   + 
 {c_1}/{12n}  
\end{bmatrix} 
$
, where $H$ is a symmetric matrix defined as follows 
\begin{align}
H\coloneqq\begin{bmatrix}
 \frac{c_1}{n^2} + \frac{c_2}{n^2} & \frac{c_1}{2n^2} +
 \frac{c_2}{2n^2}
 &  \frac{c_1}{ 2n}  \\
&  \frac{c_1}{2n^2} 
+
 \frac{c_2}{3n^2} & \frac{(n+1)c_1}{4n^2}  +
 \frac{c_2 }{12n^2}   \\
& &  \frac{(2n+1)p}{6n} - \frac{(n-1)c_1}{6n^2} 
+ \frac{c_2}{6n^2} 
\end{bmatrix} 
\end{align}
(repeating entries in the lower half triangle are omitted).

Then by substituting $X^*$ into~Eq.~\ref{eq:minimizer-main} gives a global minimizer of $f(P,Q)$.
\end{theorem}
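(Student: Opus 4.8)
The plan is to run the same argument as for Theorem~\ref{thm:minimizer1}; the only structural change is that the in-context initial states $x_1,\dots,x_{n+1}$ may now be correlated, so the convex-optimization input is furnished by Lemma~\ref{lem:minimizer_general} rather than Lemma~\ref{lem:minimizer}. The three steps are: (i) reduce the population loss $f(P,Q)$ to the reparameterized loss $\tilde f$ composed with the quadratic map $\phi$; (ii) obtain the lower bound $\min_{P,Q} f \ge \tilde f(X^\ast)$ from strict convexity of $\tilde f$; and (iii) exhibit a pair $(P,Q)$ of the form in Eq.~\ref{eq:minimizer-main} realizing $X^\ast$ under $\phi$, which therefore attains the bound.

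For step (i) I would recall, exactly as in the derivation preceding Lemma~\ref{lem:minimizer}, that in one LSA layer only the last row $b\coloneqq(b_1,b_2)$ of $P$ and the first column $a\coloneqq(a_1,a_2)$ of $Q$ affect the output, and that the bottom-right entry of $Z_0+\tfrac1n\mathrm{Attn}^{(\mathrm{lin})}_{P,Q}(Z_0)$ equals $(b^\top\G a)\,x_{n+1}$ with $b^\top\G a=X_1\G_{xx}+X_2\G_{xy}+X_3\G_{yy}$ and $(X_1,X_2,X_3)=\phi(b,a)\coloneqq(b_1a_1,\;b_1a_2+b_2a_1,\;b_2a_2)$; hence $f(P,Q)=\tilde f(\phi(b,a))$ with $\tilde f$ exactly the objective of Lemma~\ref{lem:minimizer_general}. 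Step (ii) then follows because, by Lemma~\ref{lem:minimizer_general} and the strict convexity of $\tilde f$ (Lemma~\ref{lemma:strict-convexity}), $\tilde f$ has a unique minimizer $X^\ast=H^{-1}[\,c_1/2n,\;c_1/3n,\;p/4+c_1/12n\,]^\top$, so $\min_{P,Q}f=\min_{X\in\operatorname{Im}\phi}\tilde f(X)\ge\tilde f(X^\ast)$, with equality precisely when $X^\ast\in\operatorname{Im}\phi$. For step (iii) I would solve the bilinear system $\phi(b,a)=X^\ast$ in closed form: fixing $b_1=1$ and $a_1=X_1^\ast$ handles the first coordinate, the second coordinate forces $a_2=X_2^\ast-X_1^\ast b_2$, and substituting into the third coordinate gives a scalar quadratic for $b_2$ with discriminant $(X_2^\ast)^2-4X_1^\ast X_3^\ast$; its two roots yield the two ($\pm$) choices of $(P,Q)$ recorded in Eq.~\ref{eq:minimizer-main}, and a direct substitution back into $\phi$ verifies $\phi(b,a)=X^\ast$, whence $f(P,Q)=\tilde f(X^\ast)$ meets the lower bound. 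The rest is bookkeeping carried over from the i.i.d.\ proof: the moments evaluated under independence are replaced by the cross-moments $c_1=\sum_i\mathbb E[x_ix_{n+1}]$ and $c_2=\sum_{i\ne j}\mathbb E[x_ix_jx_{n+1}]$ — which is precisely how $H$ and the right-hand side of the normal equations were computed in Lemma~\ref{lem:minimizer_general} — while the identities $x_i^k=x_i$, $\mathbb E[y_i\mid x_i,p_{01},p_{11}]=p_{01}+(p_{11}-p_{01})x_i$, and the conditional independence of $y_i,y_j$ ($i\ne j$) given the kernel are reused verbatim.

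The main obstacle is verifying the feasibility condition $X^\ast\in\operatorname{Im}\phi$, equivalently that the discriminant $(X_2^\ast)^2-4X_1^\ast X_3^\ast$ is nonnegative, so that Eq.~\ref{eq:minimizer-main} returns real parameters and the lower bound is actually attained. I would first note that $\operatorname{Im}\phi=\{X:X_2^2\ge 4X_1X_3\}$, because $\phi(b,a)$ encodes the symmetric matrix $\tfrac12(ba^\top+ab^\top)$ whose determinant is $X_1X_3-X_2^2/4=-\tfrac14(b_1a_2-b_2a_1)^2\le0$; this pins down exactly what has to be checked. I would then check the inequality by substituting the closed form $X^\ast=H^{-1}v$ from Lemma~\ref{lem:minimizer_general} into $(X_2^\ast)^2-4X_1^\ast X_3^\ast$ and simplifying, mirroring the i.i.d.\ computation; a cleaner alternative worth trying is to use the pointwise Cauchy--Schwarz bound $\G_{xy}^2\le\G_{xx}\G_{yy}$ together with $\mathbb E[\G_{xy}\mid\{x_i\},p_{01},p_{11}]=p_{11}\G_{xx}$ to argue that the unconstrained minimizer of $\tilde f$ already lies in the feasible cone $\{X_2^2\ge 4X_1X_3\}$. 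Once the discriminant sign is settled, steps (i)--(iii) finish the proof.
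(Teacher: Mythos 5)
Your proposal follows the paper's route exactly: the paper's own argument consists of Lemma~\ref{lem:minimizer_general}, which recomputes the normal equations with the moments replaced by the cross-moments $c_1,c_2$, followed by the same inversion of the quadratic map $\phi$ used in the i.i.d.\ case (fix $b_1=1$, $a_1=X_1^*$, solve the resulting scalar quadratic for $b_2$), which is precisely your steps (i)--(iii). The one place you go beyond the paper is in isolating the realizability condition $(X_2^*)^2\ge 4X_1^*X_3^*$: the paper writes down Eq.~\ref{eq:minimizer-main} without verifying that the square root is real, whereas you correctly observe that $\operatorname{Im}\phi$ is the cone $\{X: X_2^2\ge 4X_1X_3\}$ and that attainment of the convex lower bound hinges on $X^*$ lying in it. However, you do not actually close that step, and your ``cleaner alternative'' does not obviously work as stated: the pointwise Cauchy--Schwarz inequality $\G_{xy}^2\le\G_{xx}\G_{yy}$ constrains the data matrix $\G$, not the least-squares coefficient vector $X^*=H^{-1}v$, so it does not by itself place $X^*$ in the feasible cone. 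Net: your write-up is as complete as the paper's, with the discriminant-sign check left open in both.
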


\begin{example}
Suppose $x_{n+1} {\sim} Bernoulli(p)$ and $x_i \given x_{n+1} \sim Bernoulli(g(x_{n+1}))$ for some function $g:\{0,1\}\rightarrow [0,1].$
For example, when $g (x)=(x-p)^2$, the expected values can be computed as follows.

 For   $i\in [n]$ , $j = n+1$,
    \begin{align*}
    \mathbb{E}[x_ix_{n+1}] 
    &= 
    \mathbb{E}_{x_{n+1}}
    \left[
    x_{n+1}
    \mathbb{E}_{x_i} [x_i | x_{n+1}]
    \right] \\
    &= \mathbb{E}_{x_{n+1}}
    \left[
    x_{n+1}^3-2px_{n+1}^2 + p^2x_{n+1}\right]\\
    &= p - 2 p^2 + p^3.
    \end{align*}
Therefore $c_1 = n(p - 2 p^2 + p^3) $.

    \begin{align*}
   \frac{c_2}{n(n-1)} &= \mathbb{E}[x_ix_jx_{n+1}]\\ 
    &\overset{(i)}{=} 
    \mathbb{E}_{x_{n+1}}
    \left[x_{n+1}
    \mathbb{E}_{x_i} [x_i | x_{n+1}]
    \mathbb{E}_{x_j} [x_j | x_{n+1}]
    \right]\quad 
    \\
    &= \mathbb{E}_{x_{n+1}}
    \left[
     x_{n+1}(x_{n+1} - p)^2 (x_{n+1} - p)^2
    \right]\\
    &= \mathbb{E}_{x_{n+1}}
    \left[
     x_{n+1}(x_{n+1}^2 - 2px_{n+1} + p^2) (x_{n+1}^2 - 2px_{n+1} + p^2)
    \right]\\
    &\overset{(ii)}{=}
    \mathbb{E}_{x_{n+1}}
    \left[
     x_{n+1}((1-2p)x_{n+1}   + p^2)^2
    \right]\quad \\
    &= \mathbb{E}_{x_{n+1}}
    \left[
     (1-4p+4p^2)x_{n+1}^3   + 2(1-2p)p^2x_{n+1}^2 + p^4x_{n+1}
    \right]\\
    &= p-4p^2+4p^3 + 2p^3-4p^4  + p^5\\
    &= p^5-4 p^4 + 6 p^3 - 4 p^2 + p.
    \end{align*} 
The above derivation holds because $(i)$ $x_i,x_j$ are conditionally ind. given $x_{n+1}$, and $(ii)$ the states of Markov chain are binary.

\end{example} 

\subsection{Reparameterized Optimum in the General Case (Lemma~\ref{lemma:reparam-global-min})}
\label{sec:appendix-longer-length}
We recall $(x_i,y_i)$ form a binary Markov chain of length $d+1$. 
Assuming the initial states are sampled from $Bernoulli(p)$, the probability of $\edit{x_{i,1}}$ being 1 is $p$. 
For $1<j\leq d$, the probability of $\edit{x_{i,j}}$ being 1, given $\edit{x_{i,j-1}}$, is $p_{11}\edit{x_{i,j-1}} + (1-\edit{x_{i,j-1}})p_{01}$. 
The probability of $y_i$ being 1, given $\edit{x_{i,d}}$, is 
$p_{11}\edit{x_{i,d}} + (1-\edit{x_{i,d}})p_{01}$.

\paragraph{Reparameterization.}

For general $d \geq 1$, the projection matrix $P $ and attention weight matrix $Q$ are of size $(d+1)\times (d+1)$. We write 
\begin{align}
P= \begin{bmatrix}
0_{d\times (d+1)} \\ 
b^{\top}
\end{bmatrix}\quad 
Q = \begin{bmatrix}
\edit{A}& 0_{d+1}
\end{bmatrix},
\end{align}
where $b^{\top}\in\mathbb{R}^{1\times (d+1)}$ denote the last row of $P$ and \edit{$A\in\mathbb{R}^{(d+1)
\times d}$} ($j\in [d]$) represent the first $d$ columns of $Q$. The objective function can be rewritten as: 
\begin{align}
f\left(P,Q\right) =  \EX \left[ \left( \sum_{j=1}^{d} b^{\top}\G \edit{A_{\cdot,j}} \edit{x_{n+1,j}} - y_{n+1} \right)^2 \right],
\end{align}
where  $\edit{x_{n+1,j}}$ ($j\in [d]$) denotes the $j$th element of $x_{n+1}$ and $\edit{A_{\cdot,j}} $ denote the $j$th column of $A$.
The  $i$-$j$ entry of $\G$ ($\G_{i,j}$ ) has the following expression:
\begin{align}
\G_{i,j} =\begin{cases}
1/n\sum_{k=1}^{n} \edit{x_{k,i}}\edit{x_{k,j}} \quad &\text{if $i,j\in[d]$}    \\
1/n\sum_{k=1}^{n} \edit{x_{k,j}}y_k\quad &\text{if $i\in[d],j=d+1$ or $i=d+1,j\in[d]$}\\
1/n\sum_{k=1}^{n} y_k^2\quad &\text{if $i,j=d+1$}
\end{cases}.
\end{align}

Since $\G$ is symmetric, to obtain an objective function with a unique global minimum, we collect model parameters that share the same coefficients $\G_{i,j} = \G_{j,i}$.
We introduce a reparameterization  $\phi$ which maps from the model parameter space 
to $\mathbb{R}^{dm}$, where   $m=\frac{(d+2)(d+1)}{2}$:
\begin{align}
  \phi( P,Q)_{r} =& X_{r} =\begin{cases}
   \edit{A_{i,j}} b_{j'} + \edit{A_{j',j}} b_{i'} \quad&\text{ for }i' \in [d+1], j'>i'\\
 \edit{A_{i',j}} b_{j'}   \quad&\text{ for }i' \in [d+1], j'= i'
\end{cases}.
\end{align}
Here 
$\phi(\cdot)_r$ is the $r$th entry of the resulting vector, with $r=(j-1)m+i'(d+1)+j'\nonumber$ and 
\edit{$A_{i,j}$ denotes the $(i,j)$-th entry of $A$ and $b_i$ denotes the $i$th element of $b.$}

To simplify notation, we collapse the unique elements in $\G$ into a vector:
\begin{align}
\bg = \begin{bmatrix}
\G_{1,1}&\G_{1,2}&\cdots&\G_{1,d+1} &\G_{2,2} &\cdots& \G_{d,d}      &\G_{d,d+1} & \G_{d+1,d+1}
\end{bmatrix}^{\top}.
\end{align} 

We concatenate the parameters $X^{(j)}$ ($j\in[d]$) into a vector $X=[X^{(1)};\ldots;X^{(d)}]\in \mathbb{R}^{dm}$ and consider the following reparameterized objective function
\begin{align}
\tilde{f}(X) =  \EX \left[
\left((x_{n+1}\otimes \bg )^{\top} X - y_{n+1}\right)^2
\right].\label{eq:reparam-multid1}
\end{align}

Building on the formulation of the reparameterized objective for arbitrary-length Markov chains, we next show that this objective is strictly convex.
\begin{lemma}[\emphh{  Lemma~\ref{lemma:strict-convexity} restated}]
\label{lemma:strict-convexity1}
Suppose the initial probability of the Markov chains is $\pi_1=[1-p,p]$ with $p\in (0,1)$ and the transition probabilities are sampled from $U(0,1).$
The reparameterized objective function~Eq.~\ref{eq:reparam-multid1} is strictly convex w.r.t. $X \in \mathbb{R}^{dm} $.
\end{lemma}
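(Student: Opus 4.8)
The plan is to recognize $\tilde f$ as a convex quadratic in $X$ and show its Hessian is positive definite. Expanding the square in Eq.~\ref{eq:reparam-multid1}, $\tilde f(X) = X^{\top}\Sigma X - 2\mu^{\top}X + \mathbb{E}[y_{n+1}^2]$ with $\Sigma := \mathbb{E}\big[(x_{n+1}\otimes\bg)(x_{n+1}\otimes\bg)^{\top}\big]$ and $\mu := \mathbb{E}[y_{n+1}\,(x_{n+1}\otimes\bg)]$, so that $\nabla^2\tilde f \equiv 2\Sigma$. Being an expectation of rank-one PSD matrices, $\Sigma$ is automatically PSD, and for any $v\in\mathbb{R}^{dm}$ one has $v^{\top}\Sigma v = \mathbb{E}\big[\langle x_{n+1}\otimes\bg,\,v\rangle^2\big]$. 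Hence strict convexity is equivalent to the statement that there is no $v\neq 0$ with $\langle x_{n+1}\otimes\bg,\,v\rangle = 0$ almost surely, i.e.\ to $\Sigma\succ 0$.

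Next I would exploit the fact that $x_{n+1}\otimes\bg$ is a deterministic function of the \emph{binary outcomes} $(x_1,\dots,x_n,y_1,\dots,y_n,x_{n+1})$ of the $n+1$ chains; the transition kernel enters only through the distribution over these outcomes. Since $p\in(0,1)$ and the transition probabilities lie in $(0,1)$ almost surely (being drawn from $U(0,1)$), every initial state and every one-step transition has strictly positive probability, so every joint binary realization of these outcomes occurs with positive probability. Consequently $\mathbb{E}[\langle x_{n+1}\otimes\bg,v\rangle^2]=0$ forces $\langle x_{n+1}\otimes\bg,v\rangle = 0$ for \emph{every} such realization. Writing $v = (v^{(1)};\dots;v^{(d)})$ with $v^{(j)}\in\mathbb{R}^m$, so that $\langle x_{n+1}\otimes\bg,v\rangle = \sum_{j=1}^d x_{n+1,j}\langle \bg, v^{(j)}\rangle$, and taking the realization whose query trajectory is the indicator string $x_{n+1}=e_j$ (a legitimate binary trajectory for every $j\in[d]$), the condition becomes $\langle\bg,\,v^{(j)}\rangle = 0$. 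It therefore remains to prove that the set of attainable values of $\bg$ spans $\mathbb{R}^m$.

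For the spanning claim, recall $\bg$ collects the $m=\binom{d+2}{2}$ distinct entries of $\tfrac{1}{n}\sum_{k=1}^n z_kz_k^{\top}$ with $z_k\in\{0,1\}^{d+1}$, and that every binary string is an attainable trajectory. Fixing $z_2=\dots=z_n$ equal to the all-zeros string (also attainable) makes $\bg$ proportional to the vector of distinct entries of $z_1z_1^{\top}$. Choosing $z_1=e_a$ yields a nonzero multiple of the standard basis vector of $\mathbb{R}^m$ indexed by the pair $(a,a)$, and choosing $z_1=e_a+e_b$ with $a<b$ yields a nonzero multiple of the sum of the basis vectors indexed by $(a,a),(a,b),(b,b)$; since the two diagonal ones are already in the span, the basis vector indexed by $(a,b)$ is obtained by subtraction. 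Thus all $m$ coordinate directions lie in the span of attainable $\bg$, which forces $v^{(j)}=0$; as $j\in[d]$ was arbitrary, $v=0$, contradicting $v\neq 0$. Hence $\Sigma\succ 0$ and $\tilde f$ is strictly convex.

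\emph{Main obstacle.} The step needing the most care is the second paragraph: one must verify that $x_{n+1}\otimes\bg$ is genuinely a function of the binary realizations alone — so that the continuous randomness in the kernel cannot create a hidden linear degeneracy not visible on individual realizations — and that the joint law of those realizations has full support over $\{0,1\}$-valued configurations, which is exactly where the hypotheses $p\in(0,1)$ and $p_{01},p_{10}\in(0,1)$ are used. The spanning computation and the reduction to blocks are then elementary linear algebra.
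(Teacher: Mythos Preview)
Your proof is correct and follows the same strategy as the paper's: show the Hessian $2\,\mathbb{E}[(x_{n+1}\otimes\bg)(x_{n+1}\otimes\bg)^\top]$ is positive definite by using the full support of binary realizations (guaranteed by $p\in(0,1)$ and transition probabilities in $(0,1)$ a.s.) to rule out any nonzero null vector. Your block reduction via $x_{n+1}=e_j$ and the explicit spanning construction with $z_1\in\{e_a,\,e_a+e_b\}$ (other chains all zero) make rigorous a step the paper handles more tersely --- there, a nonzero coordinate of $w$ is chosen and a realization exhibited in which the matching coordinate of $z$ is positive, without explicitly addressing possible cancellation from other terms.
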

\begin{proof} 
We show the Hessian of $\tilde{f}$ w.r.t. $X$,  $\bb{E}[x_{n+1}x_{n+1}^{\top}\otimes \bg\bg^{\top}]$, is positive definite.  
Let $w\neq 0$ be an arbitrary nontrivial vector in $\mathbb{R}^{dm}$. 
Let $z \coloneqq x_{n+1}\otimes \bg$.
Then for any $x_{n+1} \in \{0,1\}^{d}$ and $\bg \in [0,1]^{m}$,
$
w^{\top}\bb{E}[x_{n+1}x_{n+1}^{\top}\otimes \bg\bg^{\top}]w=
w^{\top} \bb{E}[(x_{n+1}\otimes \bg)(x_{n+1}\otimes \bg)^{\top}]w =w^\top zz^\top w = \lvert 
w^{\top}z
\rvert^2\geq 0.
$ 
Since $w\neq 0$,  at least one of its entry is nonzero and this entry is multiplied by one of $\{\edit{x_{n+1,j}}\G_{i',j'} : j\in[d],i',j'\in[d+1]\} $ in the expression $w^{\top}z$. 
Take $j=\alpha,i'=\beta,j'=\gamma$.
Then it suffices to find specific $\{x_i,y_i\}_{i=1}^{n}$ and $x_{n+1}$ s.t. $x_{n+1}[\alpha]
\G_{\beta,\gamma} >0$ with positive probability, i.e., $\pr[\edit{x_{n+1,\alpha}}
\G_{\beta,\gamma}]>0$.
Since the initial probability $p\in (0,1)$ and the transition probabilities $p_{ij}$ are nonzero, by definition of Markov chains, $\pr[\edit{x_{n+1,\alpha}} 
\G_{\beta,\gamma}]$ is the product of $p$ (or $1-p$) and $p_{ij}$s and therefore is nonzero.
Now because $w^{\top}(zz^{\top})w\geq 0$ for all $z$ in its support and there exists at least one $z\in \mathbb{R}^{dm}$ s.t. $w^{\top}(zz^{\top})w>0$ and $\pr[z]>0$, we have $w^{\top}\bb{E} [  zz^{\top}]w >0 $. 
Hence the matrix $\bb{E}[x_{n+1}x_{n+1}^{\top}\otimes \bg\bg]$ is positive definite and it  follows that $\tilde{f}$ is strictly convex.
\end{proof}
Leveraging the strict convexity of the objective, we proceed to derive its global minimizer in the case of binary Markov chains with arbitrary length.
\begin{lemma}[\emphh{Lemma~\ref{lemma:reparam-global-min} restated}]
\label{lemma:reparam-global-min1}
Consider the in-context learning of length-$d+1$ ($d\geq 1$) Markov chains $\{(x_i,y_i)\}_{i=1}^{n}$ ($x_i,y_i\in \{0,1\}$) with transition kernel $\bP = \begin{bmatrix}
p_{00}&p_{01}\\
p_{10}&p_{11}
\end{bmatrix}\in(0,1)^2 $.
Suppose the initial states $x_i$ are i.i.d. sampled from $Bernoulli(p)$ for some constant $p\in(0,1)$. 
Consider indices $i,j\in[d]$, $i',j',k',l'\in[d+1]$ with $i'\leq j',k'\leq l'$. 
We denote $t_1\leq t_2\leq t_3\leq t_4$ as the sorted version of $(i',j',k',l')$.
Define $H\in \mathbb{R}^{dm\times dm}$ as
\begin{align}
H_{r,c} =  
&\frac{1}{n}\bb{E}\left[\bigg( p(\bP^{t_1-1})_{11} +(1-p)(\bP^{t_1-1})_{01}\bigg)
(\bP^{t_2-t_1})_{11}(\bP^{t_3-t_2})_{11}(\bP^{t_4-t_3})_{11}\right] \nonumber\\
&+\frac{n-1}{n}\bb{E}\left[
\left(  p(\bP^{i'-1})_{11} +(1-p)(\bP^{i'-1})_{01}\right) 
(\bP^{j'-i'})_{11} \right.\nonumber\\
&\left.\left(  p(\bP^{k'-1})_{11} +(1-p)(\bP^{k'-1})_{01}\right) 
(\bP^{l'-k'})_{11} 
\right],
\end{align} 
where $r=(i-1)m+j'+\sum_{\tau=0}^{i'-2}d+1-\tau,\ c=(j-1)m+l'+\sum_{\tau=0}^{k'-2}d+1-\tau$.

Define $b\in \mathbb{R}^{dm}$ as
\begin{align}
b_{(j-1)m+j'+\sum_{\tau=0}^{i'-2}d+1-\tau} =&\bb{E}\left[
\left(  p(\bP^{j-1})_{11} +(1-p)(\bP^{j-1})_{01}\right) 
(\bP^{d+1-j})_{11} \right.\nonumber\\
&\left.\left(  p(\bP^{i'-1})_{11} +(1-p)(\bP^{i'-1})_{01}\right) 
(\bP^{j'-i'})_{11} 
\right].
\end{align}

The global minimum $X^*\in\mathbb{R}^{dm}$ of the objective function described in~Eq.~\ref{eq:reparam-multid1} equals $X^*=H^{-1}b$.
\end{lemma}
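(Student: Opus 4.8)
The plan is to use strict convexity to turn the problem into a linear system, and then to evaluate that system's coefficients by repeatedly applying the Markov property.

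\textbf{Step 1: reduce to normal equations.} By Lemma~\ref{lemma:strict-convexity1}, $\tilde f$ is strictly convex, so its unique global minimizer $X^*$ is the unique stationary point. Writing $z := x_{n+1}\otimes\bg$, we have $\tilde f(X) = \mathbb{E}[(z^\top X - y_{n+1})^2]$, hence $\nabla\tilde f(X) = 2\,\mathbb{E}[z z^\top]X - 2\,\mathbb{E}[z\, y_{n+1}]$, and $X^*$ solves $H X^* = b$ with $H := \mathbb{E}[z z^\top]$ and $b := \mathbb{E}[z\, y_{n+1}]$. Positive definiteness of the Hessian (the content of Lemma~\ref{lemma:strict-convexity1}) makes $H$ invertible, so $X^* = H^{-1}b$; it then remains only to check that $H$ and $b$ agree entrywise with the stated closed forms.

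\textbf{Step 2: an index dictionary and a single Markov identity.} Each coordinate of $z$ is indexed by a pair: a block index $i\in[d]$ selecting the factor $x_{n+1,i}$, and an unordered pair $(i',j')$ with $i'\le j'$ selecting the unique entry $\G_{i',j'} = \frac1n\sum_{k=1}^n \zeta_{k,i'}\zeta_{k,j'}$ of the symmetric matrix $\G$, where $\zeta_k := (x_{k,1},\dots,x_{k,d},y_k)$ is the full $k$-th chain; the displayed linear-indexing formula for $r$ encodes this correspondence. Two structural facts do all the work: (a) conditionally on the kernel $\bP$ the $n+1$ chains are i.i.d., and the states are binary so $\zeta_{k,a}^s = \zeta_{k,a}$; and (b) for a single chain with $\pi_1 = [1-p,p]$, the Markov property gives, for any time steps $\tau_1\le\cdots\le\tau_s$,
\begin{align*}
\mathbb{P}\big[\zeta_{k,\tau_1} = \cdots = \zeta_{k,\tau_s} = 1 \mid \bP\big]
= \big(p(\bP^{\tau_1-1})_{11} + (1-p)(\bP^{\tau_1-1})_{01}\big)\prod_{u=1}^{s-1}(\bP^{\tau_{u+1}-\tau_u})_{11}.
\end{align*}

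\textbf{Step 3: evaluate $b$ and $H$.} For $r\leftrightarrow(i;i',j')$, one has $b_r = \mathbb{E}[x_{n+1,i}\,y_{n+1}\,\G_{i',j'}]$; expanding $\G_{i',j'}$, conditioning on $\bP$, and using conditional independence of the query chain from chain $k$, the expectation factors as $\mathbb{E}_\bP\big[\mathbb{E}[x_{n+1,i}y_{n+1}\mid\bP]\cdot\mathbb{E}[\zeta_{k,i'}\zeta_{k,j'}\mid\bP]\big]$, and since the summand no longer depends on $k$ the average $\frac1n\sum_k$ collapses; both factors are "visits-all-1" probabilities of the form in Step 2 ($s=2$), yielding the stated $b_r$ after the outer $\mathbb{E}_\bP$. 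For $r\leftrightarrow(i;i',j')$, $c\leftrightarrow(j;k',l')$, $H_{r,c} = \mathbb{E}[x_{n+1,i}x_{n+1,j}\,\G_{i',j'}\G_{k',l'}]$; expand $\G_{i',j'}\G_{k',l'} = \frac1{n^2}\sum_{k,\ell}\zeta_{k,i'}\zeta_{k,j'}\zeta_{\ell,k'}\zeta_{\ell,l'}$ and split into the $n$ diagonal terms $k=\ell$ and the $n(n-1)$ off-diagonal terms $k\ne\ell$. On the diagonal, $i',j',k',l'$ all lie in one chain; sorting them as $t_1\le t_2\le t_3\le t_4$ and applying Step 2 with $s=4$ (plus the query-chain factor) produces the first displayed term with prefactor $\frac1n$. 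Off the diagonal, $(i',j')$ and $(k',l')$ sit in two conditionally-independent chains, so the expectation splits into a product of two $s=2$ probabilities (plus the query factor), producing the second displayed term with prefactor $\frac{n-1}{n}$. Averaging over $\bP$ gives $H_{r,c}$, and hence $X^* = H^{-1}b$.

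\textbf{Main obstacle.} There is no deep difficulty here; the effort is bookkeeping. The delicate points are: organizing the double sum over in-context examples into the diagonal/off-diagonal split so that each piece is routed to the correct entry of $H$; keeping visited time indices consistently sorted every time the identity of Step 2 is invoked (the sorted quadruple $t_1\le\cdots\le t_4$ versus the given order $(i',j',k',l')$); and carrying the binary-state reduction $\zeta^s=\zeta$ and the conditional-independence factorizations through every term without mishandling the query factors. The existence, uniqueness, and the invertibility of $H$ are supplied for free by Lemma~\ref{lemma:strict-convexity1}.
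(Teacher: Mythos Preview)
Your proposal is correct and follows essentially the same route as the paper's proof: set the gradient of the reparameterized quadratic to zero to obtain the normal equations $HX^*=b$, then evaluate each entry by conditioning on $\bP$, invoking the Markov ``all-ones'' identity, and splitting the double sum in $\G_{i',j'}\G_{k',l'}$ into the $n$ diagonal ($k=\kappa$) and $n(n-1)$ off-diagonal ($k\neq\kappa$) pieces. The only cosmetic difference is that the paper first pulls out the query factor $\mathbb{E}[x_{n+1,i}x_{n+1,j}]$ as a separate multiplicand before evaluating $\mathbb{E}[\G_{i',j'}\G_{k',l'}]$, whereas you keep it inside the single $\mathbb{E}_{\bP}$ and rely on conditional independence; both lead to the same entrywise identification.
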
 
\begin{proof}

Setting the gradient of  Eq.~\ref{eq:reparam-multid1}  w.r.t. $X$ to zero, we have
\begin{align}
{\bb{E}\left[x_{n+1}x_{n+1}^{\top}  \otimes \bg\bg^{\top}
\right]} \highlight{X} =\bb{E}\left[ y_{n+1} \left(x_{n+1} \otimes \bg\right)\right]\label{eq:multid}
\end{align}
where $\otimes$ denotes the Kronecker product.

\paragraph{Evaluating LHS of~Eq.~\ref{eq:multid}: $\bb{E}\left[\edit{x_{n+1,i}x_{n+1,j} }  \G_{i',j'}\G_{k',l'}\right]$ with $i,j\in[d]$, $i\leq j$, $i',j',k
,l'\in [d+1]$, and $i'\leq k',j'\leq l'$.}

Let $\bP=\begin{bmatrix}
p_{00}&p_{01}\\ p_{10} & p_{11}
\end{bmatrix}$ denote the transition probability matrix and   $\pi=[1-p,p]$  the initial marginal probability.  Further, $(\bP^{k})_{ij} $ ($i,j\in \{0,1\})$ denotes the specific entry of $\bP$ raised to the power of $k$. Then
\begin{align}
\bb{E}\left[\edit{x_{n+1,i} x_{n+1,j} }  \G_{i',j'}\G_{k',l'}\right]
=& 
\bb{E}\left[\edit{x_{n+1,i} x_{n+1,j} } \right]
\bb{E}\left[\G_{i',j'}\G_{k',l'}\right],
\end{align}
due to the fact that
$x_{i}$ ($i\in[n]$) and $x_{n+1}$ are independent and $\G$ contains in-context samples only.
We then evaluate the two terms $\bb{E}\left[\edit{x_{n+1,i} x_{n+1,j} }\right],
\bb{E}\left[\G_{i',j'}\G_{k',l'}\right]$ separately.

\begin{itemize}
    \item  For $i\leq j$, the probability of $\edit{x_{n+1,i}  }=\edit{ x_{n+1,j} }=1$ is equivalent to that of $\edit{x_{n+1,i}  }=1$ conditioned on $\edit{x_{n+1,1}  }$ multiplied by the probability of $\edit{x_{n+1,j}  }=1$ conditioned on  $\edit{x_{n+1,i}  }=1$. Therefore,
\begin{align}
&\bb{E}\left[\edit{x_{n+1,i} x_{n+1,j} }  \right]\nonumber\\
=&\bb{E} \left[\pr[ x_{n+1,i}    =   x_{n+1,j}  =1\right] \nonumber\\
=&\bb{E}\left[\left(  p(\bP^{i-1})_{11} +(1-p)(\bP^{i-1})_{01}\right) 
(\bP^{j-i})_{11}   \right].
\end{align}

\item We temporarily let $\edit{x_{k,d+1}  } =y_{k}$ for $k\in [n]$. 
 
For $i',j',k',l'\in [d+1]$ and $i'\leq j',k'\leq l'$, we have
\begin{align}
&\bb{E}\left[\G_{i',j'}\G_{k',l'}\right]\nonumber\\
 {=}& \bb{E}\left[
\left(
\frac{1}{n}\sum_{k=1}^{n}
\edit{x_{k,i'}}  \edit{x_{k,j'}}  
\right)
\left(
\frac{1}{n}\sum_{k=1}^{n}
\edit{x_{\kappa,k'}}
\edit{x_{\kappa,l'}}
\right)
\right]\nonumber\\
=& \frac{1}{n^2}\bb{E}\left[
\left(
\sum_{k=1}^{n}
\edit{x_{k,i'}}  \edit{x_{k,j'}} 
\right)
\left(
\sum_{\kappa=1}^{n}
\edit{x_{\kappa,k'}}
\edit{x_{\kappa,l'}}
\right)
\right]\nonumber
\\
=& \frac{1}{n^2}\bb{E}\left[
\sum_{k=1}^{n}\sum_{\kappa=1}^{n}
\edit{x_{k,i'}}  \edit{x_{k,j'}}  
\edit{x_{\kappa,k'}}
\edit{x_{\kappa,l'}}
\right]\nonumber
\\
=& \frac{1}{n^2}\bb{E}\left[
\sum_{k=1}^{n}
\edit{x_{k,i'}}  \edit{x_{k,j'}} 
\edit{x_{k,k'}}
\edit{x_{k,l'}}
\right]\nonumber\\ &+\frac{1}{n^2}\bb{E}\left[
\sum_{k=1}^{n}\sum_{\kappa=1, \kappa\neq k}^{n}
\edit{x_{k,i'}}  \edit{x_{k,j'}} 
\edit{x_{\kappa,k'}}
\edit{x_{\kappa,l'}}
\right].\label{eq:binary-gg}
\end{align} 
The summands in the first term, in the case of $j'\le k'$, has the following form. The remaining orderings of $i', j', k', l'$ can be computed in a similar manner as follows.
\begin{align}
&\bb{E}
\left[\sum_{k=1}^{n}
\edit{x_{k,i'}} \edit{x_{k,j'}} 
\edit{x_{k,k'}}
\edit{x_{k,l'}}\right] \nonumber \\
=&\bb{E}\left[\sum_{k=1}^{n}\pr\bigg[\edit{x_{k,i'}} =\edit{x_{k,j'}} 
=\edit{x_{k,k'}}= 
\edit{x_{k,l'}}=1\bigg]\right]\nonumber\\ =&n\bb{E}\left[\bigg( p(\bP^{i'-1})_{11} +(1-p)(\bP^{i'-1})_{01}\bigg)
(\bP^{j'-i'})_{11}(\bP^{k'-j'})_{11}(\bP^{l'-k'})_{11}\right].
\end{align}

Each summand in the second term of Eq.~\ref{eq:binary-gg} contains a product of coactivations from two distinct chains, $z_k$ and $z_\kappa$ ($k \neq \kappa$). Because the chains are independently drawn from the Markov kernel, the terms $x_{k,i'}x_{k,j'}$ and $x_{\kappa,k'}x_{\kappa,l'}$ are statistically independent.
\begin{align}
&\quad \bb{E}\left[
\sum_{k=1}^{n}\sum_{\kappa=1, \kappa\neq k}^{n}
\edit{x_{k,i'}}   \edit{x_{k,j'}}
\edit{x_{\kappa,k'}}
\edit{x_{\kappa,l'}}
\right]\nonumber\\
&=\bb{E}\left[
\sum_{k=1}^{n}\sum_{\kappa=1, \kappa\neq k}^{n}
\pr\left[\edit{x_{k,i'}}=\edit{x_{k,j'}}=1 \right]
\pr\left[\edit{x_{\kappa,k'}}=\edit{x_{\kappa,l'}}=1 \right]
\right]\nonumber\\ 
&=n(n-1)\bb{E}\left[
\left(  p(\bP^{i'-1})_{11} +(1-p)(\bP^{i'-1})_{01}\right) 
(\bP^{j'-i'})_{11} \right.\nonumber\\
&\quad\left.\left(  p(\bP^{k'-1})_{11} +(1-p)(\bP^{k'-1})_{01}\right) 
(\bP^{l'-k'})_{11} 
\right].
\end{align}
 
\end{itemize}

\paragraph{Evaluating RHS of~Eq.~\ref{eq:multid}: $\bb{E}[\edit{x_{n+1,j}}y_{n+1}\G_{i',j'}]$ with $i'\leq j'$.}
\begin{align}
\bb{E}[\edit{x_{n+1,j}}y_{n+1}\G_{i',j'}] &= 
\frac{1}{n}\sum_{k=1}^{n}\bb{E}\left[
\edit{x_{n+1,j}}y_{n+1}
\edit{x_{k,i'}}\edit{x_{k,j'}}
\right] \nonumber\\ 
&= 
\frac{1}{n}\sum_{k=1}^{n}\bb{E}
\left[
\pr\left[\edit{x_{n+1,j}}=y_{n+1}=1 \right]
\pr\left[\edit{x_{k,i'}}=\edit{x_{k,j'}}=1 \right]
\right]\nonumber\\
&=\bb{E}\left[
\left(  p(\bP^{j-1})_{11} +(1-p)(\bP^{j-1})_{01}\right) 
(\bP^{d+1-j})_{11} \right.\nonumber\\
&\quad\left.\left(  p(\bP^{i'-1})_{11} +(1-p)(\bP^{i'-1})_{01}\right) 
(\bP^{j'-i'})_{11} 
\right].
\end{align} 
\end{proof}

 
We next extend the global minimization result to the case of a non-binary state space. The derivation follows a similar strategy as in the binary case, but now incorporates a Dirichlet prior over the Markov transition kernel $\bP$, where each row $\bP_{s,:}$ $(s\in\gS)$ is independently sampled from $\text{Dir}(\alpha \cdot \mathbf{1}_S)$. The concentration parameter $\alpha > 0$ controls the variability of the transitions: smaller values encourage sparse, deterministic dynamics, while larger values yield more uniform behavior.
 
\begin{lemma}[] \label{lem:general-length-nonbinary}
Assumptions:
\begin{itemize}
\item Each row of $\bP$ is sampled from the  Dirichlet distribution of order $S$ with parameters $\alpha_s$ with $s\in\mathcal{S}=\{0,\ldots,\lvert\mathcal{S}\rvert\}$.
\item The initial states are independently and uniformly sampled from $\mathcal{S}$.
\end{itemize}
Define $H\in \mathbb{R}^{dm\times dm}$ as
\begin{align}
H_{r,c}& =    
 \frac{1}{\lvert \gS\rvert} \bb{E} \left[
\sum_{s\in \mathcal{S}}
\sum_{s'\in \mathcal{S}}
\sum_{s''\in \gS} 
ss' 
(\bP^{j-i})_{ss'}
(\bP^{i-1})_{ss'} 
\right] \nonumber\\ 
& 
\left(
\frac{1}{n \lvert \gS \rvert }
\bb{E}
\left[\sum_{s_1,s_2,s_3,s_4,s_5\in\gS}
s_1s_2s_3s_4s_5 (P^{i'-1})_{s_1 s_2}(P^{j'-i'})_{s_2s_3} (P^{k'-j'})_{s_3s_4} (P^{l'-k'})_{s_4s_5} \right] 
\right.\nonumber\\
 &  
\frac{n-1}{n \lvert \gS \rvert} 
\bb{E}
\left[
\left(
\sum_{s_1,s_2,s_3\in \gS}
s_1 s_2 s_3 
(\bP^{i'-1})_{s_1s_2}
(\bP^{j'-i'})_{s_2s_3}
\right) \right.\nonumber\\
&\left.\left(
\sum_{s_1,s_2,s_3\in \gS}
s_1 s_2 s_3 
(\bP^{k'-1})_{s_1s_2}
(\bP^{l'-k'})_{s_2s_3}
\right) 
\right],
\end{align}  
where $r=(i-1)m+j'+\sum_{\tau=0}^{i'-2}d+1-\tau,\ c=(j-1)m+l'+\sum_{\tau=0}^{k'-2}d+1-\tau$.

Define $h\in \mathbb{R}^{dm}$ as
\begin{align}
h_{(j-1)m+j'+\sum_{\tau=0}^{i'-2}d+1-\tau}& =  
\bb{E}
\left[
\left(
\sum_{s_1,s_2,s_3\in \gS}
s_1 s_2 s_3 
(\bP^{j-1})_{s_1s_2}
(\bP^{d+1-j})_{s_2s_3}
\right)
\right.\nonumber \\
&\left.
\left(
\sum_{s_1,s_2,s_3\in \gS}
s_1 s_2 s_3 
(\bP^{i'-1})_{s_1s_2}
(\bP^{j'-i'})_{s_2s_3} 
\right)
\right].
\end{align}
The global minimum $X^*\in\mathbb{R}^{dm}$ of the objective function described in~Eq.~\ref{eq:reparam-multid1} equals $X^*=H^{-1}h$.
\end{lemma}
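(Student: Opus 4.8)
The plan is to reproduce the proof of Lemma~\ref{lemma:reparam-global-min1} with the binary moment evaluations replaced by their non-binary counterparts under the Dirichlet prior. The first step is to note that $\tilde f$ of Eq.~\ref{eq:reparam-multid1} is still strictly convex: its Hessian is $\bb{E}\big[(x_{n+1}\otimes\bg)(x_{n+1}\otimes\bg)^{\top}\big]\succeq 0$, and the argument of Lemma~\ref{lemma:strict-convexity1} applies essentially verbatim once one observes that under a uniform initial law on $\gS$ and a Dirichlet prior on the rows of $\bP$ every reachable state configuration has positive probability while $\bP$ ranges over the full interior of the product of simplices, so $w^{\top}(x_{n+1}\otimes\bg)$ vanishes almost surely only when $w=0$. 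Strict convexity then forces a unique minimizer, characterized by zeroing the gradient; this is precisely the normal equations $HX=h$ of the form in Eq.~\ref{eq:multid}, with $H=\bb{E}[x_{n+1}x_{n+1}^{\top}\otimes\bg\bg^{\top}]$ and $h=\bb{E}[y_{n+1}(x_{n+1}\otimes\bg)]$, hence $X^{*}=H^{-1}h$. What is left is to evaluate the entries.

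For a generic entry of $H$ I would compute $\bb{E}[x_{n+1,i}x_{n+1,j}\,\G_{i',j'}\G_{k',l'}]$. Since the query chain is independent of the $n$ in-context chains and $\G$ depends only on the latter, this factors as $\bb{E}[x_{n+1,i}x_{n+1,j}]\cdot\bb{E}[\G_{i',j'}\G_{k',l'}]$, which already explains the product structure of $H_{r,c}$. The first factor is a three-point correlation along a single chain: conditioning on the uniform initial state and propagating by $\bP^{\,i-1}$ and then $\bP^{\,j-i}$ (for $i\le j$; the other case is symmetric) gives $\tfrac{1}{|\gS|}\bb{E}\big[\sum_{s_1,s_2,s_3} s_2 s_3\,(\bP^{i-1})_{s_1 s_2}(\bP^{j-i})_{s_2 s_3}\big]$, the leading factor of $H_{r,c}$. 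For the second factor, expand $\G_{i',j'}\G_{k',l'}=n^{-2}\sum_{k,\kappa}x_{k,i'}x_{k,j'}x_{\kappa,k'}x_{\kappa,l'}$ and split as in Eq.~\ref{eq:binary-gg} into the diagonal ($k=\kappa$) and off-diagonal ($k\neq\kappa$) parts. The diagonal part gives $n$ copies of a four-point correlation along one chain; after sorting $(i',j',k',l')$ as $t_1\le t_2\le t_3\le t_4$ and propagating from the uniform initial state through positions $t_1,\dots,t_4$ it becomes $\tfrac{1}{n|\gS|}\bb{E}\big[\sum_{s_1,\dots,s_5} s_2 s_3 s_4 s_5\,(\bP^{t_1-1})_{s_1 s_2}(\bP^{t_2-t_1})_{s_2 s_3}(\bP^{t_3-t_2})_{s_3 s_4}(\bP^{t_4-t_3})_{s_4 s_5}\big]$. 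The off-diagonal part has $n(n-1)$ terms; since the chains are conditionally i.i.d.\ given $\bP$, each term equals, conditionally on $\bP$, a product of two three-point correlations, and taking $\bb{E}_{\bP}$ of that product (which does not factor, because $\bP$ is shared) yields the $\tfrac{n-1}{n|\gS|}$ term. The outer expectation over $\bP$ is kept where written because matrix powers are nonlinear in $\bP$.

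The right-hand side $h$ is treated the same way: $\bb{E}[x_{n+1,j}\,y_{n+1}\,\G_{i',j'}]=\bb{E}\big[\bb{E}[x_{n+1,j}y_{n+1}\mid\bP]\,\bb{E}[\G_{i',j'}\mid\bP]\big]$, where the first conditional factor is the three-point correlation of the query chain through positions $j$ and $d+1$ (identifying $y_{n+1}$ with the state at position $d+1$) and the second is the one-chain two-position correlation through $i'$ and $j'$ after averaging over the $n$ chains. Multiplying these, and lining up coordinates via the index maps $r=(i-1)m+j'+\sum_{\tau=0}^{i'-2}(d+1-\tau)$ and $c=(j-1)m+l'+\sum_{\tau=0}^{k'-2}(d+1-\tau)$, produces the stated $h$ and hence $X^{*}=H^{-1}h$.

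I expect the main obstacle to be bookkeeping rather than any individual expectation: (i) the diagonal four-point term depends a priori on the ordering of $(i',j',k',l')$, so one must check that every ordering telescopes into the single expression indexed by $t_1\le\cdots\le t_4$; (ii) one must verify that the coordinate indexing of the reparameterization $\phi$ is compatible with the Kronecker layout of $x_{n+1}\otimes\bg$ so that $H$ and $h$ are indexed consistently; and (iii) the strict-convexity step has to be re-argued for the non-binary Dirichlet prior rather than merely quoted from the binary case, which is why the positive-probability argument is spelled out above.
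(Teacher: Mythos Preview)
Your proposal is correct and follows essentially the same approach as the paper: factor each $H$-entry via independence of the query chain and the in-context chains, expand $\bb{E}[\G_{i',j'}\G_{k',l'}]$ into diagonal and off-diagonal sums, evaluate the resulting multi-point correlations by propagating from the uniform initial law through powers of $\bP$, and treat $h$ analogously. Your additional remarks on strict convexity under the Dirichlet prior and on the ordering of $(i',j',k',l')$ in the diagonal term are more explicit than what the paper records, but the underlying argument is the same.
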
 
\begin{proof} 
\textbf{Evaluating entries in $H$: $\bb{E}\left[ x_{n+1,i}x_{n+1,j}    \G_{i',j'}\G_{k',l'}\right]$ with $i,j\in[d]$, $i\leq j$, $i',j',k
,l'\in [d+1]$, and $i'\leq k',j'\leq l'$.}
Since $x_{n+1}$ and $x_{i}$ $(i\in[n])$ are independent, as before, we evaluate $\bb{E}\left[ x_{n+1,i}x_{n+1,j} \right]$ and $\bb{E} \left[    \G_{i',j'}\G_{k',l'}\right]$ separately. 

The first term captures the expected coactivation between two positions in the query chain. 
\begin{align}
\bb{E}\left[ x_{n+1,i}x_{n+1,j}\right] 
&= \bb{E} 
\left[
\sum_{s\in \mathcal{S}}
\sum_{s'\in \mathcal{S}}
ss' 
\bb{P} \left[
x_{n+1,i}=s,x_{n+1,j}=s'
\right]
\right]\nonumber \\ 
&= \bb{E} 
\left[
\sum_{s\in \mathcal{S}}
\sum_{s'\in \mathcal{S}}
ss' 
(\bP^{j-i})_{ss'} 
\bb{P}[x_{n+1,i}=s]
\right]\nonumber \\ 
&=  \frac{1}{\lvert \gS\rvert} \bb{E} \left[
\sum_{s\in \mathcal{S}}
\sum_{s'\in \mathcal{S}}
\sum_{s''\in \gS} 
ss' 
(\bP^{j-i})_{ss'}
(\bP^{i-1})_{ss'} 
\right].
\end{align}

We proceed to evaluate the second term, which denotes the aggregated coactivation over the in-context samples.
\begin{align}
\bb{E} \left[    \G_{i',j'}\G_{k',l'}\right]
&= \frac{1}{n^2}\bb{E}\left[
\sum_{k=1}^{n}
x_{k,i'}  x_{k,j'} 
x_{k,k'}
x_{k,l'}
\right]\nonumber\\ &+\frac{1}{n^2}\bb{E}\left[
\sum_{k=1}^{n}\sum_{\kappa=1, \kappa\neq k}^{n}
x_{k,i'}   x_{k,j'} 
x_{\kappa,k'}
x_{\kappa,l'}
\right]. \label{eq:nonbinary-gg}
\end{align}
The summands in the first term, in the case of $j'\le k'$, has the following form. The remaining orderings of $i', j', k', l'$ can be computed in a similar manner.
\begin{align}
&\quad\bb{E}
\left[\sum_{k=1}^{n}
x_{k,i'} x_{k,j'}
x_{k,k'}
x_{k,l'}\right] \nonumber \\
&= \frac{n}{\lvert \gS \rvert} \mathbb{E}\sum_{s_1,s_2,s_3,s_4,s_5\in\gS}
s_1s_2s_3s_4s_5 (P^{i'-1})_{s_1 s_2}(P^{j'-i'})_{s_2s_3} (P^{k'-j'})_{s_3s_4} (P^{l'-k'})_{s_4s_5}. 
\end{align}
In the summands of the second term in Eq.~\ref{eq:nonbinary-gg} contains the product of the coactivation from two distinct chains $z_k,z_\kappa$ ($k\neq \kappa$). Since the two chains are independently sampled from the Markovian kernel, the two products $x_{k,i'}x_{k,j'}$ and $x_{\kappa,k'}x_{\kappa,l'}$ are independent. 
\begin{align}
&\quad\bb{E}\left[
\sum_{k=1}^{n}\sum_{\kappa=1, \kappa\neq k}^{n}
x_{k,i'} x_{k,j'}
x_{\kappa,k'}
x_{\kappa,l'}
\right]\nonumber\\
&=n(n-1)
\bb{E}
\left[
\left(
\sum_{s_1,s_2,s_3\in \gS}
s_1 s_2 s_3 
(\bP^{i'-1})_{s_1s_2}
(\bP^{j'-i'})_{s_2s_3}
\right) \right.\nonumber\\
&\quad \left.\left(
\sum_{s_1,s_2,s_3\in \gS}
s_1 s_2 s_3 
(\bP^{k'-1})_{s_1s_2}
(\bP^{l'-k'})_{s_2s_3}
\right) 
\right].
\end{align}

\textbf{Evaluating entries in $h$: $\bb{E}[x_{n+1,j}y_{n+1}\G_{i',j'}]$ with $i'\leq j'$.}
We utilize the independence of the query and in-context chains to separate the expectation of $x_{n+1,j}y_{n+1}$ and $\G_{i',j'}$. Specifically, we have 
\begin{align}
\bb{E}[x_{n+1,j}y_{n+1}\G_{i',j'}] &= 
\frac{1}{n}\sum_{k=1}^{n}\bb{E}\left[
x_{n+1,j}y_{n+1}x_{k,i'}x_{k,j'}
\right] \nonumber\\ 
&=  
\frac{1}{n}\sum_{k=1}^{n}
\bb{E}
\left[
x_{n+1,j}y_{n+1}
\right] 
\bb{E} 
\left[ 
x_{k,i'}x_{k,j'} 
\right] \nonumber\\ 
&=  
\frac{1}{n}\sum_{k=1}^{n}
\bb{E}
\left[
\left(
\sum_{s_1,s_2,s_3\in \gS}
s_1 s_2 s_3 
(\bP^{j-1})_{s_1s_2}
(\bP^{d+1-j})_{s_2s_3}
\right)
\right.\nonumber \\
&\quad\left.
\left(
\sum_{s_1,s_2,s_3\in \gS}
s_1 s_2 s_3 
(\bP^{i'-1})_{s_1s_2}
(\bP^{j'-i'})_{s_2s_3} 
\right)
\right].
\end{align}
\end{proof}
 
Although we obtain a closed-form expression for the global minimum $X^* = H^{-1} h$, the entries of both $H$ and $h$ involve high-order interactions across the kernel. In particular, each entry entails expectations of products of entries from powers of a random transition matrix $\bP$, where each row of $\bP$ is independently sampled from a Dirichlet prior. Evaluating such expectations is analytically challenging due to the nonlinear dependencies introduced by matrix multiplication, especially when powers of $\bP$ couple multiple rows. However, for fixed values of the Dirichlet concentration parameter $\alpha$, these expectations are tractable via Monte Carlo sampling, enabling empirical evaluation of $H$ and $h$.

Moreover, the functional form of these entries suggests that $H$ is dense in general, as most entries involve sums over all state triplets and nontrivially depend on all entries of $\bP$. As a consequence, the inverse $H^{-1}$ is also expected to be dense, indicating that the global optimum $X^*$ integrates information across all positions. 

\subsection{Optimization Limitation of Single-Layer LSA (Theorem~\ref{thm:nphard})}\label{sec:appendix-np}
We have found the global minimizer of the reparameterized objective $\tilde{f}.$ 
The goal is to find a transformer parameter $(P,Q)$ such  that $\phi(P,Q)=X^*.$ 
We show that in general this problem is NP-hard in Theorem~\ref{thm:nphard}.   
We formally state the original problem   to solve the reparameterization equation for transformer parameters $(b,A)$ below.
\begin{problem}[Transformer parameter reconstruction.] 
Let $d\in \mathbb{Z}_{\geq 1}$ denote the dimension of the in-context data, i.e., the length of the in-context Markov chain minus one. Given $X\in\mathbb{R}^{dm}$ ($m=(d+1)(d+2)/2$), solve the following system for $b\in \mathbb{R}^{d+1},A\in\mathbb{R}^{(d+1)\times d}$  
\begin{align} 
&\begin{cases}
     b_i A_{k,j} = X_r \quad \text{if } i = k\\
      b_i A_{k,j} + b_k A_{i,j} = X_r\quad \text{if } i \neq  k 
\end{cases},\label{eq:bf} 
\end{align}
where $i,k \in [d+1]; j\in [d]; r = (j-1)m + i(d+1)+k - \sum_{i'\leq i} (i'-1).$
\end{problem}

We introduce a broader class of bilinear feasibility problem that incorporates the above problem as follows.
\begin{problem}[Bilinear feasibility.]\label{prob:bf-Q}
Let $d\in \mathbb{Z}_{\geq 1}$ denote the dimension of the in-context data, i.e., the length of the in-context Markov chain minus one. Define $m=(d+1)(d+2)/2$. Let $X\in\mathbb{R}^{dm}$   and $D^{(r)}\in\mathbb{R}^{
(d+1)\times(d+1)
}$ ($r\in[dm]$) be given. Solve the following system for $b\in \mathbb{R}^{d+1},A\in\mathbb{R}^{(d+1)\times d}$. 
\begin{align}  
&b^\top D^{(r)} A_{:,j} =  X_r,\label{eq:bf-Q}  
\end{align}
where $j\in [d]; r = (j-1)m + i(d+1)+k - \sum_{i'\leq i} (i'-1)$ with $i,k \in [d+1]$.
\end{problem}

Next, we define a bilinear program and establish its equivalence with the above bilinear feasibility problem.
Each equality in Eq.~\ref{eq:bf-Q} is equivalent to two inequalities. We hold one inequality out and treat it as the objective. 
We state the bilinear program below.
\begin{problem}[Bilinear Program]\label{prob:bp-Q}
Let $X$ be some given vector in $\mathbb{R}^{dm}$ where $d\in \mathbb{Z}_{\geq 1}$ is the problem dimension and $m=(d+1)(d+2)/2$. Let $D^{(r)}\in\mathbb{R}^{(d+1)\times (d+1)}$ with $r\in[dm]$ be some given matrices. Find the optimal solution  $ b^*\in\mathbb{R}^d,A^*\in\mathbb{R}^{(d+1)\times d} $ for the following problem:
\begin{align}
\underset{b\in\mathbb{R}^d,A\in\mathbb{R}^{(d+1)\times d}}{\min}\quad &
b^\top D^{(1)}A_{:,1} - X_1 \nonumber \\ 
\text{s.t. }\quad \qquad & b^\top D^{(1)}A_{:,1} \geq X_1, \nonumber\\ 
&b^\top D^{(r)}A_{:,j}  \leq X_r, \nonumber\\ 
&b^\top D^{(r)}A_{:,j},  \geq X_r,\label{eq:bp-Q}   
\end{align}
where $j\in [d] ,r = (j-1)m + i(d+1)+k - \sum_{i'\leq i} (i'-1)$ with $i,k\in[d+1]$.
\end{problem}

We now show the equivalence between solving the bilinear feasibility problem and finding a zero minimum in the bilinear program.
\begin{lemma}[Equivalence between bilinear feasibility and bilinear program]\label{lem:bf-bp}
The bilinear feasibility problem~(\ref{eq:bf-Q}) has a solution if and only if the bilinear program~(\ref{eq:bp-Q}) has a feasible solution whose objective function value is zero.
\end{lemma}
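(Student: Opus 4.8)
The plan is to prove the biconditional by the standard device of splitting each scalar equality into two inequalities, and then checking that the single inequality promoted to the objective of Problem~\ref{prob:bp-Q} is exactly one of the halves produced this way, so that no constraint is silently lost. Over the reals, $u=v$ is equivalent to the conjunction $u\ge v\ \wedge\ u\le v$; applying this to each of the $dm$ bilinear equations $b^\top D^{(r)}A_{:,j}=X_r$ of Problem~\ref{prob:bf-Q} yields $2dm$ inequalities, and Problem~\ref{prob:bp-Q} retains all of them except the half $b^\top D^{(1)}A_{:,1}\le X_1$ at the index $(r,j)=(1,1)$, which it replaces by the objective $\min\,(b^\top D^{(1)}A_{:,1}-X_1)$ together with the guard $b^\top D^{(1)}A_{:,1}\ge X_1$.

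For the forward implication I would start from a solution $(b,A)$ of the feasibility system~(\ref{eq:bf-Q}): all $dm$ equalities hold, so in particular $b^\top D^{(1)}A_{:,1}=X_1$, hence the objective in~(\ref{eq:bp-Q}) equals $0$, and every surviving inequality constraint holds with equality; thus $(b,A)$ is a feasible point of Problem~\ref{prob:bp-Q} with objective value $0$. For the converse I would take $(b^*,A^*)$ feasible for~(\ref{eq:bp-Q}) with objective value $0$. The vanishing objective gives $b^{*\top}D^{(1)}A^*_{:,1}=X_1$, recovering the $(1,1)$ equation (and rendering the guard $b^{*\top}D^{(1)}A^*_{:,1}\ge X_1$ redundant), while for every other pair $(r,j)$ the two feasibility inequalities $b^{*\top}D^{(r)}A^*_{:,j}\le X_r$ and $b^{*\top}D^{(r)}A^*_{:,j}\ge X_r$ together force $b^{*\top}D^{(r)}A^*_{:,j}=X_r$. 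Hence $(b^*,A^*)$ solves~(\ref{eq:bf-Q}).

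The argument carries essentially no analytic content, so there is no genuine obstacle; the only point requiring care is the bookkeeping that the guard together with the minimized objective faithfully encodes the omitted half ``$b^\top D^{(1)}A_{:,1}\le X_1$'': on the feasible set the guard makes the objective nonnegative, so attaining value $0$ is precisely equivalent to enforcing that inequality. It is also worth recording that, since the objective is bounded below by $0$ over the feasible region, a feasible point with value $0$ is automatically a global minimizer, so ``feasible with value $0$'' and ``optimal with optimal value $0$'' coincide; this is the form of the lemma that is used downstream. With Lemma~\ref{lem:bf-bp} in hand, the NP-hardness of the bilinear separability system of~\cite{bennett1993bilinear} can be routed through Problem~\ref{prob:bp-Q} and then back to the transformer-parameter reconstruction system~(\ref{eq:bf}), yielding Theorem~\ref{thm:nphard}.
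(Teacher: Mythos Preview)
Your proposal is correct and follows essentially the same argument as the paper: both directions are handled by splitting equalities into inequality pairs, using feasibility plus the guard $b^\top D^{(1)}A_{:,1}\ge X_1$ to force the objective nonnegative, and observing that value $0$ recovers the held-out half. Your exposition is slightly more explicit about the bookkeeping (and you additionally note that value $0$ implies optimality), but the logical content is identical.
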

\begin{proof}
   Before showing the equivalence, we note that the minimal objective function value achievable by the feasible variables for the program~(\ref{eq:bp-Q}) is greater than or equal to zero. Because for a solution to be feasible, it must satisfy $b^\top D^{(1)}A_{:,1} \geq X_1 $ and hence $b^\top D^{(1)}A_{:,1} - X_1 $ must be greater than or equal to zero. \\
($\Rightarrow$) Suppose the bilinear feasibility problem has a solution $b,A$. Then they must satisfy the inequality constraints in the program~(\ref{eq:bp-Q}). Moreover, the objective function value would also evaluate to zero by definition of $b,A$. Therefore, $b,A$ is an optimal solution of the bilinear program~(\ref{eq:bp-Q}).     \\
($\Leftarrow$) Suppose $b^*,A^*$ is the optimal solution of the bilinear program~(\ref{eq:bp-Q}) whose objective function value is zero. Then they must satisfy all inequality constraints, which are equivalent to  $dm-1$ equations in the bilinear system~(\ref{eq:bf-Q}). Additionally, the optimal solution guarantees the objective being zero and hence the remaining one equation also holds.
\end{proof}

\paragraph{Bilinear separability program.}
By Lemma~\ref{lem:bf-bp}, it suffices to show finding a feasible solution achieving   zero minimum for the bilinear program~(\ref{eq:bp-Q}) is NP-hard. To do this, we reduce from bilinear program (13) in Theorem 3.1 in~\cite{bennett1993bilinear}. This   bilinear program has been shown to be NP-hard by reducing from the problem of bilinear separability: determining whether two disjoint sets of points can be strictly separated by two planes such that one set occupies exactly three of the four regions defined by the planes. 
Let $n'$ denote dimension of the points and $m',k'$ are the number of points in the two sets. Let $A'\in \mathbb{R}^{m'\times n'}$ and $B \in \mathbb{R}^{k' \times n'}$ store the points in the two disjoint sets, where each row corresponds to a point. Let $e$ denote a vector of ones with arbitrary dimension.
We follow their notation, and in cases of conflict, we use a prime symbol to indicate their variables.  
Their original system to determine the bilinear separability is: 
\begin{align}
\underset{z^1,z^2,w^1,w^2,\gamma^1,\gamma^2}{\text{min}} \quad & z^1 z^2 \notag \\
\text{s.t.}\quad \qquad \ \ 
&\begin{array}{rl}
    - A' w^1 + \gamma^1 e + e &\leq 0, \\ 
    B w^1 - \gamma^1 e + e &\leq z^1,  \\
    0 &\leq z^1,  \\ 
\end{array}
\quad 
\begin{array}{rl}
    - A' w^2 + \gamma^2 e + e &\leq 0,  \\ 
    B w^2 - \gamma^2 e + e &\leq z^2,   \\
    0 &\leq z^2.
\end{array}
\label{eq:bs}
\end{align}

Here, $z^1,z^2\in\mathbb{R}^{k'}$ are decision variables, $w^1,w^2\in\mathbb{R}^{n}$ and $\gamma^1,\gamma^2\in\mathbb{R}$ are the remaining auxiliary variables; $e$ is some given constant vector in $\mathbb{R}^{n'}$. 
The above system has zero minimum if and only if the two sets are separable in the above mentioned way. 

Program~(\ref{eq:bs}) is equivalent to the following problem after splitting the decision variables into two nonnegative parts. 
\begin{align}
\underset{\substack{
    z^1_+, z^1_-, z^2_+, z^2_-, \\
    w^1, w^2, \gamma^1, \gamma^2
  }}{\text{min}} \quad & 
     (z^1_+ - z^1_-)(z^2_+ - z^2_-)  \nonumber\\
  \text{s.t.}\quad \quad \ \    &
    \begin{array}{rl}
    -A' w^1 + \gamma^1 e + e & \leq 0, \\
    B w^1 - \gamma^1 e + e & \leq z^1_+ - z^1_-, \\
    0 & \leq z^1_+, z^1_-,
    \end{array}
    \quad 
    \begin{array}{rl}
    -A' w^2 + \gamma^2 e + e & \leq 0, \\
    B w^2 - \gamma^2 e + e & \leq z^2_+ - z^2_-, \\
    0 & \leq z^2_+, z^2_-.
    \end{array} \label{eq:bs-1}
\end{align}

The reduction idea is to map decision variables to the first few entries of $(b,A_{:,1})$, auxiliary variables to entries in $b,$ and as many constants as possible to entries in $A$. We first give a reduction example (\ref{ex:reduction}) and then formalize the NP-hardness of solving the bilinear program (\ref{eq:bp-Q}) corresponding to recovering transformer parameters from reparameterization. 
\begin{figure}[t]
    \centering
    \includegraphics[width=0.4\linewidth]{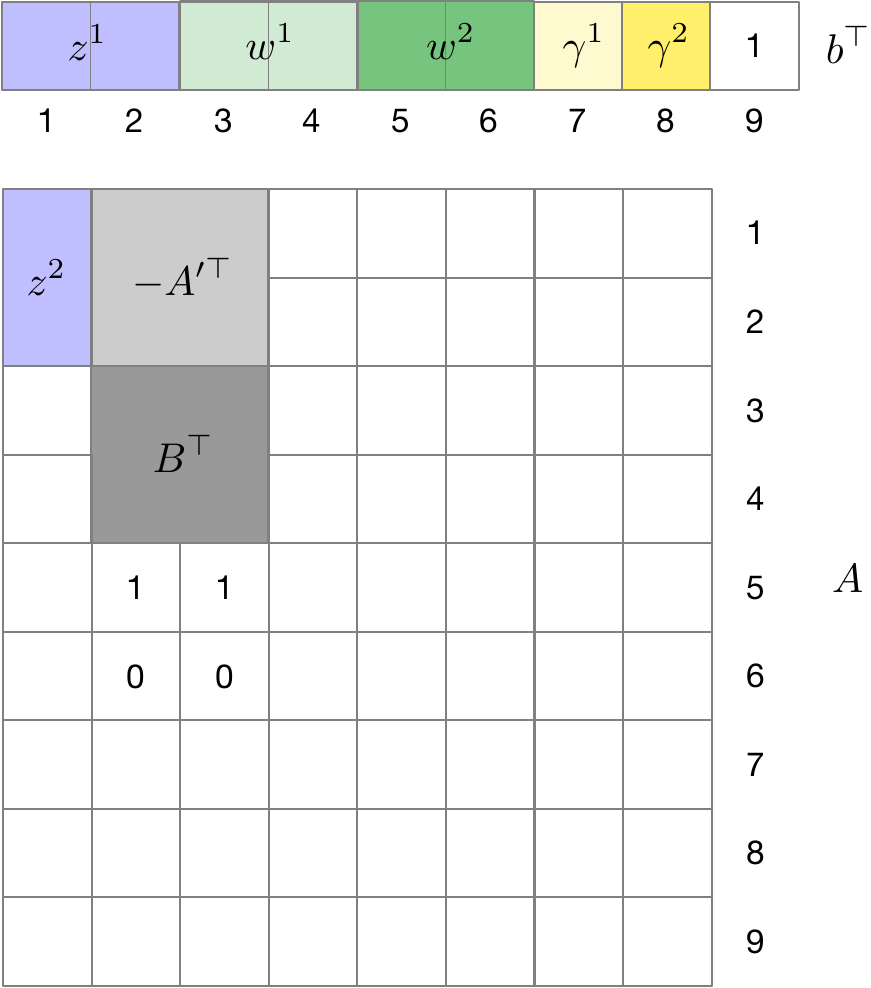}
    \caption{ How each variable in the LSA parameter recovery problem is utilized to construct a feasibility program that contains the objective and all constraints in the bilinear separability program~(\ref{eq:bs-1}) for the case of $m'=k'=2,n'=2$, and $d=8$.  }
    \label{fig:reduction-ex}
\end{figure}
\begin{example}\label{ex:reduction}
As an example, we consider the case for  $m'=k'=2,n'=2$, and $d=8$. We detail the variable assignment below and visualize it in Fig.~\ref{fig:reduction-ex}.
 
In this example, we explicitly map each constant, decision, and auxiliary variable from the bilinear separability program (\ref{eq:bs-1}) to the corresponding parameter entries in the bilinear feasibility program (\ref{eq:bp-Q}). We assign variables to the following matrices and vectors in the separability program:
\begin{inparaenum}[(i)]    
\item \textit{constant}:  $ A'\in \mathbb{R}^{2\times 2}$, $B\in \mathbb{R}^{2 \times 2}, $ $e= \begin{bmatrix}1&1\end{bmatrix}^\top$.
\item decision variables: $z_+^1,z_-^1,z_+^2,z_-^2\in \mathbb{R}^{2}$; 
\item auxiliary variables: $w^1,w^2\in \mathbb{R}^{2}$, $\gamma^1,\gamma^2\in \mathbb{R}^{2}$.
\end{inparaenum}   

For indexing, we use $b_{i:i'}$ to denote the subvector consisting of entries $i$ through $i'$ of $b$, and $A_{i:i',j}$ to denote the corresponding rows $i$ through $i'$ of the $j$th column of $A$. Additionally, $A_{i:i',j:j'}$ denotes the submatrix of $A$ spanning rows $i$ to $i'$ and columns $j$ to $j'$, inclusive.
We use $ (z_+^1)_i$ to denote the $i$th entry of $z_+^1$.

\paragraph{Decision variables and objective for $m'=k'=2$.}
We map the decision variables and objective from the separability program into the feasibility program framework.
To do so, we split $b_{1:2},A_{{1:2},1} $ into two nonnegative components and keep the rest of the variables as it is. 
Let $\tilde{b}$ denote the resulting variable: 
\begin{align}
\tilde{b}:= \begin{bmatrix}  b^+_1 - b^-_1   &b^+_{2} - b^-_{2}  &b_{3}  &b_{4}  &b_{5}  &b_{6}  &b_{7}  &b_{8}  & b_{9} \end{bmatrix}^\top.
\end{align}
Likewise, we denote a partially split version of the first column of $A$ as follows
\begin{align}
\tilde{A}_{:,1}:= \begin{bmatrix}  A^+_{1,1} - A^-_{1,1}   &A^+_{2,1} - A^-_{2,1}  &A_{3,1}&A_{4,1}&A_{5,1} &A_{6,1}&A_{7,1}&A_{8,1}& A_{9,1} \end{bmatrix}^\top.
\end{align}

\underline{\textit{Variable assignment:}} We map  decision variables as follows:
\begin{align}
& z_+^1 \mapsto b_{1:2}^+,\quad 
  z_-^1 \mapsto b_{1:2}^-, \quad  z_+^2 \mapsto A_{{1:2},1}^+,\quad 
  z_-^2 \mapsto A_{{1:2},1}^-. \label{eq:reduction-ex-decision-var}
\end{align}  

\underline{\textit{$D^{(1)}$'s construction:}} We set $D^{(1)}$'s entry to 1 at the following positions and 0 otherwise: 
\begin{align}
\{ (i,i): i \in [2]\}.
\end{align} 
Then the objective in the feasibility program becomes
\begin{align}
\min \ (b_{1:2}^+ - b_{1:2}^-)  \cdot (A_{{1:2},1}^+ - A_{{1:2},1}^-) - X_1.
\end{align}
Since $X_1$ is fixed, minimizing the above expression is equivalent to minimizing the objective in~(\ref{eq:bs-1}). Moreover, because we apply the same variable splitting to the corresponding entries of $b$ and $A$, the nonnegativity constraints on $z_+^1, z_-^1, z_+^2, z_-^2$ are preserved in the feasibility formulation.

\paragraph{Linear constraints for $m'=k'=n'=2$.}
We then encode the linear constraints into the feasibility program setup.

\underline{\textit{Variable assignment:}}
\begin{inparaenum}
\item We map the given constant matrices $A',B $  to sub-matrices of $A.$ Since the first column of $A$ is reserved for decision variables, we begin assigning the remaining constants starting from the second column.
\begin{align}
A'\mapsto -A_{1:2,2:3}^\top,\quad B \mapsto A_{3:4,2:3}^\top.    \label{eq:reduction-ex-constant-ab}
\end{align} 
\item We assign the entries in $\tilde{b}$ after the decision variables to the auxiliary variables: 
\begin{align}
w^1 \mapsto  b_{3:4},\quad w^2 \mapsto  b_{5:6},\quad \gamma^1 \mapsto b_{7},\quad \gamma^2\mapsto  b_{8}.    \label{eq:reduction-ex-aux}
\end{align} 
\item We further require a few entries to be fixed at 0 or 1:
\begin{align}
b_{9} = A_{5, 2:3} := 1,  \quad A_{6, 2:3} := 0.\label{eq:reduction-ex-constant}
\end{align} 
\end{inparaenum}

\underline{\textit{Counting constraints in both programs:}}
In the separability program, the constraints
$-A' w^1 + \gamma^1 e + e \leq 0$ and $-A' w^2 + \gamma^2 e + e \leq 0$ each consist of 2 scalar inequalities.
Similarly, the constraints
$B w^1 - \gamma^1 e + e \leq z^1_+ - z^1_-$ and
$B w^2 - \gamma^2 e + e \leq z^2_+ - z^2_-$
also contribute 2 inequalities each. In total, there are 8 scalar constraints in the separability program, and we construct 8 corresponding $D^{(r)}$ matrices in the feasibility program to match these.

In the feasibility program, the right-hand side index $r$ ranges from 1 to $dm=8\cdot 45 = 360$. Due to the indexing convention, incrementing the column index $j$ of $A$ increases $r$ by $m=45$. Specifically, for each $j$, the set $r \in \{ (j{-}1)m + 1,(j{-}1)m + 2, \ldots, jm \} = \{ 45(j-1) + 1 , 45(j-1) + 2 ,\ldots,45j\}$ corresponds to $45$ constraints involving the expression $\tilde{b}^\top D^{(r)} A_{:,j}$.  

\underline{\textit{$D^{(r)}$'s construction:}}   
For each linear constraint in the separability program, we proceed as follows:
(a) express the constraint in its full matrix form;
(b) encode it using the variables in the feasibility program based on the earlier assignments (Eqs.~\ref{eq:reduction-ex-constant-ab}-\ref{eq:reduction-ex-constant}); and
(c) configure the corresponding $D^{(r)}$ to implement the constraint within the feasibility setup.
We set each $D^{(r)}$ based on the terms $b_i A_{k,j}$ that appear in the constraint written in step (b). For instance, if a term $b_i A_{k,j}$ appears with coefficient 1, we set the corresponding entry $D^{(r)}_{i,k} = 1$.

\begin{enumerate} 
\item $-A'  w^1 + \gamma^1 e + e \leq 0$.

\begin{enumerate}
    \item Full matrix: \begin{align}
    -  \begin{bmatrix}
    A'_{1,1} &A'_{1,2}\\
    A'_{2,1} &A'_{2,2}
    \end{bmatrix} \begin{bmatrix}
        w_1^1 \\ w_2^1
    \end{bmatrix} +  \gamma^1\begin{bmatrix}
    1 \\1
    \end{bmatrix} +   \begin{bmatrix}
    1 \\1
    \end{bmatrix}\leq \begin{bmatrix}
    0 \\ 0 
    \end{bmatrix}.
    \end{align}

\item Mapping to feasibility program's constraints:
\begin{align}
       \begin{bmatrix}
    A_{1,2} &A_{2,2}\\
    A_{1,3} &A_{2,3}
    \end{bmatrix} \begin{bmatrix}
        b_{3} \\ b_4
    \end{bmatrix} +  b_7\begin{bmatrix}
    A_{5,2}  \\A_{5,3}  
    \end{bmatrix} + b_9  \begin{bmatrix}
    A_{5,2}  \\A_{5,3}  
    \end{bmatrix}\leq b_9 \begin{bmatrix}
    A_{6,2} \\ A_{6,3}\label{eq:reduction-ex-constraint1}
    \end{bmatrix}.
    \end{align}
\item 
We use the first index $r$ corresponding to $j = 2$ to encode the first row of inequality (\ref{eq:reduction-ex-constraint1}), and the first $r$ corresponding to $j = 3$ to encode the second row. This leads us to construct $D^{(46)}$ and $D^{(91)}$.
Based on the terms in~(\ref{eq:reduction-ex-constraint1}), we set the following entries of $D^{(45)},D^{(91)}$ as 1 and others 0:
\begin{align}
&\{(3,1), (4,2)\}   \cup \{ (7,5) \}   \cup\{( 9,5)\} \cup \{(9,6)\}. 
\end{align} 
\end{enumerate}

\item $-A'  w^2 + \gamma^2 e + e \leq 0$.

\begin{enumerate}
     \item Full matrix: \begin{align}
    -  \begin{bmatrix}
    A'_{1,1} &A'_{1,2}\\
    A'_{2,1} &A'_{2,2}
    \end{bmatrix} \begin{bmatrix}
        w_2^1 \\ w_2^2
    \end{bmatrix} +  \gamma^2\begin{bmatrix}
    1 \\1
    \end{bmatrix} +   \begin{bmatrix}
    1 \\1
    \end{bmatrix}\leq \begin{bmatrix}
    0 \\ 0 
    \end{bmatrix}.
    \end{align}
\item Mapping to feasibility program's constraints:
\begin{align}
       \begin{bmatrix}
    A_{1,2} &A_{2,2}\\
    A_{1,3} &A_{2,3}
    \end{bmatrix} \begin{bmatrix}
        b_5 \\ b_6
    \end{bmatrix} +  b_8\begin{bmatrix}
    A_{5,2}  \\A_{5,3}  
    \end{bmatrix} + b_9  \begin{bmatrix}
    A_{5,2}  \\A_{5,3}  
    \end{bmatrix}\leq b_9 \begin{bmatrix}
    A_{6,2} \\ A_{6,3}\label{eq:reduction-ex-constraint2}
    \end{bmatrix}.
    \end{align}
\item We use the second index $r$ corresponding to $j = 2$ to encode the first row of inequality (\ref{eq:reduction-ex-constraint2}), and the second $r$ corresponding to $j = 3$ to encode the second row, which leads us to construct $D^{(47)}$ and $D^{(92)}$.
Based on the terms in~(\ref{eq:reduction-ex-constraint2}), we set the following entries of $D^{(47)},D^{(92)}$ as 1 and others 0:
\begin{align}
&\{(5,1), (6,2) \}   \cup \{ (8,5) \}  \cup\{( 9,5)\} \cup \{(9,6)\}.
\end{align}

\end{enumerate}

\item $B w^1 - \gamma^1 e + e \leq z^1_+ - z^1_-$.

\begin{enumerate}
\item Full matrix: \begin{align}
      \begin{bmatrix}
    B_{1,1}&B_{1,2}\\
    B_{2,1}&B_{2,2}
    \end{bmatrix} \begin{bmatrix}
        w_1^1 \\ w_2^1 
    \end{bmatrix} -\gamma^1\begin{bmatrix}
    1 \\1
    \end{bmatrix} +   \begin{bmatrix}
    1 \\1
    \end{bmatrix}\leq \begin{bmatrix}
    (z_+^1)_1 - (z_-^1)_1\\ (z_+^1)_2 - (z_-^1)_2
    \end{bmatrix}. 
    \end{align}
\item Mapping to feasibility program's constraints:
\begin{align}
       \begin{bmatrix}
    A_{3,2} &A_{4,2}\\
    A_{3,3} &A_{4,3}
    \end{bmatrix} \begin{bmatrix}
        b_3 \\ b_4
    \end{bmatrix} -b_7 \begin{bmatrix}
    A_{5,2}\\A_{5,3}
    \end{bmatrix} + b_9  \begin{bmatrix}
    A_{5,2}  \\A_{5,3}  
    \end{bmatrix}\leq  \begin{bmatrix}
    (b_{1}^+ - b_1^- ) A_{5,2}  \\(b_{2}^+ - b_2^- ) A_{5,3}  \label{eq:reduction-ex-constraint3}
    \end{bmatrix}.
    \end{align}
\item 
We use the third index $r$ corresponding to $j = 2$ to encode the first row of inequality (\ref{eq:reduction-ex-constraint3}), and the third $r$ corresponding to $j = 3$ to encode the second row.
We set the following entries of $D^{(48)},D^{(93)}$ as 1  
\begin{align}
\{(3,3), (4,4) \}   \cup \{( 9,5)\},  
\end{align}
the following entries to $ -1$ 
\begin{align}
\{ (7,5) \}  \cup  \{(j-1,5)\},
\end{align}
and the rest to 0,
where $j=2$ for $D^{(48)}$ and $j=3$ for $D^{(93)}$.
\end{enumerate}

\item $B w^2 - \gamma^2 e + e \leq z^2_+ - z^2_-$.

\begin{enumerate}
\item  
Full matrix: \begin{align}
\begin{bmatrix}
    B_{1,1}&B_{1,2}\\
    B_{2,1}&B_{2,2}
    \end{bmatrix} \begin{bmatrix}
        w_1^2 \\ w_2^2
    \end{bmatrix} -\gamma^2\begin{bmatrix}
    1 \\1
    \end{bmatrix} +   \begin{bmatrix}
    1 \\1
    \end{bmatrix}\leq \begin{bmatrix}
    (z_+^2)_1 - (z_-^2)_1\\ (z_+^2)_2 - (z_-^2)_2
\end{bmatrix}. 
\end{align}

\item Mapping to feasibility program's constraints:
\begin{align}
       \begin{bmatrix}
    A_{3,2} &A_{4,2}\\
    A_{3,3} &A_{4,3}
    \end{bmatrix} \begin{bmatrix}
        b_5 \\ b_6
    \end{bmatrix} -b_8 \begin{bmatrix}
    A_{5,2}\\A_{5,3}
    \end{bmatrix} + b_9  \begin{bmatrix}
    A_{5,2}  \\A_{5,3}  
    \end{bmatrix}\leq  \begin{bmatrix}
    (b_{1}^+ - b_1^- ) A_{5,2}  \\(b_{2}^+ - b_2^- ) A_{5,3}  \label{eq:reduction-ex-constraint4}
    \end{bmatrix}.
    \end{align}
\item 
We use the forth index $r$ corresponding to $j = 2$ to encode the first row of inequality (\ref{eq:reduction-ex-constraint4}), and the forth $r$ corresponding to $j = 3$ to encode the second row.
We set the following entries of $D^{(49)},D^{(94)}$ as 1  
\begin{align}
\{(5,3),(6,4) \}   \cup \{( 9,5)\},  
\end{align}
the following entries to $ -1$ 
\begin{align}
\{ (8,5) \}  \cup  \{(j-1,5)\},
\end{align}
and the rest to 0,
where $j=2$ for $D^{(49)}$ and $j=3$ for $D^{(94)}$.
\end{enumerate} 
\end{enumerate}

In this example, we matched each constraint by configuring the corresponding $D^{(r)}$ matrices along with the variable assignments for $b$ and $A$. Additionally, we explicitly set $X_{46}, X_{91}$ , $X_{47},X_{92}$ , $X_{48}, X_{93}$, $X_{49}, X_{94}$ to zero.

\end{example}

Building on the illustrative reduction above, we now formalize the general reduction. The variable assignment patterns and matrix constructions introduced earlier naturally extend to arbitrary problem sizes, as detailed below.
\begin{theorem}[\emphh{Theorem~\ref{thm:nphard} restated}]
\label{thm:nphard1} 
Let $d \in \mathbb{Z}_{\geq 1}$ denote the dimension of the in-context data (the length of the Markov chain minus one), and let $m = \frac{(d+1)(d+2)}{2}$. Given $X \in \mathbb{R}^{dm}$ representing the global minimizer of the reparameterized loss, solving for 1-layer LSA parameters $(b,A)$ that satisfy the reparameterization equation Eq.~\ref{eq:reparam}  is NP-hard  with respect to $d.$ 
\end{theorem}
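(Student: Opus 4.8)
The plan is to prove NP-hardness by a polynomial-time reduction from the bilinear separability problem of Bennett--Mangasarian~\cite{bennett1993bilinear}, as already sketched, but carried out for arbitrary instance sizes $m', k', n'$. The first step is to package the reparameterization system Eq.~\ref{eq:reparam} into the bilinear feasibility form of Problem~\ref{prob:bf-Q}: for each $r = (j-1)m + i(d+1)+k - \sum_{i'\le i}(i'-1)$ we define $D^{(r)}\in\mathbb{R}^{(d+1)\times(d+1)}$ to be the $0/1$ matrix that, when sandwiched as $b^\top D^{(r)} A_{:,j}$, reproduces exactly the monomial $b_iA_{k,j}$ (diagonal case $i=k$) or $b_iA_{k,j}+b_kA_{i,j}$ (off-diagonal case). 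Then by Lemma~\ref{lem:bf-bp} it suffices to show that deciding whether the bilinear program Eq.~\ref{eq:bp-Q} has a feasible point with objective value zero is NP-hard in $d$.

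Next I would perform the reduction itself. Given a separability instance with matrices $A'\in\mathbb{R}^{m'\times n'}$, $B\in\mathbb{R}^{k'\times n'}$ and the split-variable formulation Eq.~\ref{eq:bs-1}, I set $d$ large enough that $\mathbb{R}^{d+1}$ and $\mathbb{R}^{(d+1)\times d}$ have room to host: (i) the decision variables $z^1_\pm\mapsto$ a block of $b$, $z^2_\pm\mapsto$ a block of $A_{:,1}$ (both split into nonnegative parts, which is automatically enforced because the same splitting is applied coordinatewise to $b$ and to the first column of $A$); (ii) the auxiliary variables $w^1,w^2,\gamma^1,\gamma^2\mapsto$ further entries of $b$; (iii) the constant data $A',B,e$ and the fixed scalars $0,1\mapsto$ a block of columns of $A$ pinned to the appropriate constants via their own feasibility equations $b_\ell A_{k,j}=X_r$ with $b_\ell$ already fixed to $1$. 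The objective $b^\top D^{(1)}A_{:,1}-X_1$ is made to equal $(z^1_+-z^1_-)^\top(z^2_+-z^2_-)-X_1$ by choosing $D^{(1)}$ with ones on the relevant diagonal positions and setting $X_1$ to the separability objective's target (zero). For each scalar linear inequality in Eq.~\ref{eq:bs-1} I allocate one index $r$ (hence one $D^{(r)}$ and one $X_r$, set to $0$), following the pattern of Example~\ref{ex:reduction}: write the constraint in full matrix form, substitute the variable assignments, and read off from the resulting bilinear terms $b_iA_{k,j}$ which entries of $D^{(r)}$ should be $\pm 1$. All remaining equations of the full reparameterization system (the indices $r$ not used above) are satisfied trivially by pinning the corresponding $A$-entries to $0$ and choosing $X_r=0$, so they impose no extra restriction. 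Since the number of inequalities in Eq.~\ref{eq:bs-1} is $O(m'+k')$ and the dimension of the auxiliary variables is $O(n')$, one checks that $d = O(m'+k'+n')$ suffices, so the reduction is polynomial and preserves the scaling of the problem size; NP-hardness with respect to $d$ follows.

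I expect the main obstacle to be the bookkeeping of the pinning construction: one must verify that the entries of $A$ (and the few entries of $b$) that encode the constant data $A',B,e$ and the fixed $0/1$ scalars can \emph{simultaneously} be forced to their intended values by equations of the form $b_\ell A_{k,j}=X_r$ without those same equations, or the ``junk'' equations indexed by unused $r$, accidentally constraining the decision or auxiliary variables. The resolution is to reserve disjoint column blocks of $A$ for (a) decision variables (column $1$), (b) constants, and (c) slack/padding pinned to $0$, and to reserve disjoint index blocks of $b$ for decision variables and auxiliaries, using a coordinate $b_\ell\equiv 1$ as the ``pin'' multiplier so that $b_\ell A_{k,j}=X_r$ directly fixes $A_{k,j}=X_r$. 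A careful index audit shows the three blocks never interact, so feasibility of Eq.~\ref{eq:bp-Q} with zero objective holds if and only if the original separability instance is a yes-instance. The full verification of this disjointness, together with the explicit formula for $d$ in terms of $m',k',n'$, is carried out in Appendix~\ref{sec:appendix-np}.
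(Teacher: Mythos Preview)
Your proposal follows essentially the same route as the paper's proof: recast Eq.~\ref{eq:reparam} as the bilinear feasibility Problem~\ref{prob:bf-Q}, invoke Lemma~\ref{lem:bf-bp} to pass to the bilinear program Eq.~\ref{eq:bp-Q}, and reduce from the Bennett--Mangasarian separability formulation Eq.~\ref{eq:bs-1} by mapping $z^1_\pm$ into a block of $b$, $z^2_\pm$ into the first column of $A$, the auxiliaries $w^1,w^2,\gamma^1,\gamma^2$ into further entries of $b$, and the constant data $A',B,e,0,1$ into fixed entries of $A$ and $b$, exactly as in Example~\ref{ex:reduction}. Your explicit concern about the unused ``junk'' indices $r$ and the disjoint-block bookkeeping is a welcome refinement---the paper handles this only implicitly via variable fixing---but the overall strategy, the variable assignments, the $D^{(r)}$ constructions, and the polynomial bound $d=O(m'+k'+n')$ all coincide with the appendix argument.
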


\begin{proof}

We map each constant, decision and auxiliary variable from the bilinear program for the separability problem (\ref{eq:bs-1}) to the variables in that for the bilinear feasibility problem (\ref{eq:bp-Q}). 
By freezing the variables that were assigned a constant, we obtain a program for bilinear feasibility that subsumes the objective and constraints in the separability program. 

Specifically, we make assignment for the following entities in the separability program (\ref{eq:bs-1}):
\begin{inparaenum}[(i)]    
\item \textit{constant}:  $ A'\in \mathbb{R}^{m'\times n'}$, $B\in \mathbb{R}^{k' \times n'}, $ $e= \mathbf{1}$ whose shape is dependent on context;
\item decision variables: $z_+^1,z_-^1,z_+^2,z_-^2\in \mathbb{R}^{k'}$; 
\item auxiliary variables: $w^1,w^2\in \mathbb{R}^{n'}$, $\gamma^1,\gamma^2\in \mathbb{R}^{n'}$.
\end{inparaenum}  

\paragraph{Decision variables and objective.}
We split $b_{1:k'},A_{{1:k'},1} $ into two nonnegative components and keep the rest of the variables as it is. 
Let $\tilde{b}$ denote the resulting variable: 
\begin{align}
\tilde{b}:= \begin{bmatrix}  b^+_1 - b^-_1&\cdots &b^+_{k'} - b^-_{k'} &b_{k'+1}&\cdots & b_{d+1} \end{bmatrix}^\top.
\end{align}
Likewise, we denote a partially split version of the first column of $A$ as follows
\begin{align}
\tilde{A}_{:,1}:= \begin{bmatrix}  A^+_{1,1} - A^-_{1,1}&\cdots &A^+_{k',1} - A^-_{k',1}&A_{k'+1,1}&\cdots & A_{d+1,1} \end{bmatrix}^\top.
\end{align}

\underline{\textit{Variable assignment:}} We map  decision variables as follows:
\begin{align}
& z_+^1 \mapsto b_{1:k'}^+,\quad 
  z_-^1 \mapsto b_{1:k'}^-,\quad 
 z_+^2 \mapsto A_{{1:k'},1}^+,\quad 
  z_-^2 \mapsto A_{{1:k'},1}^-. \label{eq:reduction-decision-var-general}
\end{align}  

\underline{\textit{$D^{(1)}$'s construction:}} We set $D^{(1)}$'s entry to 1 at the following positions and 0 otherwise: 
\begin{align}
\{ (i,i): i \in [k']\}.
\end{align} 
Then the objective in the feasibility program becomes
\begin{align}
\min \ (b_{1:k'}^+ - b_{1:k'}^-)  \cdot (A_{{1:k'},1}^+ - A_{{1:k'},1}^-) - X_1.
\end{align}
Since $X_1$ is constant by construction, minimizing the bilinear term in the feasibility program directly corresponds to minimizing the objective in the separability program~(\ref{eq:bs-1}). The same variable splitting applied to the relevant entries of $b$ and $A$ ensures that the nonnegativity constraints on $z_+^1, z_-^1, z_+^2, z_-^2$ are preserved.

\paragraph{Linear constraints.}
To encode each linear constraint in program~(\ref{eq:bs-1}), we construct $D^{(r)}$s and make appropriate assignments for constants and auxiliary variables.

\underline{\textit{Variable assignment:}}
\begin{inparaenum}
\item We embed the constant matrices $A'$ and $B$ into submatrices of $A$, starting from the second column.
\begin{align}
A'\mapsto -A_{1:n',2:1+m'}^\top,\quad B \mapsto A_{n'+1:2n',2:1+k'}^\top.    
\end{align} 
\item We assign the entries in $\tilde{b}$ after the decision variables to the auxiliary variables: 
\begin{align}
w^1 \mapsto  b_{k'+1:k'+n'},\quad w^2 \mapsto  b_{k'+n'+1:k'+2n'},\quad \gamma^1 \mapsto b_{k'+2n'+1},\quad \gamma^2\mapsto  b_{k'+2n'+2}.    
\end{align} 
\item We further require a few entries to be fixed at 0 or 1:
\begin{align}
b_{k'+2n'+3} = A_{2n'+1, 2:1+\max(m',k')} := 1,  \quad A_{2n'+2, 2:1+\max(m',k')} := 0.
\end{align} 
\end{inparaenum}


\underline{\textit{Counting constraints in both programs:}}
For constraints $-A'  w^1 + \gamma^1 e + e \leq 0$ and $-A'  w^2 + \gamma^1 e + e \leq 0$, they each contain $m'$ inequalities.
Similarly, for constraints $B w^1 - \gamma^1 e + e \leq z^1_+ - z^1_-$ and $B w^2 - \gamma^2 e + e \leq z^2_+ - z^2_-$, they each contain $k'$ inequalities. In total, we set $2(m'+k')$ $D^{(r)}$s to encode the linear constraints.  

In the feasibility program, the index for the RHS entries ($r$) ranges from 1 to $dm $. 
By the relationship between  $r$ and the index for $b$ and $A$, if the column index of $A$ is incremented by 1, $r$ would increase by $m$. 
In particular, any $r \in \{ (j-1)m+1,(j-1)m+2\ldots, jm\}$ corresponds to a constraint in the feasibility program with LHS of the form $\tilde{b}^\top D^{(r)} A_{:,j}$.   
 

\underline{\textit{$D^{(r)}$'s construction:}}   
For each $D^{(r)}$, we set four parts of its entries to be nonzero, in order to match with each term in the linear constraint.  

Take the first group of constraints $-A'w^1+\gamma^1 e + e\leq 0$ as an example, the first row is 
\begin{align}
- A'_{1,:}\cdot  w^1 + \gamma^1   + 1 \leq 0.
\end{align} 
By the variable assignment, the desired inequality in the feasibility program would utilize partial entries in the first \textit{column} of $A$:  
\begin{align}
b_{{\color{blue}k'+1: k'+n'}}\cdot A_{{\color{red}1:n'},2}  + b_{{\color{blue}k'+2n'+1}}\underbrace{A_{{\color{red}2n'+1},2}}_{=1} + \underbrace{b_{{\color{blue}k'+2n'+3}}A_{{\color{red}2n'+1},2}}_{=1} \leq   -b_{{\color{blue}k'+2n'+3}}\underbrace{A_{{\color{red}2n'+2},2}}_{=0}.
\end{align}
The above inequality can be achieved by setting $n'+3$ entries in $D^{(jm+1)}$  accordingly, where  {\color{blue}blue} and {\color{red}red}  indicate  the {\color{blue}row} and {\color{red}column}    to be set to nonzero, respectively.  
The rest of the rows in the first group of constraints utilizes the next $m'-1$ columns of $A$. 
Since moving column index $j$ corresponds to shifting $r$ by $m$, the corresponding inequalities are of the form: $\tilde{b}^\top D^{jm+1} A_{:,j+1}\leq X_{jm+1}$ for $j \in [m']$. 

Using the above method, we map each linear constraint below.
\begin{enumerate} 
\item $-A'  w^1 + \gamma^1 e + e \leq 0$. 
For $j\in [m']$, we set the following entries of $D^{(jm+1)}$ as 1 and others 0:
\begin{align}
&\{(k'+k,k): k \in [n'] \}   \cup \{ (k'+2n'+1,2n'+1) \}  \nonumber \\
& \cup\{( k'+2n'+3,2n'+1)\} \cup \{(k'+2n'+3,2n'+2)\}. 
\end{align}

\item $-A'  w^2 + \gamma^2 e + e \leq 0$. 
For $j\in [m']$, we set the following entries of $D^{(jm+2)}$ as 1 and others 0:
\begin{align}
&\{(k'+n'+k,k): k \in [n'] \}   \cup \{ (k'+2n'+2,2n'+1) \} \nonumber  \\
& \cup\{( k'+2n'+3,2n'+1)\} \cup \{(k'+2n'+3,2n'+2)\}.
\end{align}

\item $B w^1 - \gamma^1 e + e \leq z^1_+ - z^1_-$.
For $j\in [k']$, we set the following entries of $D^{(jm+3)}$ as 1 
\begin{align}
&\{(k'+k,n'+k): k \in [n'] \}   \cup \{( k'+2n'+3,2n'+1)\},  
\end{align}
the following entries to $-1$
\begin{align} 
& \{ (k'+2n'+1,2n'+1) \}  \cup  \{(j-1,2n'+1)\},
\end{align}
and the remaining entries to 0.

\item $B w^2 - \gamma^2 e + e \leq z^2_+ - z^2_-$.
For $j\in [k']$, we set the following entries of $D^{(jm+4)}$ as 1
\begin{align}
&\{(k'+n'+k,n'+k): k \in [n'] \}   \cup \{( k'+2n'+3,2n'+1)\}, 
\end{align}
the following entries to $-1$
\begin{align} 
& \{ (k'+2n'+2,2n'+1) \}  \cup  \{(j-1,2n'+1)\},
\end{align}
and the remaining entries to 0.
\end{enumerate}
We solely rely on the configuration of $D^{(r)}$ and the $b,A$ to match with the constraints and set $X_{jm+1},X_{jm+2},X_{j'm+3},X_{j'm+4}$ to zero for all $j\in [m']$  and $j'\in [k']$.  
\end{proof}

The dimension $d$ of the feasibility problem depends polynomially on the problem dimension of the separability program parameters $m', n', k'$. Specifically, the reduction requires that
$$
d \geq \max \left( k' + 2n' + 2,\ 1 + \max(m', k') \right).
$$ This reduction preserves the scaling of the problem size, and thus the feasibility problem remains NP-hard with respect to its dimension $d$. In particular, the computational hardness increases with the problem dimension. This trend is also reflected empirically in Fig.~\ref{fig:gap}, where the performance gap between LSA and the reparameterized model widens as $d$ increases. 

Since it is potentially impossible to find a transformer parameter that achieves the global minimum of the reparameterized results, we observe the following performance boundaries for a single-layer LSA. 
\begin{remark} 
We define a mapping $\proj$ that projects $X\in\mathbb{R}^{dm}$ to the parameter space: 
\begin{align}
\proj(X)=\text{argmin}_{P,Q} 
\lVert \phi(P,Q)  - X  \rVert^2_2
.\label{eq:proj-err}\end{align}
Here, $\proj$ finds a parameter set that maps to the closest point to $X $ under $\phi$.
$\proj(X)$ is the preimage of $X$ under $\phi$, if such a preimage exists.  
Let $f^*$ be the global minimum of $f$.
Then $\tilde{f}(X^*)\leq f^* \leq f(\proj(X^*))$. 
Let   $P^*,Q^*$ denote  the   global minimizer corresponding to $f^*.$
Since $\tilde{f}$ is strictly convex w.r.t $X\in \mathbb{R}^{dm}$ 
, it follows that $\tilde{f}(X^*)$ is the lower bound for any $\tilde{f}(\phi(P,Q))$, including $f^*=f(P^*,Q^*)=\tilde{f}(\phi(P^*,Q^*))$. Therefore $\tilde{f}(X^*)\leq f^*$.
Similarly, since $f^*$ is smaller than any $f(P,Q)$, we have $f^* \leq f( \proj(X^*) )$.
\end{remark}


\begin{example}\label{ex:length3}
    As an example, for $d=2$, $\bg\bg^{\top}$ becomes 
\begin{align}
\begin{bmatrix}
\G_{11}^2 & \G_{11}\G_{12} & \G_{11}\G_{13} & \G_{11}\G_{22} & \G_{11} \G_{23} & \G_{11} \G_{33} \\ 
& \G_{12}^2 & \G_{12}\G_{13} & \G_{12}\G_{22} & \G_{12}\G_{23} & \G_{12} \G_{33} \\ 
&&\G_{13}^2 & \G_{13}\G_{22} &\G_{13} \G_{23} & \G_{13}\G_{33} \\ 
&&& \G_{22}^2 &\G_{22}\G_{23} & \G_{22}\G_{33} \\ 
&&&&\G_{23}^2 & \G_{23} \G_{33} \\ 
&&&&&\G_{33}^2
\end{bmatrix}
\end{align}
(omitting the index-separating comma and the repeating entries in the lower half triangle).

After reparameterization, the objective function can be rewritten as 
\begin{align*}
\tilde{f}(X) =  \EX \left[
\left(\sum_{j=1}^{2}\bg^{\top} X^{(j)}\edit{x_{n+1,j}}  - y_{n+1}\right)^2
\right].
\end{align*}
where $X\in\mathbb{R}^{12}$ denotes the concatenation of the two vectors $X^{(1)},X^{(2)}\in\mathbb{R}^{6}$. The gradient of $\tilde{f}$ w.r.t. $X$ is
\begin{align*}
\nabla \tilde{f}(X) = 
\bb{E} 
\begin{bmatrix}
(\edit{x_{n+1,1}})^2\bg\bg^{\top}&\edit{x_{n+1,1}}\edit{x_{n+1,2}}\bg\bg^{\top}\\
\edit{x_{n+1,2}}\edit{x_{n+1,2}}\bg\bg^{\top}& (\edit{x_{n+1,2}})^2\bg\bg^{\top} 
\end{bmatrix} X - \bb{E}[y_{n+1} (x_{n+1}\otimes \bg)].
\end{align*} 
We obtain the global minimizer $X^*$ by solving $\nabla \tilde{f}(X^*)=0$:
\begin{align*}
{X^{(1)}}^* &= 
\begin{bmatrix}
    -0.15 &   0.39 &   0.15 &   0.12 &   2.40 & 
       -0.09 
\end{bmatrix},\\
{X^{(2)}}^* &= 
\begin{bmatrix}
    0.07 &  -0.19 &  -0.07 & -0.06 & 
       -1.20 &  0.04 
\end{bmatrix}.
\end{align*}
We project $X^{(1)},X^{(2)}$ into the model parameter space.

Since the entires of $X^{(1)}$ are nonzero, we have $b_1\neq b_2$.

To verify the derivation, we plot the loss function w.r.t $X_i$, indicating the global optimizer $X^*_i$ using red dashed line in Fig.~\ref{fig:global-min-d2}.
The theoretical global minimizer aligns with the lowest error.

\begin{figure}[ht]
    \centering
    \includegraphics[width=\linewidth]{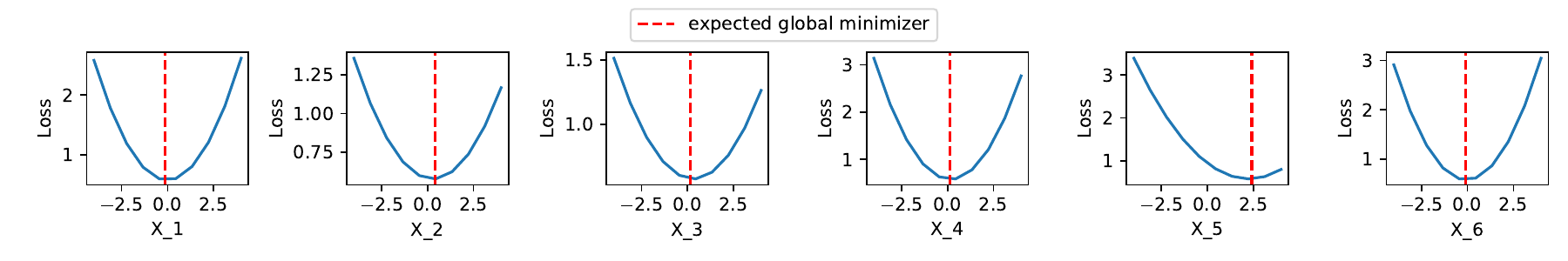}
    \caption{Loss function w.r.t. the first six  parameters after reparameterization.}
    \label{fig:global-min-d2}
\end{figure}

\end{example}

\section{Proof for Forward Pass as Multi-Objective Optimization (Theorem~\ref{thm:forward})\label{sec:appendix-equivalence}}

To demonstrate the equivalence between the forward pass and preconditioned gradient descent, we aim to express the iterative definition of LSA as an update of weight vectors, drawing inspiration from~\cite{DBLP:conf/nips/AhnCDS23}. However, unlike their approach, our proof diverges because the update formula for LSA cannot be simplified due to the presence of nonzero entries in $b_l.$

\begin{theorem}[\emphh{Theorem~\ref{thm:forward} restated}]
\label{thm:forward1}
Consider the $L$-layer transformer parameterzed by $b_l,A_l=-\begin{bmatrix}
 \bar{A_l}\\\edit{a_l^{\top}}
\end{bmatrix}$ where $b_l\in\mathbb{R}^{d+1},\bar{A}_l\in\mathbb{R}^{d\times d},a_l\in\mathbb{R}^{d}$ for $l\in [L]$. \edit{The forward pass of the transformer is equivalent to optimizing two groups of objectives $R_1,R_2:\mathbb{R}^{d }\rightarrow \mathbb{R}^{d+1}$.} Specifically, let $y_{n+1}^{(l)}$ be the bottom-right entry of the $l$th layer output.
Then 
$y_{n+1}^{(l)} = 
\langle w_l ,
x_{n+1} \rangle 
$
where $w_l$ is iteratively defined as follows:  
$w_0 =0$ and 
 \edit{
\begin{align}
w_{l+1} ^{\top}  =& w_{l}^{\top}- b_l^{\top} \left(\nabla R_1(w_{l}) \bar{A}_l+   \nabla R_2(w_{l} ) \begin{bmatrix}0_{(d-1) \times d}\\a_l^{\top}\end{bmatrix}\right) \\
 \quad\text{where }
R_1(w ) =& \frac{1}{n}\sum_{j=1}^{n}
\begin{bmatrix}
- \langle w ,x_j \rangle x_j   \\
0.5(y_{j}-\langle w ,x_{j}\rangle )^2
\end{bmatrix},\\
 R_2(w) = &  
\frac{1}{n}\sum_{j=1}^n
\begin{cases}  
\begin{bmatrix}
 (-w_{d}(y_j-\langle w_{:d-1},x_{j,:d-1}\rangle)+\frac{w_{d}^2}{2} x_{j,d}) x_j \\ 
\frac{1}{3x_{j,d}}(y_j - w^{\top} x_j)^3
\end{bmatrix}\quad \text{if $x_{j,d}\neq 0$}  \\
\begin{bmatrix}
 (-w_{d}(y_j-\langle w_{:d-1},x_{j,:d-1}\rangle)+\frac{w_{d}^2}{2} x_{j,d}) x_j \\ 
-(y_j -\langle w_{:d-1},x_{j,:d-1}\rangle)^2 w_d 
\end{bmatrix}\quad \text{if $x_{j,d}=0$}
\end{cases}.
\end{align} 
In the above expression, $w_{:d-1},x_{j,:d-1}$ denote the first $d-1$ entries of $w$ and $x_j$, respectively.} 
\end{theorem}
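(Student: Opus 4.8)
The plan is to unroll the $L$-layer LSA recursion and track how the bottom-right entry of each layer's output evolves, showing it always has the form $\langle w_l, x_{n+1}\rangle$ for a vector $w_l$ that satisfies the claimed update. First I would record the single-layer update in the restricted parameter space \eqref{eq:asp-forward}: since $Z_{l+1} = Z_l + \frac1n P Z_l M (Z_l^\top Q Z_l)$ and $P$ has only its last row $b_l^\top$ nonzero, only the last row of $Z_{l+1}$ changes, and only the final column of that row is read out. Writing $Z_l$'s columns as $[x_i; \langle w_l, x_i\rangle]$ for $i\le n$ (this is the inductive hypothesis, with the last column being $[x_{n+1}; \langle w_l, x_{n+1}\rangle]$ after we absorb the running prediction into the bottom slot — here one must be careful that the $(n{+}1)$-st column's label slot is $0$ at layer $0$ but becomes the accumulated prediction, matching the convention $y_{n+1}^{(l)}=\langle w_l,x_{n+1}\rangle$), I would compute $Z_l M Z_l^\top = \sum_{i=1}^n [x_i;\langle w_l,x_i\rangle][x_i;\langle w_l,x_i\rangle]^\top$, then multiply on the right by $Q = -\begin{bmatrix}\bar A_l & 0\\ a_l^\top & 0\end{bmatrix}$ and on the left by $b_l^\top$, and finally extract the last column (i.e. evaluate the resulting row-vector against $x_{n+1}$ in its first $d$ coordinates). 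This produces $w_{l+1}^\top = w_l^\top - \frac1n b_l^\top \big(\sum_i [x_i; \langle w_l,x_i\rangle] \cdot (\text{something involving }\bar A_l, a_l, x_i, y_i)\big)$.

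The crux is then to recognize the bracketed sum as $\nabla R_1(w_l)\bar A_l + \nabla R_2(w_l)\begin{bmatrix}0\\ a_l^\top\end{bmatrix}$. For $R_1$: its gradient is the $(d+1)\times d$ matrix whose first $d$ rows are $\frac1n\sum_j \nabla_w(-\langle w,x_j\rangle x_j^{(k)})$ assembled appropriately and whose last row is $\frac1n\sum_j \nabla_w\big(\tfrac12(y_j-\langle w,x_j\rangle)^2\big) = -\frac1n\sum_j (y_j - \langle w,x_j\rangle)x_j$; I would verify that $\nabla R_1(w_l)\bar A_l$ reproduces exactly the part of the layer update coming from the $\bar A_l$ block. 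The $R_2$ piece is more delicate: the matrix $Q$'s last row is $a_l^\top$ acting on the $x$-coordinates of the data columns, so only the column $a_l^\top$ (padded by zeros in the first $d-1$ rows, hence the factor $\begin{bmatrix}0_{(d-1)\times d}\\ a_l^\top\end{bmatrix}$) enters, and the term it multiplies involves products like $\langle w_l, x_j\rangle \cdot (\text{last-row contributions})$. I would reverse-engineer $R_2$ by antidifferentiating: the $w_d$-only structure, the case split on $x_{j,d}=0$ vs $\ne 0$, and the cubic term $\tfrac{1}{3x_{j,d}}(y_j-w^\top x_j)^3$ all arise because when $x_{j,d}\ne 0$ one can write the relevant scalar as a perfect derivative in $w$ (the chain rule gives $(y_j-w^\top x_j)^2 \cdot (-x_j)$ up to the $x_{j,d}$ factor), whereas when $x_{j,d}=0$ the cube degenerates and one uses the quadratic surrogate $-(y_j - \langle w_{:d-1}, x_{j,:d-1}\rangle)^2 w_d$ whose $w_d$-derivative matches. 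The first $d$ entries of $R_2$ (the $(-w_d(y_j-\langle w_{:d-1},x_{j,:d-1}\rangle)+\tfrac{w_d^2}{2}x_{j,d})x_j$ terms) are chosen so that $\nabla R_2$ in the first $d$ coordinates, after contraction with $\begin{bmatrix}0\\a_l^\top\end{bmatrix}$, contributes nothing spurious — only the last-row/last-column interaction survives.

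The main obstacle I anticipate is the bookkeeping around the last coordinate: the asymmetry of $Q$ (zero last column, nonzero last row via $a_l$) means the attention update mixes $w_d$ with the partial prediction $\langle w_{:d-1}, x_{j,:d-1}\rangle$ in a way that does not assemble into a single clean gradient of a square loss, which is precisely why $R_2$ has its piecewise cubic/quadratic form. Getting the antiderivative exactly right — including the coefficient $\tfrac13$, the sign conventions, and checking that the $x_{j,d}=0$ branch limits correctly — will require careful term-by-term matching rather than a slick argument. A secondary subtlety is confirming the inductive hypothesis is self-consistent, i.e. that the only component of $Z_l$ that matters downstream is $(w_l, x_i)$ and that the mask $M$ correctly zeroes out the query column's contribution to the attention sum so that $w_{l+1}$ depends only on the in-context examples. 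Once these are pinned down, the induction closes and the theorem follows; the derivation is routine but lengthy, and I would relegate the full computation to Appendix~\ref{sec:appendix-equivalence} as the excerpt indicates.
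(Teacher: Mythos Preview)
Your overall architecture matches the paper's proof: induct on the layer index to show the bottom-right entry stays linear in $x_{n+1}$, extract the coefficient vector's recursion, then reverse-engineer $R_1,R_2$ so that their Jacobians reproduce the two blocks of the update. The paper organizes the linearity step slightly differently (it first proves $g(x,y,k)=g(x,0,k)+y$ and then that $g(x,0,k)$ is linear in $x$, rather than carrying one combined hypothesis), but this is cosmetic.

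There is, however, a concrete error in your inductive hypothesis. You write the in-context columns of $Z_l$ as $[x_i;\langle w_l,x_i\rangle]$ for $i\le n$. That is false already at $l=0$, where the bottom entry is $y_i$, not $0$. Because $P_l$ has only its last row nonzero and the update is additive, the original label $y_i$ persists through every layer: the correct form is $y_i^{(l)}=y_i+\langle\theta_l,x_i\rangle$ for a vector $\theta_l$ (the paper then sets $w=-\theta$ at the end). Consequently your computation of $Z_lMZ_l^\top$ should read
\[
\frac{1}{n}\sum_{j=1}^n\begin{bmatrix}x_jx_j^\top & (y_j+\theta_l^\top x_j)\,x_j\\ (y_j+\theta_l^\top x_j)\,x_j^\top & (y_j+\theta_l^\top x_j)^2\end{bmatrix},
\]
not the version with $\langle w_l,x_i\rangle$ alone in the off-diagonal and corner blocks. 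Without the $y_j$ term in those blocks, the resulting $w$-update would not depend on the labels at all, and you could never match the gradient of $\tfrac12(y_j-\langle w,x_j\rangle)^2$ or the cubic in $R_2$. You implicitly reintroduce $y_j$ later when you write $\nabla_w\tfrac12(y_j-\langle w,x_j\rangle)^2$, so the inconsistency is between your stated hypothesis and the terms you actually match; fixing the hypothesis to carry $y_j$ resolves it and the rest of your plan goes through as you describe.
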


\begin{proof}
 
Let $y_{i}^{(k)}$ denote the $(d+1)$-$i$ entry of the embdding $Z_k$ and $x_{i}^{(k)}$ is the first $d$ entries of the $i$th column in $Z_k$.
Since the first $d$ rows of $P$ is zero, the first $d$ rows of $Z_k$ is the same as $Z_{0}$. Therefore $x_i^{(k)}=x_i^{(0)} = x_i,\ \forall i\in [n+1].$ 

We define a mapping to represent applying $k$ transformer layers to the bottom right entry of an embedding matrix $Z_0$ with $[Z_0]_{d+1,n+1}=y$:
$g(x,y,k):\mathbb{R}^d\times \mathbb{R}\times \mathbb{Z} \rightarrow \mathbb{R}$. When $x=x_{n+1},y = y_{n+1}^{(0)}=0$, $g(x,y,k) = g(x,0,k) = y_{n+1}^{(k)}$.
We establish two claims for $g(x,y,k)$ when  $x=x_{n+1}.$
\paragraph{Claim 1: $g(x,y,k)=g(x,0,k)+y$.}
The equation implies that applying the transfomer $k$ times on $Z_0$ with $[Z_0]_{d+1,n+1}=y$ is equivalent to applying the transformer $k$ times on $Z_0'$ with $[Z_0']_{d+1,n+1}=0$ and then add the resulting bottom-right entry with $y$.

By definition of LSA, the iterative definition of $y_{i}^{(k)}$ ($i\in [n+1]$) is given by: 
\begin{align}
y_{i}^{(k+1)}
& = y_{i}^{(k)} - 
b_k^{\top} \underbrace{\frac{1}{n}\sum_{j=1}^{n}  
 \begin{bmatrix}
x_jx_j^{\top} & y_j^{(k)} x_j \\ 
y_j^{(k)} x_j^{\top} & {y_j^{(k)}}^2
\end{bmatrix}}_{ \coloneqq \G^{(k)}}A_k x_{i} \label{eq:y-iterative}.
\end{align}
Since $y_{i}^{(k)}$ is independent of $y_{n+1}^{(k')}$ for any $k'$, and $y_{n+1}^{(k)}$ depends on $y_{n+1}^{(k)}$ additively, one can show inductively that $g(x,y,k)$ and $g(x,0,k)$   always differ by $y$, i.e., $g(x,y,k) = g(x,0,k)+y$.

\paragraph{Claim 2: $g(x,0,k)$ is linear in $x$.}
We prove the claim inductively.
When $k=0$, $g(x,0,0) =  y_{n+1}^{(0)} - b_k^{\top} \G^{(k)} A_k x_{n+1} $ is linear in $x=x_{n+1}$. 
For $k\geq 0$, suppose $g(x,0,k)$ is linear in $x$, then 
$g(x,0,k+1) = y_{n+1}^{(k+1)} = y_{n+1}^{(k)} -b_k^{\top} \G^{(k)} A_k x_{n+1}  = g(x,0,k) - b_k^{\top} \G^{(k)} A_k x_{n+1}  $. The first term $g(x,0,k)$ is linear in $y$. 
The term $y_j^{(k)}$ with $j\neq n+1$ does not depend on $x_{n+1}$ according to~Eq.~\ref{eq:y-iterative}. Hence $b_k^{\top} \G^{(k)} A_k x_{n+1} $ is also linear in $x_{n+1}$. 
 

Combining the two claims, we have 
\begin{align}
g(x,y,k) = g(x,0,k) + y = \langle \theta_k ,x\rangle + y, 
\end{align}
for some $\theta_k\in \mathbb{R}^d$ with $\theta_0 = 0$. 
One can copy the values in the $i$th column to the $n+1$th column and adopt the previous arguments to show that $g(x_i,y_i,k ) = \langle\theta_k,x_i\rangle + y_i $.
By substituting $y_i = \langle \theta_k , x_i\rangle + y_i $ into Eq.~\ref{eq:y-iterative}, we have 
\begin{align}
y_{n+1}^{(k+1)}
& = y_{n+1}^{(k)} - 
\frac{1}{n}\sum_{j=1}^{n} b_k^{\top} 
\begin{bmatrix}
x_jx_j^{\top} & (y_j + \theta_k^{\top} x_j) x_j \\ 
(y_j + \theta_k^{\top} x_j ) x_j^{\top} & (y_j + \theta_k^{\top} x_j)^2
\end{bmatrix}A_k x_{n+1}\\ 
\Rightarrow\langle 
\theta_{k+1}, x_{n+1}
\rangle & =
\langle \theta_{k}, x_{n+1}
\rangle -\frac{1}{n} \sum_{j=1}^{n}b_k^{\top} \begin{bmatrix}
x_jx_j^{\top} & (y_j + \theta_k^{\top} x_j) x_j \\ 
(y_j + \theta_k^{\top} x_j ) x_j^{\top} & (y_j + \theta_k^{\top} x_j)^2
\end{bmatrix}A_k x_{n+1}.
\end{align}
Since the above equation holds for any $x_{n+1}$, we obtain
\begin{align}
\theta_{k+1}\edit{^{\top}}=&
\theta_{k}\edit{^{\top}}-\frac{1}{n} \sum_{j=1}^{n}b_k^{\top} \begin{bmatrix}
x_jx_j^{\top} & (y_j + \theta_k^{\top} x_j) x_j \nonumber\\ 
(y_j + \theta_k^{\top} x_j ) x_j^{\top} & (y_j + \theta_k^{\top} x_j)^2
\end{bmatrix}A_k\\
=&\theta_k\edit{^{\top}}
- 
b_k^{\top} \underbrace{\frac{1}{n}\sum_{j=1}^{n}
\begin{bmatrix}
x_jx_j^{\top}  \\ 
(y_j + \theta_k^{\top} x_j ) x_j^{\top}
\end{bmatrix}}_{\edit{\bar{G}_1}\in \mathbb{R}^{(d+1)\times d}}\bar{A}_k  
\edit{- 
\left(b_k^{\top} 
\frac{1}{n}\sum_{j=1}^n 
\underbrace{\begin{bmatrix}
   (y_j + \theta_k^{\top} x_j )   x_j\\
  (y_j + \theta_k^{\top} x_j)^2
\end{bmatrix}}_{ \in \mathbb{R}^{d+1}}   \right)a_k^{\top}}\nonumber\\ 
=& \theta_k\edit{^{\top}}
- 
b_k^{\top} \frac{1}{n}\sum_{j=1}^{n}
\begin{bmatrix}
x_jx_j^{\top}   \\ 
(y_j + \theta_k^{\top} x_j ) x_j^{\top} 
\end{bmatrix} \bar{A}_k
-  b_k^{\top}  
\underbrace{\bar{G}_2}_{\in\mathbb{R}^{(d+1)\times d}} \begin{bmatrix}
0_{(d-1) \times d}\\ 
a_k^{\top} 
\end{bmatrix}, 
\end{align} 
where   if $x_{j,d}\neq 0$, then 
\begin{align}
\bar{G}_2 :=&\begin{bmatrix}
  x_{j,1}\theta_{k,d}x_{j } & \cdots & \theta_{k,d}x_{j,d-1}x_{j } & (y_j + \theta_k^{\top} x_j )   x_j\\
  (y_j + \theta_k^{\top} x_j)^2\frac{x_{j,1}}{x_{j,d}} & \cdots & (y_j + \theta_k^{\top} x_j)^2\frac{x_{j,d-1}}{x_{j,d}}&  (y_j + \theta_k^{\top} x_j)^2 
\end{bmatrix},
\end{align}
otherwise if $x_{j,d}\neq 0$, then 
\begin{align}
\bar{G}_2 :=
\begin{bmatrix}
  x_{j,1}\theta_{k,d}x_{j } & \cdots & x_{j,d-1}\theta_{k,d}x_{j }  & (y_j + \theta_k^{\top} x_j )   x_j\\
  2(y_j + \theta_k^{\top} x_j) x_{j,1} \theta_{k,d} & \cdots &2(y_j + \theta_k^{\top} x_j) x_{j,d-1} \theta_{k,d} &  (y_j + \underbrace{\theta_k^{\top} x_j}_{(i)})^2
\end{bmatrix}. 
\end{align}
 
\edit{In the above expression, $\theta_{k,i}, x_{j,i}$ denote the $i$th element of $\theta_k,x_{j}$, respectively.}
We treat $b_k,\bar{A}_{k},\begin{bmatrix}0_{(d-1) \times d}\\a_k\end{bmatrix}$ as  preconditioners. We construct two sets of muli-objectives $\tilde{R}_1,\tilde{R}_2:\mathbb{R}^{d}\rightarrow \mathbb{R}^{d+1}$ as follows. \edit{We drop the index for layer on  $\theta$ and use $\theta_d$ to denote the $d$th entry of $\theta.$ Let $x_{j,:d-1},\theta_{:d-1}$ denote the first $d-1$ entries of the vectors $x_j,\theta$ respectively.}
{ 
\edit{\begin{align}
\tilde{R}_1(\theta) :=&\frac{1}{n}\sum_{j=1}^n \begin{bmatrix}
x_{j,1} \theta^{\top} x_j \\ 
x_{j,2} \theta^{\top} x_j \\ 
\vdots \\ 
x_{j,d} \theta^{\top} x_j \\ 
0.5(\theta^{\top} x_j + y_j)^2
\end{bmatrix} = \frac{1}{n}\sum_{j=1}^{n}
\begin{bmatrix} 
 \langle \theta ,x_j \rangle x_j  \\
0.5(\langle \theta  ,x_{j}\rangle +  y_{j})^2
\end{bmatrix},\\
\tilde{R}_2(\theta) 
:= &  
\frac{1}{n}\sum_{j=1}^n
\begin{cases} 
\begin{bmatrix}
(\theta_{d}(y_j+\langle \theta_{:d-1},x_{j,:d-1}\rangle)+\frac{\theta_{d}^2}{2} x_{j,d}) x_j \\ 
\frac{1}{3x_{j,d}}(y_j + \theta^{\top} x_j)^3
\end{bmatrix}\quad \text{if $x_{j,d}\neq 0$}\\
\begin{bmatrix}
(\theta_{d}(y_j+\langle \theta_{:d-1},x_{j,:d-1}\rangle)+\frac{\theta_{d}^2}{2} x_{j,d}) x_j \\ 
(y_j + \underbrace{\langle \theta_{:d-1},x_{j,:d-1}\rangle}_{(i)})^2 \theta_d 
\end{bmatrix}\quad \text{if $x_{j,d}=0$}
\end{cases}.
\end{align} 
The derivation of $\tilde{R}_2(\theta)$ for the case $x_{j,d}=0$ utilizes the fact that 
the terms marked by $(i)$ are equivalent: $\langle \theta,x_j\rangle = \langle \theta_{:d-1},x_{j,:d-1}\rangle$ when $x_{j,d}=0$.}

}

\edit{
Then  $\nabla R_1 (\theta)= \bar{G}_1$ and $\nabla R_2(\theta) = \bar{G}_2$ yield
\begin{align}
\theta_{k+1}^{\top} & = \theta_k^{\top}  - b_k^{\top} \nabla \tilde{R}_1(\theta) \bar{A}_k - b_k^{\top} \nabla \tilde{R}_2(\theta) \begin{bmatrix}0_{(d-1) \times d}\\a_k^{\top}\end{bmatrix} 
\nonumber\\
&= \theta_k^{\top}   -  b_k^{\top} \left(\nabla \tilde{R}_1(\theta) \bar{A}_k +   \nabla \tilde{R}_2(\theta) \begin{bmatrix}0_{(d-1) \times d}\\a_k^{\top}\end{bmatrix}\right). 
\end{align} 
}
By letting $w= - \theta$ and $R_i(w) = \tilde{R}_i(-\theta)$ ($i\in[2]$), we obtain the desired result.
\end{proof}

The first $d$ terms in $R_1(w)$ ($-x_{j,k} w^\top x_j$ for $k \in[d]$) capture the interaction between the $k$-th state of $x_j$ and the linear combination of states determined by $w$, emphasizing their individual roles within the sequence. The final term, $(w^\top x_j - y_j)^2$ ensures alignment between the linear model’s prediction and the target state $y_j$. 

$R_2(w)$ emphasizes the role of $w_d$, since the first $d-1$ rows of the last preconditioning matrix are zeros. The first $d$ objectives in $R_2(w)$ scale the target state $y_j$ by $x_j$ and $w_d$, with additional quadratic terms like $\frac{w_d^2}{2} x_{j,d}$. These terms capture the influence of $w_d,x_{j,d}$, the alignment between $y_j$ and partial prediction $\langle w_{:d-1}, x_{j,:d-1} \rangle$. The final objective is in cubic penalty form $\frac{1}{3x_{j,d}} (y_j - w^\top x_j)^3$ when $x_{j,d} \neq 0$, magnifying sequences with smaller $x_{j,d}$. When $x_{j,d} = 0$, the penalty changes to a quadratic term $-(y_j - \langle w_{:d-1}, x_{j,:d-1} \rangle)^2 w_d$, focusing solely on aligning $y_j$ with the partial prediction based on $w_{:d-1}$. Furthermore, when $x_{j,d} = 0$, the final objective becomes convex if $w_d < 0$ and concave otherwise.

\section{Additional Related Works \label{sec:additional-related-works}}
A growing line of research investigates the structural and computational limitations of shallow transformer models. Rajaraman et al.~\cite{rajaraman2024transformers} examine ICL on Markovian data and show that while 1-layer transformers may struggle to capture high-order dependencies, adding one or two layers enables accurate modeling of $k$-th order Markov processes.  Olsson et al.~\cite{olsson2022context} demonstrate that induction heads do not arise in 1-layer transformers, and that their emergence and the corresponding gains in ICL performance are only observed in models with two or more layers.  Separately, Sanford et al.~\cite{sanford2023representational,sanford2024transformers} study the representational and computational limits of transformers from a theoretical perspective. While not directly focused on ICL, they characterize tasks that require deeper models or higher attention capacity to solve, and establish lower bounds by connecting transformer depth to parallel computation.
Our work complements these studies from an optimization perspective, where we show that even when a global minimizer exists in a relaxed formulation, recovering a feasible parameterization in the 1-layer transformer space is NP-hard.



\begin{thebibliography}{10}

\bibitem{brown2020language}
Tom~B Brown.
\newblock Language models are few-shot learners.
\newblock {\em arXiv preprint arXiv:2005.14165}, 2020.

\bibitem{rae2021scaling}
Jack~W Rae, Sebastian Borgeaud, Trevor Cai, Katie Millican, Jordan Hoffmann, Francis Song, John Aslanides, Sarah Henderson, Roman Ring, Susannah Young, et~al.
\newblock Scaling language models: Methods, analysis \& insights from training gopher.
\newblock {\em arXiv preprint arXiv:2112.11446}, 2021.

\bibitem{garg2022can}
Shivam Garg, Dimitris Tsipras, Percy~S Liang, and Gregory Valiant.
\newblock What can transformers learn in-context? a case study of simple function classes.
\newblock {\em Advances in Neural Information Processing Systems}, 35:30583--30598, 2022.

\bibitem{liu2023pre}
Pengfei Liu, Weizhe Yuan, Jinlan Fu, Zhengbao Jiang, Hiroaki Hayashi, and Graham Neubig.
\newblock Pre-train, prompt, and predict: A systematic survey of prompting methods in natural language processing.
\newblock {\em ACM Computing Surveys}, 55(9):1--35, 2023.

\bibitem{team2023gemini}
Gemini Team, Rohan Anil, Sebastian Borgeaud, Yonghui Wu, Jean-Baptiste Alayrac, Jiahui Yu, Radu Soricut, Johan Schalkwyk, Andrew~M Dai, Anja Hauth, et~al.
\newblock Gemini: a family of highly capable multimodal models.
\newblock {\em arXiv preprint arXiv:2312.11805}, 2023.

\bibitem{achiam2023gpt}
Josh Achiam, Steven Adler, Sandhini Agarwal, Lama Ahmad, Ilge Akkaya, Florencia~Leoni Aleman, Diogo Almeida, Janko Altenschmidt, Sam Altman, Shyamal Anadkat, et~al.
\newblock Gpt-4 technical report.
\newblock {\em arXiv preprint arXiv:2303.08774}, 2023.

\bibitem{touvron2023llama}
Hugo Touvron, Thibaut Lavril, Gautier Izacard, Xavier Martinet, Marie-Anne Lachaux, Timoth{\'e}e Lacroix, Baptiste Rozi{\`e}re, Naman Goyal, Eric Hambro, Faisal Azhar, et~al.
\newblock Llama: Open and efficient foundation language models.
\newblock {\em arXiv preprint arXiv:2302.13971}, 2023.

\bibitem{von2023transformers}
Johannes Von~Oswald, Eyvind Niklasson, Ettore Randazzo, Jo{\~a}o Sacramento, Alexander Mordvintsev, Andrey Zhmoginov, and Max Vladymyrov.
\newblock Transformers learn in-context by gradient descent.
\newblock In {\em International Conference on Machine Learning}, pages 35151--35174. PMLR, 2023.

\bibitem{akyurek2022learning}
Ekin Aky{\"{u}}rek, Dale Schuurmans, Jacob Andreas, Tengyu Ma, and Denny Zhou.
\newblock What learning algorithm is in-context learning? investigations with linear models.
\newblock In {\em The Eleventh International Conference on Learning Representations}, 2023.

\bibitem{DBLP:conf/icml/GiannouRS0LP23}
Angeliki Giannou, Shashank Rajput, Jy{-}yong Sohn, Kangwook Lee, Jason~D. Lee, and Dimitris Papailiopoulos.
\newblock Looped transformers as programmable computers.
\newblock In {\em International Conference on Machine Learning}, pages 11398--11442, 2023.

\bibitem{li2023transformers}
Yingcong Li, Muhammed~Emrullah Ildiz, Dimitris Papailiopoulos, and Samet Oymak.
\newblock Transformers as algorithms: Generalization and stability in in-context learning.
\newblock In {\em International Conference on Machine Learning}, pages 19565--19594, 2023.

\bibitem{DBLP:conf/acl/DaiS0HMSW23}
Damai Dai, Yutao Sun, Li~Dong, Yaru Hao, Shuming Ma, Zhifang Sui, and Furu Wei.
\newblock Why can {GPT} learn in-context? language models secretly perform gradient descent as meta-optimizers.
\newblock In {\em Findings of the Association for Computational Linguistics}, pages 4005--4019, 2023.

\bibitem{DBLP:conf/emnlp/ZhaoP0A23}
Haoyu Zhao, Abhishek Panigrahi, Rong Ge, and Sanjeev Arora.
\newblock Do transformers parse while predicting the masked word?
\newblock In {\em Proceedings of the 2023 Conference on Empirical Methods in Natural Language Processing (EMNLP)}, pages 16513--16542. Association for Computational Linguistics, 2023.

\bibitem{bai2024transformers}
Yu~Bai, Fan Chen, Huan Wang, Caiming Xiong, and Song Mei.
\newblock Transformers as statisticians: Provable in-context learning with in-context algorithm selection.
\newblock {\em Advances in Neural Information Processing Systems}, 2024.

\bibitem{zhang2023trained}
Ruiqi Zhang, Spencer Frei, and Peter~L. Bartlett.
\newblock Trained transformers learn linear models in-context.
\newblock {\em J. Mach. Learn. Res.}, 25:49:1--49:55, 2024.

\bibitem{huang2023context}
Yu~Huang, Yuan Cheng, and Yingbin Liang.
\newblock In-context convergence of transformers.
\newblock {\em arXiv preprint arXiv:2310.05249}, 2023.

\bibitem{DBLP:journals/corr/abs-2405-02462}
Karthik Duraisamy.
\newblock Finite sample analysis and bounds of generalization error of gradient descent in in-context linear regression.
\newblock {\em CoRR}, abs/2405.02462, 2024.

\bibitem{DBLP:conf/icml/Li0LCC24}
Hongkang Li, Meng Wang, Songtao Lu, Xiaodong Cui, and Pin{-}Yu Chen.
\newblock How do nonlinear transformers learn and generalize in in-context learning?
\newblock In {\em Forty-first International Conference on Machine Learning (ICML)}, 2024.

\bibitem{DBLP:conf/nips/AhnCDS23}
Kwangjun Ahn, Xiang Cheng, Hadi Daneshmand, and Suvrit Sra.
\newblock Transformers learn to implement preconditioned gradient descent for in-context learning.
\newblock In {\em Advances in Neural Information Processing Systems}, 2023.

\bibitem{mahankali2023one}
Arvind~V. Mahankali, Tatsunori Hashimoto, and Tengyu Ma.
\newblock One step of gradient descent is provably the optimal in-context learner with one layer of linear self-attention.
\newblock In {\em The Twelfth International Conference on Learning Representations (ICLR)}, 2024.

\bibitem{li2024fine}
Yingcong Li, Ankit~Singh Rawat, and Samet Oymak.
\newblock Fine-grained analysis of in-context linear estimation: Data, architecture, and beyond.
\newblock {\em CoRR}, abs/2407.10005, 2024.

\bibitem{DBLP:journals/corr/abs-2402-04161}
Ashok~Vardhan Makkuva, Marco Bondaschi, Adway Girish, Alliot Nagle, Martin Jaggi, Hyeji Kim, and Michael Gastpar.
\newblock Attention with markov: {A} framework for principled analysis of transformers via markov chains.
\newblock {\em CoRR}, abs/2402.04161, 2024.

\bibitem{makkuva2024local}
Ashok~Vardhan Makkuva, Marco Bondaschi, Chanakya Ekbote, Adway Girish, Alliot Nagle, Hyeji Kim, and Michael Gastpar.
\newblock Local to global: Learning dynamics and effect of initialization for transformers.
\newblock {\em arXiv preprint arXiv:2406.03072}, 2024.

\bibitem{rajaraman2024transformers}
Nived Rajaraman, Marco Bondaschi, Ashok~Vardhan Makkuva, Kannan Ramchandran, and Michael Gastpar.
\newblock Transformers on markov data: Constant depth suffices.
\newblock {\em Advances in Neural Information Processing Systems}, 37:137521--137556, 2024.

\bibitem{rajaraman2024toward}
Nived Rajaraman, Jiantao Jiao, and Kannan Ramchandran.
\newblock Toward a theory of tokenization in llms.
\newblock {\em arXiv preprint arXiv:2404.08335}, 2024.

\bibitem{edelman2024evolution}
Ezra Edelman, Nikolaos Tsilivis, Benjamin Edelman, Eran Malach, and Surbhi Goel.
\newblock The evolution of statistical induction heads: In-context learning markov chains.
\newblock {\em Advances in Neural Information Processing Systems}, 37:64273--64311, 2024.

\bibitem{bietti2023birth}
Alberto Bietti, Vivien Cabannes, Diane Bouchacourt, Herve Jegou, and Leon Bottou.
\newblock Birth of a transformer: A memory viewpoint.
\newblock {\em Advances in Neural Information Processing Systems}, 36:1560--1588, 2023.

\bibitem{DBLP:conf/icml/NichaniDL24}
Eshaan Nichani, Alex Damian, and Jason~D. Lee.
\newblock How transformers learn causal structure with gradient descent.
\newblock In {\em Forty-first International Conference on Machine Learning (ICML)}, 2024.

\bibitem{chen2024unveiling}
Siyu Chen, Heejune Sheen, Tianhao Wang, and Zhuoran Yang.
\newblock Unveiling induction heads: Provable training dynamics and feature learning in transformers.
\newblock {\em arXiv preprint arXiv:2409.10559}, 2024.

\bibitem{DBLP:conf/icml/SanderGSBP24}
Michael~Eli Sander, Raja Giryes, Taiji Suzuki, Mathieu Blondel, and Gabriel Peyr{\'{e}}.
\newblock How do transformers perform in-context autoregressive learning ?
\newblock In {\em International Conference on Machine Learning (ICML)}, 2024.

\bibitem{shannon1948mathematical}
Claude~Elwood Shannon.
\newblock A mathematical theory of communication.
\newblock {\em The Bell system technical journal}, 27(3):379--423, 1948.

\bibitem{shannon1951prediction}
Claude~E Shannon.
\newblock Prediction and entropy of printed english.
\newblock {\em Bell system technical journal}, 30(1):50--64, 1951.

\bibitem{kaplan2020scaling}
Jared Kaplan, Sam McCandlish, Tom Henighan, Tom~B Brown, Benjamin Chess, Rewon Child, Scott Gray, Alec Radford, Jeffrey Wu, and Dario Amodei.
\newblock Scaling laws for neural language models.
\newblock {\em arXiv preprint arXiv:2001.08361}, 2020.

\bibitem{jelassi2022vision}
Samy Jelassi, Michael Sander, and Yuanzhi Li.
\newblock Vision transformers provably learn spatial structure.
\newblock {\em Advances in Neural Information Processing Systems}, 35:37822--37836, 2022.

\bibitem{guo2023transformers}
Tianyu Guo, Wei Hu, Song Mei, Huan Wang, Caiming Xiong, Silvio Savarese, and Yu~Bai.
\newblock How do transformers learn in-context beyond simple functions? a case study on learning with representations.
\newblock {\em arXiv preprint arXiv:2310.10616}, 2023.

\bibitem{li2023transformers1}
Yuchen Li, Yuanzhi Li, and Andrej Risteski.
\newblock How do transformers learn topic structure: Towards a mechanistic understanding.
\newblock In {\em International Conference on Machine Learning}, pages 19689--19729. PMLR, 2023.

\bibitem{sanford2023representational}
Clayton Sanford, Daniel~J Hsu, and Matus Telgarsky.
\newblock Representational strengths and limitations of transformers.
\newblock {\em Advances in Neural Information Processing Systems}, 36:36677--36707, 2023.

\bibitem{sanford2024transformers}
Clayton Sanford, Daniel Hsu, and Matus Telgarsky.
\newblock Transformers, parallel computation, and logarithmic depth.
\newblock {\em arXiv preprint arXiv:2402.09268}, 2024.

\bibitem{mahdavi2024revisiting}
Sadegh Mahdavi, Renjie Liao, and Christos Thrampoulidis.
\newblock Revisiting the equivalence of in-context learning and gradient descent: The impact of data distribution.
\newblock In {\em IEEE International Conference on Acoustics, Speech and Signal Processing (ICASSP)}, pages 7410--7414. IEEE, 2024.

\bibitem{bu2024provably}
Dake Bu, Wei Huang, Andi Han, Atsushi Nitanda, Taiji Suzuki, Qingfu Zhang, and Hau-San Wong.
\newblock Provably transformers harness multi-concept word semantics for efficient in-context learning.
\newblock {\em Advances in Neural Information Processing Systems}, 37:63342--63405, 2024.

\bibitem{li2024dissecting}
Yingcong Li, Kartik Sreenivasan, Angeliki Giannou, Dimitris Papailiopoulos, and Samet Oymak.
\newblock Dissecting chain-of-thought: Compositionality through in-context filtering and learning.
\newblock {\em Advances in Neural Information Processing Systems}, 36, 2024.

\bibitem{huang2025transformers}
Jianhao Huang, Zixuan Wang, and Jason~D Lee.
\newblock Transformers learn to implement multi-step gradient descent with chain of thought.
\newblock {\em arXiv preprint arXiv:2502.21212}, 2025.

\bibitem{DBLP:conf/icml/GatmirySRJK24}
Khashayar Gatmiry, Nikunj Saunshi, Sashank~J. Reddi, Stefanie Jegelka, and Sanjiv Kumar.
\newblock Can looped transformers learn to implement multi-step gradient descent for in-context learning?
\newblock In {\em International Conference on Machine Learning (ICML)}, 2024.

\bibitem{elhage2021mathematical}
Nelson Elhage, Neel Nanda, Catherine Olsson, Tom Henighan, Nicholas Joseph, Ben Mann, Amanda Askell, Yuntao Bai, Anna Chen, Tom Conerly, et~al.
\newblock A mathematical framework for transformer circuits.
\newblock {\em Transformer Circuits Thread}, 1(1):12, 2021.

\bibitem{olsson2022context}
Catherine Olsson, Nelson Elhage, Neel Nanda, Nicholas Joseph, Nova DasSarma, Tom Henighan, Ben Mann, Amanda Askell, Yuntao Bai, Anna Chen, et~al.
\newblock In-context learning and induction heads.
\newblock {\em arXiv preprint arXiv:2209.11895}, 2022.

\bibitem{chorowski2016towards}
Jan Chorowski and Navdeep Jaitly.
\newblock Towards better decoding and language model integration in sequence to sequence models.
\newblock {\em arXiv preprint arXiv:1612.02695}, 2016.

\bibitem{DBLP:conf/nips/VaswaniSPUJGKP17}
Ashish Vaswani, Noam Shazeer, Niki Parmar, Jakob Uszkoreit, Llion Jones, Aidan~N. Gomez, Lukasz Kaiser, and Illia Polosukhin.
\newblock Attention is all you need.
\newblock In {\em Advances in Neural Information Processing Systems}, pages 5998--6008, 2017.

\bibitem{bennett1993bilinear}
Kristin~P Bennett and Olvi~L Mangasarian.
\newblock Bilinear separation of two sets in n-space.
\newblock {\em Computational optimization and applications}, 2(3):207--227, 1993.

\end{thebibliography}
\end{document}